\documentclass{article}
\usepackage[utf8]{inputenc}

\usepackage[margin=1in]{geometry}

\usepackage{float,graphicx,verbatim,fullpage,hyperref,amssymb,amsmath,amsthm,enumerate,multicol, xspace,xcolor,mathtools,thmtools,thm-restate,cleveref,xspace,tikz,caption,subcaption,wasysym, enumitem}
\usepackage[margin=1in]{geometry}
\usepackage{algorithm}
\usepackage[noend]{algpseudocode}
\usepackage{enumitem}
\usepackage{comment}
\definecolor{ForestGreen}{RGB}{34,139,34}

\usetikzlibrary{calc}
\usetikzlibrary{arrows.meta}
\tikzset{>={Latex[width=1.5mm,length=1.5mm]}}

\usepackage{mleftright}
\usepackage{hyperref}
    \hypersetup{colorlinks=true, linkcolor=blue, filecolor=magenta, urlcolor=blue, citecolor=red}

\usepackage{tikz}
\tikzstyle{vertex}=[circle, draw, inner sep=1pt, minimum size=15pt]
\usetikzlibrary{arrows.meta}
\tikzset{>={Latex[width=1.5mm,length=1.5mm]}}

\newfloat{procedure}{htbp}{loa}
\floatname{procedure}{Procedure}

\def\R{\mathbb{R}}
\def\E{\mathbb{E}}

\newcommand{\cE}{\mathcal{E}}

\newcommand{\cS}{\mathcal{S}}
\newcommand{\cD}{\mathcal{D}}

\def\ep{\varepsilon}

\newtheorem{theorem}{Theorem}[section]
\newtheorem{proposition}[theorem]{Proposition}

\newtheorem{lemma}[theorem]{Lemma}
\newtheorem{claim}[theorem]{Claim}

\newtheorem{example}{Example}

\theoremstyle{definition}

\usepackage{amsmath,thm-restate}

\def\E{\mathbb{E}}
\def\R{\mathbb{R}}

\newcommand{\Exp}{{\normalfont\text{Exp}}}
\newcommand{\Gum}{{\normalfont\text{Gum}}}

\newcommand{\cF}{\mathcal{F}}
\newcommand{\cI}{\mathcal{I}}

\newcommand{\cus}{\textsc{Cus}\xspace}
\newcommand{\hyb}{\textsc{Hyb}\xspace}
\newcommand{\mnl}{\textsc{Mnl}\xspace}
\newcommand{\lcmnl}{\textsc{Lc-Mnl}\xspace}
\newcommand{\pomnl}{\textsc{Pomnl}\xspace}
\newcommand{\lcpomnl}{\textsc{Lc-Pomnl}\xspace}
\newcommand{\mc}{\textsc{Mc}\xspace}
\newcommand{\cusmc}{\textsc{Cus-Mc}\xspace}
\newcommand{\hybmc}{\textsc{Hyb-Mc}\xspace}

\usepackage[table]{xcolor}
\usepackage{pgfplotstable}
\usepackage{colortbl}
\usepackage{pgfplots}
\usepackage[dvipsnames]{xcolor}

\usepackage{hyperref}
\hypersetup{colorlinks=true, linkcolor=blue, filecolor=magenta, urlcolor=blue, citecolor=blue}

\pgfplotsset{compat=1.18}

\pgfplotstableset{
    /color cells/min/.initial=0,
    /color cells/max/.initial=1.15,
    color cells/.style={
        col sep=comma,
        string type,
        postproc cell content/.code={%
                \pgfkeysalso{@cell content=\rule{0cm}{2.4ex}\cellcolor{Fuchsia!##1}\pgfmathtruncatemacro\number{##1}\ifnum\number>0\color{white}\fi##1}%
                },
        columns/x/.style={
            column name={},
            postproc cell content/.code={}
        }
    }
}

\usetikzlibrary{arrows.meta}
\tikzset{>={Latex[width=1.5mm,length=1.5mm]}}
\tikzstyle{vertex}=[circle, draw, inner sep=1pt, minimum size=14pt]

\def\final{0}  
\def\iflong{\iffalse}
\ifnum\final=0  
\newcommand{\ynote}[1]{{\color{blue}[{\small Young-San: \bf #1}]\marginpar{\color{red}*}}}
\newcommand{\gnote}[1]{{\color{red}[{\small Gerardo: \bf #1}]\marginpar{\color{red}*}}}
\newcommand{\yanote}[1]{{\color{blue}[{\small Yalcin: \bf #1}]\marginpar{\color{red}*}}}
\newcommand{\todo}[1]{{\color{red}[{ TODO: \bf #1}]\marginpar{\color{red}*}}}
\else 
\newcommand{\ynote}[1]{}
\newcommand{\gnote}[1]{}
\newcommand{\yanote}[1]{}
\newcommand{\todo}[1]{}
\fi

\DeclareMathOperator{\poly}{poly}

\usepackage{bbm}
\usepackage{natbib}

\title{Estimation, Prediction, and Assortment Optimization for Markov Chain Choice Models with Panel Data}
\author{Yalcin Akcay\thanks{Melbourne Business School, Email: \{y.akcay, g.berbeglia, y.lin\}@mbs.edu}, Gerardo Berbeglia\footnotemark[1], Young-San Lin\footnotemark[1]}
\date{\today}

\begin{document}

\maketitle

\begin{abstract}
    We propose a framework for the Markov chain (MC) choice model with panel data, including parameter estimation, personalized choice prediction, and personalized assortment optimization. In contrast to the traditional setting, which assumes that each transaction is independently drawn from a random utility model, our framework accounts for dependencies among transactions for the same customer in historical data, captured by partial-ordering preference information. To the best of our knowledge, our framework initiates the study of choice modeling with panel data under MC. As our primary result, we propose novel expectation-maximization (EM) algorithms for MC parameter estimation by incorporating partial-ordering-based customer preference information. On synthetic datasets and the sushi dataset, our EM algorithms outperform the traditional EM algorithm of {\c{S}}im{\c{s}}ek and Topaloglu (Operations Research, 66, 2018) and multinomial-logit-based partial-order benchmarks adapted from Jagabathula and Vulcano (Management Science, 64, 2018). As our secondary contribution, we present hardness and computational results for conditional choice prediction and assortment optimization problems. These results complement our estimation framework and clarify the computational landscape of conditional choice and assortment optimization, which may be of independent interest.
\end{abstract}

\section{Introduction} \label{sec:intro}





Incorporating customer choice behavior into assortment planning is a central problem in revenue management. A standard roadmap for assortment planning consists of two key steps: choice model estimation and assortment optimization. In the literature, traditional choice model estimation algorithms typically assume that transactions in historical datasets are independent, while the assortment optimization problem seeks to maximize the expected revenue aggregated over the entire customer population. A crucial feature of these traditional frameworks is their reliance on unconditional choice probabilities—that is, choice probabilities are assumed to be identical across customers and independent of customer-specific transaction histories.

Recent advances in data collection and e-commerce technologies now allow retailers to observe detailed transaction histories at the individual customer level. This information, popularly referred to as \emph{panel data}, reveals correlations among transactions made by the same customer, challenging the independence assumptions underlying classical models. This development naturally motivates the following questions:
\begin{center}
\emph{Can customer choice probabilities be estimated more precisely by incorporating individual historical transaction information into a choice model? What is the computational effort needed for this task?}
\end{center}
These questions take the choice model as given and focus on personalization at the level of choice probability estimation. Specifically, we study how to compute customer-specific choice probabilities conditional on individual transaction histories, while assuming that the unconditional choice probabilities are governed by an underlying choice model that is common to all customers.

Intuitively, transaction histories provide additional information about individual preferences, as repeated choices made by the same customer are inherently correlated. From a broader perspective, this raises a second complementary question.
\begin{center}
\emph{Can the estimation of the choice model for the entire population improve by exploiting individual historical transaction information? When is this approach more beneficial?}
\end{center}
Here, the emphasis shifts from prediction to inference, asking whether exploiting information from repeated customer choice interactions can lead to more accurate estimation of the underlying choice model.

Motivated by these questions, we develop both theoretical and experimental results on choice model estimation, conditional choice prediction, and assortment optimization with panel data. Our analysis demonstrates how individual-level transaction data can be systematically leveraged to enhance both predictive accuracy and decision-making performance in assortment planning.

\subsection{Our Results}
To the best of our knowledge, our framework initiates the study of choice modeling with panel data under the Markov chain (MC) choice model. Our results are summarized as follows.
\begin{itemize}
    \item As a motivating question, we investigate for which choice models panel data improve population-level estimation. The primary goal is to maximize the \emph{likelihood} of the given data by selecting the best parameters. Here, likelihood is defined as the probability that the full set of observations occurs independently. In the traditional setting, each observation is a transaction. In the setting with panel data, each observation is a customer-specific partial ordering described by customer-specific historical transactions. Under multinomial logit (MNL), where the choice probabilities are observed in the population limit, using panel data does not add identification power for the MNL parameters. In contrast, for MC, customer-level information can improve model estimation. This result highlights the gap between MNL and MC regarding how exploiting individual transaction information could facilitate choice model estimation, thereby motivating the use of panel data when estimating an MC.
    \item As our primary result, we present expectation-maximization (EM) algorithms that incorporate customer-level information to estimate an MC. The EM algorithm \cus assumes that each customer has a strict underlying preference, and aims to maximize the likelihood that each customer has a specific transaction set, which is equivalently captured by a partial-ordering preference \cite{jagabathula2018partial,jagabathula2022personalized}. The hybrid algorithm \hyb combines \cus and the traditional EM algorithm \cite{csimcsek2018expectation}, which provides a practical heuristic for mixing preference-consistent histories with ordinary independent transactions. These algorithms outperform the traditional EM algorithm for MC estimation  \cite{csimcsek2018expectation} and the benchmarks based on a partial-ordering-based heuristic for MNL estimation \cite{jagabathula2018partial} on synthetic datasets and the sushi dataset in a setting similar to \cite{berbeglia2022comparative}. The improvement is not only in customer-specific choice predictions but also in population choice predictions.

    \item As our secondary contribution, we present computational hardness and tractability results for conditional choice probability computation and conditional assortment optimization under MNL or MC for a specific customer. These results complement our estimation framework and clarify the computational landscape of conditional choice and assortment optimization under MNL and MC. In contrast to the previous work \cite{jagabathula2018partial,jagabathula2022personalized}, which considers partial ordering over \emph{all} products, our formulation considers partial ordering over \emph{products selected by the customer}. This formulation relies heavily on the structure of MNL or MC and could potentially reduce computational burden.
\end{itemize}

\subsection{Related Literature}

\subsubsection{EM Algorithms for Choice Model Estimation}

One of the most popular measures used for demand or choice model estimation is the likelihood metric. In some special cases, finding the optimal parameters is computationally tractable due to the concavity of the log-likelihood objective \cite{mcfadden1974conditional,van2015market}. However, the log-likelihood function is not well-structured in general. Standard optimization methods may become computationally intensive.

A common approach for addressing this computational burden is the EM algorithm \cite{dempster1977maximum}, widely used in demand censorship of substitutable products or choice model estimation \cite{vulcano2012estimating,van2017expectation,jagabathula2017nonparametric,anupindi1998estimation,talluri2004revenue,kok2007demand,conlon2013demand,stefanescu2009multivariate}. In the scope of choice model estimation, \cite{train2008algorithms} established EM algorithms for latent-class MNL and a discrete mixture of choice distributions. The latent-class MNL EM framework was later adapted to panel data settings in \cite{jagabathula2018partial,jagabathula2022personalized}. \cite{csimcsek2018expectation} presented the EM algorithm for MC, assuming that each transaction is independent. Our framework is an extension of \cite{csimcsek2018expectation}, which utilizes correlations among transactions from the same customer, thus improving the estimation of MC parameters.

\subsubsection{Choice Modeling and Assortment Planning with Panel Data}

Most assortment problems with panel data in the literature are based on the multinomial logit (MNL) model, a special case of MC. \cite{chen2023assortment} considered a multi-period assortment optimization problem under MNL, where the goal is to maximize the expected revenue over all time periods. \cite{jagabathula2018partial} initiated the study of partial-ordering-based customer preference estimation, later adapted to personalized promotion decision-making \cite{jagabathula2022personalized}. This framework estimated choice probabilities conditional on the customer's historical transactions under MNL and claimed that computing the exact probability is intractable. We present a formal proof of this statement and derive a closed-form expression to compute the exact choice probability under MC. In contrast to \cite{jagabathula2018partial,jagabathula2022personalized}, which utilize an efficient heuristic to estimate the choice probability under MNL, we focus on a more fine-grained computation under MC at a reasonable scale.

\subsubsection{Other Results for the MC Choice Model}

The MC choice model was pioneered in \cite{blanchet2016markov}, which presented the first polynomial-time algorithm for the assortment optimization problem under MC. Shortly after, \cite{feldman2017revenue} investigated the properties of the primal and dual linear programs (LP) for the assortment problem and an LP-based approximation for the network revenue management problem under MC. The capacitated assortment problem under MC is APX-hard \cite{desir2020constrained}, and there is a fully polynomial-time approximation scheme (FPTAS) when the rank of the MC transition matrix is constant \cite{desir2022capacitated}. \cite{desir2024robust} developed an efficient algorithm for the robust assortment problem under MC. The goal is to maximize the worst-case expected revenue when the MC parameters are uncertain. \cite{gupta2020parameter} presented an efficient algorithm for the identification of MC parameters under limited-size assortments and noiseless choice probabilities. \cite{li2025online} presented an online learning algorithm for the dynamic assortment selection problem under MC, which simultaneously involves online parameter estimation and sequential assortment decision-making.

\subsection{Organization}
In Section \ref{sec:pre}, we present the basic framework of choice modeling with panel data and introduce MC and MNL. In Section \ref{sec:val-cus}, we present the likelihood functions and the gap between MNL and MC regarding how exploiting individual transaction information could facilitate choice model estimation. In Section \ref{sec:em}, we present our EM algorithms for MC parameter estimation using panel data. In Section \ref{sec:ex}, we present our experimental results. We conclude in Section \ref{sec:con}. In Appendix \ref{sec:hnc}, we present hardness and computational results for conditional choice prediction and conditional assortment optimization.

\section{Preliminaries} \label{sec:pre}

In Section \ref{sec:cus}, we provide the background for random utility models (RUM) with panel data, including the basic settings, the connection between preference-based linear extensions and directed acyclic graphs (DAG), and the closed-form expression for conditional choice probability. In Section \ref{sec:scm}, we introduce specific choice models: the Markov chain (MC) and the multinomial logit (MNL). We end this section by presenting the closed-form expression for the probability of observing a specific set of transactions under MC and MNL.

In a RUM, there are $n$ substitutable products, denoted as the set $[n] = \{1, 2, ..., n\}$. The no-purchase option is denoted as product 0. A customer has a random utility $u_j \in \R$ for product $j$, following an underlying probability distribution. Depending on the RUM, the random utilities may be independent or not. Let $S \subseteq [n]$ be a subset of products offered to a customer. The no-purchase option is always available to the customer, so we use the notation $S_+ := S \cup \{0\}$. The customer purchases the product with the highest utility from $S_+$. That is, $j$ is chosen from $S_+$ if $j = \arg \max_{i \in S_+}\{u_i\}$.\footnote{We assume that the probability that multiple products have the highest utility is 0.} We use $\pi(j,S)$ to denote the probability that $j$ is chosen from $S_+$.

\subsection{Choice Modeling with Panel Data} \label{sec:cus}

Traditional studies have focused primarily on choice-probability computation, parameter estimation for a given class of RUM assuming that each transaction is independent, and assortment optimization when the RUM parameters are given. We propose variants of these problems in which each transaction is associated with a specific customer, potentially capturing correlations between transactions from the same customer.

Let $c$ denote a customer and $[C] = \{1, 2, ..., C\}$ denote the set of customers. Let $k_c$ denote the number of observed transactions associated with the customer $c$. A tuple $(j,S)$ captures a transaction in which the product $j \in S_+$ is purchased when $S$ is offered. Let $\cD^c:= \{(j^c_\ell, S^c_\ell) \mid \ell \in [k_c]\}$ be the set of observed transactions associated with the customer $c$. That is, the product $j^c_\ell$ is purchased when $S^c_\ell$ is offered in the $\ell$-th transaction of the customer $c$. In practice, $S^c_\ell$ can be the same for different labels $\ell$ and customers $c$, capturing the scenario in which different customers visit the same store during the same time period and face the same offer set. The notation $\cD^c$ will be used for different purposes in different sections. In Sections \ref{sec:val-cus}, \ref{sec:em}, and \ref{sec:ex}, $\cD^c$ is used as the collection of customer-specific transactions from different customers $c$ to estimate the parameters of MC. In Appendix \ref{sec:hnc}, we assume that the RUM parameters (more specifically, MC and MNL) are known and focus on the purchasing probability of a specific customer, conditional on the customer's transactions. In the remainder of this section, we focus on one specific customer by dropping the notation $c$.

Suppose the RUM and the historical transactions $\cD:= \{(j_\ell, S_\ell) \mid \ell \in [k]\}$ of a specific customer are given. We assume that the customer has underlying fixed utilities randomly generated according to the RUM. The customer has a strict ranking for the products in $[n]_+$. The seller only has access to $\cD$ as partial information. The transactions in $\cD$ must be consistent with a strict ranking. For example, having $S_1 = \{1,2\}$ and $j_1 = 1$ would imply $u_1 > \max\{u_0,u_2\}$, so if $S_2 = \{1\}$, we must have $j_2 = 1$. Without loss of generality, we assume that $\cD = \{(j_\ell, S_\ell) \mid \ell \in [k]\}$ with $j_\ell \ne j_{\ell'}$ for all $\ell \ne \ell'$. Otherwise, if $j_\ell = j_{\ell'}$, $(j_\ell, S_\ell)$ and $(j_{\ell'}, S_{\ell'})$ can be described as $(j_\ell, S_\ell \cup S_{\ell'})$ because $j_\ell$ is the most preferred product when $S_\ell \cup S_{\ell'}$ is offered.


We derive the probability that $j$ is chosen from $S$, conditional on the past transactions in $\cD$. Let $\cE(j,S)$ denote the event that $j$ is chosen from $S_+$. 
Let $\cE_\cD$ denote the event that $j_\ell$ is chosen from $S_\ell$ for all $\ell \in [k]$. Let $\pi_\cD$ be the probability that $\cE_\cD$ occurs. By definition, \begin{equation}\label{eq:def-cD}
    \pi_\cD = \Pr[\cE_\cD] = \Pr[\cap_{\ell \in [k]}\cE(j_\ell,S_\ell)].
\end{equation}
Let $\pi_\cD(j,S)$ denote the probability that $j$ is purchased when $S_+$ is offered, conditional on $\cE_\cD$.
By definition,
\begin{equation}\label{eq:cond-prob}
    \pi_\cD(j,S) = \frac{\Pr[\cE_\cD \cap \cE(j,S)]}{\pi_\cD}.
\end{equation}

We note that when the set of past transactions $\cD$ is empty, then $\pi_\cD = 1$ and $\pi_\cD(j,S) = \Pr[\cE(j,S)] = \pi(j,S)$. This captures the unconditional choice probability.

To compute the conditional probability $\pi_\cD(j,S)$ in \eqref{eq:cond-prob}, it suffices to calculate the probability of observing a transaction set separately for the numerator and denominator. The event $\cE_\cD \cap \cE(j,S)$ is equivalent to $\cE_{\cD'}$ where $\cD' = \cD \cup \{(j,S)\}$. That is, we add transaction $(j,S)$ and compute the probability in the numerator.

We proceed to introduce the directed acyclic graph (DAG) based presentation that is equivalent to the transaction-based presentation, the notion of complete linear extensions that captures all the possible strict rankings over products, and the conditional assortment optimization problems.


\subsubsection{A DAG-based Presentation for Customer Information} \label{subsec:dag}

An alternative presentation for partial customer information is a DAG \cite{jagabathula2018partial,jagabathula2022personalized}. A DAG $D=([n]_+,A)$ has products $j \in [n]_+$ representing the vertices and a set of arcs $A \subseteq [n]_+ \times [n]_+$ where $(j, j') \in A$ denotes that $j$ is preferred over $j'$, that is, $u_j > u_{j'}$. We note that transitive arcs can be removed for the sake of presentation. For example, suppose $(j, j')$, $(j', j'')$, and $(j, j'')$ are in $A$, then we can remove $(j, j'')$ because the other two arcs imply that $u_j > u_{j'} > u_{j''}$.

The DAG-based presentation is equivalent to the transaction-based presentation. Given a set of historical transactions $\cD = \{(j_\ell, S_\ell) \mid \ell \in [k]\}$ consistent with an underlying strict ranking, an equivalent DAG can be constructed as follows. For each $\ell \in [k]$ and $j \in {S_\ell}_+ \setminus \{j_\ell\}$, add $(j_\ell, j)$ to $A$. That is, for each transaction, connect the purchased product to each product that was not purchased to capture the customer's pairwise preference. On the other hand, given a DAG that captures the customer's preference, a set of historical transactions can be constructed as follows. With a specific ordering, let $j_\ell$ denote the $\ell$-th product that has at least one outgoing arc. For each $j_\ell$, define the offer set as $S_\ell =(\{j_\ell\} \cup \{j' \mid (j_\ell,j') \in A\}) \setminus \{0\}$.


We note that in empirical applications, customers might purchase and try different products, and update their idiosyncratic preferences from time to time. Considering the transactions over a time period may introduce cyclic preferences. In this scenario, cycle-removal heuristics can be used to find a DAG that best describes the customer's preference \cite{jagabathula2018partial,jagabathula2022personalized}.

\subsubsection{Complete Linear Extensions} \label{sec:cle}

Given a set of transactions $\cD = \{(j_\ell, S_\ell) \mid \ell \in [k]\}$ (or a DAG $D = ([n]_+, A)$), a \emph{complete linear extension} $\sigma: [n+1] \to [n]_+$ is a strict preference ranking over $[n]_+$ that is consistent with $\cD$ (or $D$).\footnote{A linear extension is also called a topological ordering.} Here, $\sigma(s)$ is the $s$-th most preferred product among $[n]_+$. For clarity, sometimes we use $(\sigma(1), \sigma(2), ..., \sigma(n+1))$ to denote $\sigma$. We use $\sigma \sim_c \cD$ to denote that the complete linear extension $\sigma$ is consistent with $\cD$.

Let $\pi_\sigma$ be the probability that the customer's strict preference ranking over $[n]_+$ is $\sigma$. Then the probability that $\cE_\cD$ occurs is
\begin{equation} \label{eq:prob-cle}
    \pi_\cD = \sum_{\sigma \sim_c \cD} \pi_\sigma.
\end{equation}

Using \eqref{eq:prob-cle} could be computationally intensive. 
In Section \ref{sec:rle}, we present a simplified form for \eqref{eq:prob-cle} under MC and MNL, to potentially reduce computation burden.

\begin{example} \label{ex:dag}
    Let $n=4$ and $\cD = \{(j_1, S_1), (j_2, S_2), (j_3, S_3)\}$ with $(j_1, S_1) = (1, \{1, 3\})$, $(j_2, S_2) = (2, \{2, 3, 4\})$, and $(j_3, S_3) = (3, \{3, 4\})$. The DAG for $\cD$ is presented in Figure \ref{fig:dag}. 
    
    Products $1$ and $2$ have higher utilities than product $3$ and no-purchase. Product $3$ has a higher utility than product $4$ and no-purchase. We do not know whether product 1 is preferred to product 2, and whether product 4 is preferred to no-purchase. The possible complete linear extensions are $(1,2,3,4,0)$, $(1,2,3,0,4)$, $(2,1,3,4,0)$, and $(2,1,3,0,4)$, representing the possible rankings for all products.
    Suppose that the RUM is such that the complete linear extensions $(1,2,3,4,0)$, $(1,2,3,0,4)$, $(2,1,3,4,0)$, and $(2,1,3,0,4)$ occur with probability $0.2$, $0.1$, $0.2$, and $0.1$, respectively. Then $\pi_\cD = 0.6$ by \eqref{eq:prob-cle}.

    \begin{figure}[h]
    \centering
    \begin{tikzpicture}
        \node[vertex] (0) at (7,2) {$0$};
        \node[vertex] (1) at (0,2.3) {$1$};
        \node[vertex] (2) at (0,1) {$2$};
        \node[vertex] (3) at (2,1.7) {$3$};
        \node[vertex] (4) at (7,1) {$4$};
        \path[->] (1) edge (0)
        (1) edge (3)
        (2) edge (0)
        (2) edge (3)
        (2) edge (4)
        (3) edge (0)
        (3) edge (4)
        ;
    \end{tikzpicture}
    \caption{The DAG for $\cD$}
    \label{fig:dag}
\end{figure} 
\end{example}

\subsubsection{Conditional Assortment Optimization}

In assortment problems, each product $j \in [n]_+$ is associated with a revenue $r_j \in \R_{\ge 0}$. In particular, the no-purchase option has zero revenue, so $r_0 = 0$. We use $r:=\{r_j\}_{j \in [n]_+}$ to denote the revenue vector. The traditional assortment optimization (TAO) problem aims to find an offer set $S \subseteq [n]$ that maximizes the expected revenue:

\begin{equation} \label{opt:tao}
\max_{S \subseteq [n]} \sum_{j \in S} \pi(j,S) r_j. \tag{TAO}
\end{equation}

In the conditional assortment optimization (CAO) problem, the seller aims to maximize the expected revenue for a specific customer, conditional on the transactions $\cD$. That is,
\begin{equation} \label{opt:cao}
    \max_{S \subseteq [n]} \sum_{j \in S} \pi_\cD(j,S) r_j. \tag{CAO}
\end{equation}
In the \emph{transparent} setting where the seller cannot hide the products presented in the past, it is required to offer a set that includes all products that were purchased. We use $S \sim_t \cD$ to denote that $S \subseteq [n]$ satisfies $j_\ell \in S_+$ for all $\ell \in [k]$. The transparent conditional assortment optimization (TCAO) problem is
\begin{equation} \label{opt:tcao}
    \max_{S \sim_t \cD} \sum_{j \in S} \pi_\cD(j,S) r_j. \tag{TCAO}
\end{equation}

\subsection{Specific Choice Models} \label{sec:scm}

In this section, we introduce the Markov chain (MC) choice model and its special case, the multinomial logit (MNL) choice model. Both models are special cases of RUMs \cite{berbeglia2016discrete}. We derive simpler closed-form probabilities in equations \eqref{eq:def-cD}, \eqref{eq:cond-prob}, and \eqref{eq:prob-cle} under these two models.

\subsubsection{Markov Chain Choice Model} \label{subsubsec:mc}

Throughout the paper, we primarily consider the Markov chain (MC) choice model. Under MC, each product in $[n]_+$ is associated with a state. Each state $i \in [n]_+$ is associated with a given parameter $\lambda_i$ which denotes the probability of starting at state $i$. Each state $i \in [n]$ has a transition probability to state $j \in [n]_+$, denoted as $\rho_{ij}$. Naturally, we have $\sum_{i=0}^n \lambda_i = 1$ and $\sum_{j=0}^n \rho_{ij} = 1$ for all $i \in [n]$. We use $\lambda := \{\lambda_j\}_{j\in[n]_+}$ to denote the initial probability vector and $\rho:=\{\rho_{ij}\}_{i \in [n], j\in [n]_+}$ to denote the transition matrix. The customer enters the system according to the initial probability $\lambda$ and performs a Markovian random walk according to $\rho$. Given an offer set $S \subseteq [n]$ and suppose the customer is at state $i$. If $i \in S_+$, then the customer selects product $i$ and leaves the system. Otherwise, if $i \notin S_+$, the customer transitions to state $j$ with probability $\rho_{ij}$. The customer performs this Markovian random walk until it reaches a state in $S_+$.

Recall that $\pi(j,S)$ denotes the probability that the product $j$ is chosen from $S_+$. Let $\overline{S}:=[n] \setminus S$ denote the complement of the offer set $S$. In \cite{blanchet2016markov,feldman2017revenue}, $\pi(j,S)$ is derived from the following system of linear equations:
\begin{equation} \label{lineq:mc}
    \pi(j,S) = \lambda_j + \sum_{i \in \overline{S}} \rho_{ij} \pi(i,S) \quad \forall j \in [n]_+.
\end{equation}
On the left-hand side, $\pi(j,S)$ is the expected number of times that a customer visits state $j$ during the course of the choice process when $S$ is offered. When a customer visits state $j$, it either enters the system directly at state $j$ or transitions from a state $\overline{S}$ to state $j$. The former case yields the term $\lambda_j$ on the right-hand side. In the latter case, the expected number of times that a customer visits a state $i \in \overline{S}$ is $\pi(i,S)$. In each of these visits, the customer transitions from state $i$ to state $j$ with probability $\rho_{ij}$, yielding the term $\sum_{i \in \overline{S}} \rho_{ij} \pi(i,S)$ on the right-hand side. When $j \in S$, since $j$ can be visited at most once, the expected number of visits to $j$ is the same as the probability that $j$ is chosen from $S$. 

From \eqref{lineq:mc}, we proceed to derive the closed-form expression for $\pi(j,S)$. For any $T \subseteq [n]_+$, let $\pi(T,S):=\{\pi(j,S)\}_{j \in T}$ denote the vector that captures the expected number of times state $j \in T$ is visited when $S$ is offered. Let $\lambda(T):=\{\lambda_j\}_{j \in T}$ denote a sub-vector of $\lambda$ with coordinates in $T$. For any $T' \subseteq [n]$ and $T \subseteq [n]_+$, let $\rho(T',T):=\{\rho_{ij}\}_{i \in T', j \in T}$ denote the sub-matrix of $\rho$ that captures a transition from states in $T'$ to states in $T$. From \eqref{lineq:mc}, we observe that $\pi(\overline{S},S) = \lambda(\overline{S}) + \rho(\overline{S}, \overline{S})^T \pi(\overline{S},S)$, which implies that
\begin{equation} \label{eq:pi-bar-S}
    \pi(\overline{S},S) = \left((I-\rho(\overline{S}, \overline{S}))^T\right)^{-1} \lambda(\overline{S}) = \left((I-\rho(\overline{S}, \overline{S}))^{-1}\right)^T \lambda(\overline{S}).
\end{equation}
To ensure that $(I-\rho(\overline{S}, \overline{S}))^{-1}$ is unique and always exists, we assume that $\sum_{i \in \overline{S}} \rho_{ij} < 1$ for all $S \subseteq [n]$ \cite{puterman2014markov}.\footnote{This assumption is equivalent to having the spectral radius of $\rho([n],[n])$ strictly less than one.} From \eqref{lineq:mc} and \eqref{eq:pi-bar-S}, we have that 
\begin{align}
    \pi(S_+,S) &= \lambda(S_+) + \rho(\overline{S}, S_+)^T \pi(\overline{S},S) \nonumber\\
    &= \lambda(S_+) + \rho(\overline{S}, S_+)^T \left((I-\rho(\overline{S}, \overline{S}))^{-1}\right)^T \lambda(\overline{S}) \label{eq:pi-S}.
\end{align}
We interpret the right-hand side of \eqref{eq:pi-S} as follows. When a customer enters the system, it either directly enters a state in $S_+$, or starts from a state in $\overline{S}$, and transitions $q$ additional times in $\overline{S}$ for $q \ge 0$ before reaching a state in $S_+$. The former case is captured by the probability vector $\lambda(S_+)$ while the latter case is captured by $\rho(\overline{S}, S_+)^T \left(\sum_{q=0}^\infty \rho(\overline{S}, \overline{S})^q\right)^T \lambda(\overline{S}) = \rho(\overline{S}, S_+)^T \left((I-\rho(\overline{S}, \overline{S}))^{-1}\right)^T \lambda(\overline{S})$.

Under MC, the following linear program (LP) introduced in \cite{feldman2017revenue} solves \eqref{opt:tao} in polynomial time.
\begin{equation} \label{lp:mc-tao}
    \min_g \sum_{i \in [n]} \lambda_i g_i \quad \text{s.t.} \quad g_i \ge r_i \text{ and } g_i \ge \sum_{j \in [n]} \rho_{ij} g_j \quad \forall i \in [n].
\end{equation}
Here, the variable $g_i$ denotes the maximum expected revenue obtained while visiting state $i$. The objective $\min_g \sum_{i \in [n]} \lambda_i g_i$ represents the expected revenue and forces $g_i = \max\{r_i, \sum_{j \in [n]} \rho_{ij} g_j\}$. If $g_i = r_i$, then offering product $i$ maximizes the expected revenue while visiting state $i$. Otherwise, if $g_i > r_i$, then not offering product $i$ forces the customer to transition to another state $j$, and $g_i = \sum_{j \in [n]} \rho_{ij} g_j$ maximizes the expected revenue while visiting state $i$. Let $g^*$ be the optimal solution to \eqref{lp:mc-tao}, then offering $\{i \mid g^*_i = r_i\}$ maximizes the expected revenue.

\subsubsection{Multinomial Logit Choice Model}
The Multinomial Logit (MNL) model is by far the most popular in practice \cite{bradley1952rank,luce1959individual,mcfadden1974conditional,plackett1975analysis}.
Under MNL, the utility of product $j$ takes the form $u_j = \mu_j + \varepsilon_j$ where $\mu_j$ is the mean of $u_j$ that is fixed and $\varepsilon_j$ is a standard Gumbel random variable. The Gumbel shocks $\ep_j$
 are independent, so the utilities $u_j$ are independent. Let $v_j:=\exp(\mu_j) > 0$ be the attraction value of product $j$. Then the choice probability 
\begin{equation} \label{eq:mnl-prob}
\pi(j,S) = \frac{v_j}{\sum_{i \in S_+} v_i} = \frac{v_j}{V(S_+)}
\end{equation}
where $V(T):=\sum_{j \in T}v_j$ for ease of notation.
MNL can be represented by MC, where the transition matrix $\rho$ is rank one. More specifically, an MNL with parameter vector $v:=\{v_j\}_{j \in [n]_+}$ can be captured by an MC with parameters $\lambda_j = v_j / V([n]_+)$ for all $j \in [n]_+$ and $\rho_{ij} = v_j / V([n]_+)$ for all $i \in [n], j \in [n]_+$ \cite{blanchet2016markov}.

Under MNL, \eqref{opt:tao} can be solved in polynomial time by considering revenue-ordered assortments \cite{gallego2004managing,talluri2004revenue}. An assortment $S \subseteq [n]$ is revenue-ordered if $S$ includes the top $s$ revenue products for some $s \in [n]$.

\subsubsection{Closed-form for the Probability of a Set of Transactions under MC or MNL} \label{sec:rle}
Given a set of transactions $\cD = \{(j_\ell, S_\ell) \mid \ell \in [k]\}$, we derive a closed-form expression for $\pi_\cD$ under MC or MNL. Section \ref{sec:cle} equation \eqref{eq:prob-cle} takes the summation of the probabilities of having each complete linear extension. However, this could cause computational burden. Under MC or MNL, an alternative is to consider \emph{reduced linear extensions} represented as rankings over the subscript indices of the chosen products $\{j_\ell \mid \ell \in [k]\}$. Recall that the customer has an underlying strict preference, and we assume that $j_\ell \ne j_{\ell'}$ for all $\ell \ne \ell'$. For any $\ell \ne \ell'$ where $(j_\ell, S_\ell), (j_{\ell'}, S_{\ell'}) \in \cD$, if $j_{\ell'} \in S_{\ell}$, then $j_\ell$ is more preferred than $j_{\ell'}$, so we have $\ell \succ \ell'$. Considering all pairs $\ell,\ell'$ and transitivity, this defines the partial ordering for $[k]$ based on $\cD$.
A reduced linear extension $\sigma: [k] \to [k]$ is a strict preference ranking over $[k]$ that is consistent with $\cD$. Here, $j_{\sigma(s)}$ is the $s$-th most preferred product among $\{j_\ell \mid \ell \in [k]\}$. We note that if $j_{\ell'} = 0$, then $\sigma(k) = \ell'$, i.e., the no-purchase option is the least preferred among $\{j_\ell \mid \ell \in [k]\}$. Since $0 \in {S_\ell}_+$ for all $\ell \in [k]$, we do not know the ranking over products less preferred than the no-purchase option. For clarity, sometimes we use $(\sigma(1), \sigma(2), ..., \sigma(k))$ to denote $\sigma$. We use $\sigma \sim_r \cD$ to denote that the reduced linear extension $\sigma$ is consistent with $\cD$. 

The partial ordering $\succ$ for $[k]$ can be presented as a \emph{reduced DAG}, which has $[k]$ as the set of vertices and arcs that represent the partial ordering $\succ$. That is, $\ell \succ \ell'$ if and only if there is an $\ell \leadsto \ell'$ path in the reduced DAG, capturing the case that $j_\ell$ is more preferred than $j_{\ell'}$.

\paragraph{\bf Markov Chain.} Under MC, the ranking for the chosen products $\{j_\ell \mid \ell \in [k]\}$ is decided by the customer's random walk, i.e., the course of product selection. The ordering for the first appearance of each product in the random walk defines the customer's preference. In other words, if the first visit of state $i$ appears before the first visit of state $j$ in the random walk, then product $i$ is more preferred than product $j$.

Following this observation, any random walk that is consistent with $\cD$ must be in the form of a reduced linear extension $\sigma \sim_r \cD$. Suppose that the random walk is in the form of $\sigma$. Let $U:=\cup_{\ell \in [k]} S_\ell$. Before the first visit of $j_{\sigma(1)}$, it is impossible that the random walk has visited a state in $U \setminus \{j_{\sigma(1)}\}$. Otherwise, there exists a state $j \in \{j_\ell \mid \ell \in [k]\} \setminus \{j_{\sigma(1)}\}$ that is more preferred than $j_{\sigma(1)}$, violating the form of $\sigma$. A random walk that is in the form of $\sigma$, up to the first visit of $j_{\sigma(1)}$, has the following prefix form: the customer either directly enters $j_{\sigma(1)}$ or enters a state in $\overline{U}$, traverses states in $\overline{U}$, and transitions from a state in $\overline{U}$ to $j_{\sigma(1)}$. The probability of having this prefix random walk is $\pi(j_{\sigma(1)}, U)$, the probability that $j_{\sigma(1)}$ is chosen from $U$.

The random walk then proceeds from the first visit of $j_{\sigma(1)}$ to the first visit of $j_{\sigma(2)}$, through states in $\overline{\cup_{\ell=2}^{k} S_{\sigma(\ell)}} = \cap_{\ell=2}^{k} \overline{S_{\sigma(\ell)}}$. For convenience, let $U^\sigma_s:=\cup_{\ell=s}^{k} S_{\sigma(\ell)}$ be the \emph{union of the absorbing sets} with subscript indices from $\sigma(s)$ to $\sigma(k)$ and $A^\sigma_s:=\overline{U^\sigma_s}=\overline{\cup_{\ell=s}^{k} S_{\sigma(\ell)}} = \cap_{\ell=s}^{k} \overline{S_{\sigma(\ell)}}$ be the \emph{admissible set} from the first visit of $\sigma(s-1)$ to the first visit of $\sigma(s)$ for $s \in \{2, 3, ..., k\}$. We denote the event of having a random walk from state $i$ to state $j$ through states in $T$ as $i \overset{T}{\leadsto} j$. Following the same reasoning for \eqref{eq:pi-S}, we have that
\begin{equation} \label{prob:i-to-j}
    \Pr[i {\overset{T}{\leadsto}} j]:=\rho_{ij} + \rho(T,\{j\})^T \left(\left(I-\rho(T,T)\right)^{-1}\right)^T\rho(\{i\},T)^T 
    = \rho_{ij} + \rho(\{i\},T) \left(I-\rho(T,T)\right)^{-1}\rho(T,\{j\}).
\end{equation}
The random walk from the first visit of $j_{\sigma(1)}$ to the first visit of $j_{\sigma(2)}$, through states in $A^\sigma_2$, occurs with probability $\Pr[j_{\sigma(1)} {\overset{A^\sigma_2}{\leadsto}} j_{\sigma(2)}]$ by \eqref{prob:i-to-j}. Using analogous arguments, the random walk from the first visit of $j_{\sigma(s)}$ to the first visit of $j_{\sigma(s+1)}$ for each $s \in [k-1]$, through states in $A^\sigma_{s+1}$, occurs with probability $\Pr[j_{\sigma(s)} {\overset{A^\sigma_{s+1}}{\leadsto}} j_{\sigma(s+1)}]$. When the customer's random walk is in the form of $\sigma$, the likelihood factors over the prefix segment and the subsequent inter-arrival segments. We have
\begin{equation} \label{eq:prob-rle-mc}
    \pi^{\cD}_\sigma = \pi(j_{\sigma(1)}, U)\prod_{s = 1}^{k-1}\Pr[j_{\sigma(s)} {\overset{A^\sigma_{s+1}}{\leadsto}} j_{\sigma(s+1)}]
\end{equation}
where $\pi^{\cD}_\sigma$ is the probability of having a random walk consistent with the reduced linear extension $\sigma$ given the transaction set $\cD$. The probability that $\cE_\cD$ occurs is
\begin{equation} \label{eq:prob-rle}
    \pi_\cD = \sum_{\sigma \sim_r \cD} \pi^\cD_\sigma.
\end{equation}
We note that \eqref{eq:prob-cle} accounts for the probability $\pi_\sigma$ of having a random walk consistent with a complete linear extension $\sigma$, which depends on the RUM and is independent of $\cD$. In \eqref{eq:prob-rle}, the probability $\pi^\cD_\sigma$ of having a random walk consistent with a reduced linear extension $\sigma$ depends on $\cD$.

\paragraph{\bf Multinomial Logit.}

Under MNL, when the customer preference is consistent with a reduced linear extension $\sigma \sim_r \cD$, $\sigma$ must be in the following form. First, products ranked before $j_{\sigma(1)}$ must belong to $\overline{U}$, so $j_{\sigma(1)}$ is the most preferred among $U_+$. After $j_{\sigma(1)}$, products ranked before $j_{\sigma(2)}$ must belong to $A^\sigma_2$, so $j_{\sigma(2)}$ is the most preferred among ${U^\sigma_2}_+$. Repeating the same argument, for any $s \in [k]$, $j_{\sigma(s)}$ is the most preferred among ${U^\sigma_s}_+$. According to \cite{beggs1981assessing},
\begin{equation} \label{eq:prob-rle-mnl}
    \pi^{\cD}_\sigma = \prod_{s=1}^k\pi(j_{\sigma(s)}, {U^\sigma_s}) = \prod_{s=1}^k \frac{v_{j_{\sigma(s)}}}{V\left({{U^\sigma_s}}_+\right)}.
\end{equation}

\begin{example} \label{ex:dag-rle}
    Recall that in Example \ref{ex:dag}, $n=4$ and $\cD = \{(j_1, S_1), (j_2, S_2), (j_3, S_3)\}$ with $(j_1, S_1) = (1, \{1, 3\})$, $(j_2, S_2) = (2, \{2, 3, 4\})$, and $(j_3, S_3) = (3, \{3, 4\})$. The partial ordering is defined by $1 \succ 3$ and $2 \succ 3$, so both $j_1$ and $j_2$ are more preferred than $j_3$. The possible reduced linear extensions $\sigma \sim_r \cD$ are $(1,2,3)$ and $(2,1,3)$. The reduced DAG is presented in Figure \ref{fig:r-dag}.
    Under MC, following \eqref{eq:prob-rle-mc}, we have that
    \begin{align*}
        \pi^{\cD}_{(1,2,3)} = \pi(1, [4]) \Pr[1 {\overset{\{1\}}{\leadsto}} 2] \Pr[2 {\overset{\{1,2\}}{\leadsto}} 3]
        \text{ and } \pi^{\cD}_{(2,1,3)} = \pi(2, [4]) \Pr[2 {\overset{\{2\}}{\leadsto}} 1] \Pr[1 {\overset{\{1,2\}}{\leadsto}} 3].
    \end{align*}

    Under MNL, following \eqref{eq:prob-rle-mnl}, we have that
    \begin{align*}
        \pi^{\cD}_{(1,2,3)} = \pi(1, [4]) \pi(2, \{2,3,4\}) \pi(3,\{3,4\})
        \text{ and } \pi^{\cD}_{(2,1,3)} = \pi(2, [4]) \pi(1, \{1,3,4\}) \pi(3,\{3,4\}).
    \end{align*}
    \begin{figure}[h]
    \centering
    \begin{tikzpicture}
        \node[vertex] (1) at (0,0) {$1$};
        \node[vertex] (2) at (6,0) {$2$};
        \node[vertex] (3) at (3,0) {$3$};
        \path[->]
        (1) edge (3)
        (2) edge (3)
        ;
    \end{tikzpicture}
    \caption{The reduced DAG for $\cD$}
    \label{fig:r-dag}
\end{figure} 
\end{example}

\section{The Value of Panel Information} \label{sec:val-cus}

In this section, we investigate the value of using panel data while estimating a choice model. In Section \ref{sec:llh}, we present the log-likelihood functions, which measure the performance of choice model parameter estimation. In Section \ref{sec:gap}, we provide an example that highlights how exploiting individual transaction information can improve MC parameter estimation. In Section \ref{sec:pp}, we show that the MNL parameters can be identified without linking the transactions to specific customers when the choice probability vectors of a few assortments are given. Sections \ref{sec:gap} and \ref{sec:pp} highlight the gap between MNL and MC in how exploiting individual transaction information can facilitate choice model estimation.

\subsection{The Log-likelihood Objectives} \label{sec:llh}

Suppose we have a set of observable transactions. Each transaction is associated with a customer identity (ID), the offer set, and the product chosen from the offer set. In the traditional setting, the customer ID is not revealed, and transactions are treated as independent observational units. In the setting with panel data, the customer ID of each transaction is revealed. Customers are treated as independent observational units, while transactions from the same customer jointly induce a partial order.

More specifically, in the traditional setting, we are given a set of transactions $\cI = \{(j_r, S_r) \mid r \in [I]\}$ without the customer IDs. Here, $\cI$ has $I$ transactions $(j_r, S_r)$ denoting that $j_r$ is chosen from the offer set $S_r$ in transaction $r \in [I]$. Each transaction in $\cI$ is assumed to be independent. The likelihood of having $\cI$ is $\prod_{r=1}^I \pi(j_r,S_r)$, and the log-likelihood function is
\begin{equation} \label{eq:log-like-trad}
    L^{trad}(.) = \sum_{r=1}^I \log \pi(j_r,S_r)
\end{equation}
where $L^{trad}$ and $\pi(j_r,S_r)$ depend on the choice model parameters to be selected. A choice model estimation algorithm aims to maximize the log-likelihood objective \eqref{eq:log-like-trad} by selecting the best parameters. The log-likelihood objective \eqref{eq:log-like-trad} for MNL is concave. For MC, the structure of \eqref{eq:log-like-trad} is more complicated and thus requires a more sophisticated algorithm like EM \cite{csimcsek2018expectation} to find the parameters that achieve a local maximum.

In the setting with panel data, the customer ID of each transaction is revealed, so each customer is associated with a set of transactions. Recall that $[C]$ is the set of customers, and each customer $c \in [C]$ has a set of transactions $\cD^c:=\{(j^c_\ell, S^c_\ell) \mid \ell \in [k_c]\}$ where $k_c$ is the number of transactions in $\cD^c$. Without loss of generality, we assume that $j^c_\ell \ne j^c_{\ell'}$ for all $\ell \ne \ell'$. Each set of transactions $\cD^c$ is consistent with at least one strict preference list for $[n]_+$, that is, there are no cycles induced by $\cD^c$, and a reduced DAG for $\cD^c$ can be constructed. It is assumed that the underlying strict preference of each customer is independently drawn. Recall that $\cE_{\cD^c}$ is the event that customer $c$ selects $j^c_\ell$ from ${S^c_\ell}_+$ for all $\ell \in [k_c]$ and $\pi_{\cD^c}:=\Pr[\cE_{\cD^c}]$. The likelihood of having $\{\cD^c\}_{c \in [C]}$ is $\prod_{c=1}^C \pi_{\cD^c}$, and the log-likelihood function is
\begin{equation} \label{eq:log-like-cus}
    L^{cus}(.) = \sum_{c=1}^C \log \pi_{\cD^c}
\end{equation}
where $L^{cus}$ and $\pi_{\cD^c}$ depend on the choice model parameters to be selected. A choice model estimation algorithm aims to maximize the log-likelihood objective \eqref{eq:log-like-cus} by selecting the best parameters. Unlike the traditional setting, for both MNL and MC, the objective \eqref{eq:log-like-cus} is not concave, and computing $\pi_{\cD^c}$ is \#P-hard given the choice model parameters, as shown in Theorem \ref{thm:sp} in the appendix. For computational efficiency, an approximation for \eqref{eq:log-like-cus} under MNL, which is a concave function, is used as a heuristic in \cite{jagabathula2018partial}.

\subsection{Improving MC Parameter Estimation: An Example} \label{sec:gap}

In this section, we present an example showing that using panel data for choice model estimation could be beneficial for MC.

Parameter identification in MC was studied in \cite{gupta2020parameter}. For any $r \in \{2,3, ..., n-1\}$, when the choice probability vectors $\pi(S_+,S)$ for all $S$ of size $r$ or $r+1$ are given as input, the MC parameters can be perfectly identified. We present an example where the set of assortments is limited, so that parameter identification becomes challenging in the traditional setting, but linking transactions from the same customer could provide extra information to facilitate MC parameter estimation. 

Let the parameter of the ground truth MC (expressed with superscript $gt$) with $n=2$ be as follows: $\lambda^{gt}_1=0.6$, $\lambda^{gt}_2 = 0.4$, $\rho^{gt}_{10}=1$, $\rho^{gt}_{20}=1$, and all other parameters are zero. The goal is to estimate the MC parameters $(\lambda, \rho)$ that maximize \eqref{eq:log-like-trad} or \eqref{eq:log-like-cus}. Without loss of generality, we assume that there are no self-cycles in the choice process, i.e., $\rho_{11} = \rho_{22} = 0$. Therefore, it is sufficient to estimate four parameters: $\lambda_1, \lambda_2, \rho_{12}$, and $\rho_{21}$, so that $\lambda_0 = 1-\lambda_1-\lambda_2$, $\rho_{10} = 1-\rho_{12}$, and $\rho_{20} = 1 - \rho_{21}$. The illustration for the ground truth MC and the MC parameters to be determined is presented in Figure \ref{fig:mc}.

    \begin{figure}[h]
    \centering
    \begin{tikzpicture}
        \node[vertex] (1) at (0,0) {$1$};
        \node[vertex] (2) at (6,0) {$2$};
        \node[vertex] (3) at (3,0) {$0$};
        \path[->]
        (1) edge node [below] {$\rho^{gt}_{10}$=1} (3)
        (2) edge node [below] {$\rho^{gt}_{20}$=1}  (3)
        ;
        \node at (0,0.5) {$\lambda^{gt}_1 = 0.6$};
        \node at (6,0.5) {$\lambda^{gt}_2 = 0.4$};
        \node at (10,0.5) {$\lambda = [1-\lambda_1-\lambda_2, \lambda_1, \lambda_2]^T$};
        \node at (10,-0.3) {$\rho = \begin{bmatrix}
            1-\rho_{12} & 0 & \rho_{12} \\
            1-\rho_{21} & \rho_{21} & 0
        \end{bmatrix}$};
    \end{tikzpicture}
    \caption{The ground truth MC and the parameters to be determined.}
    \label{fig:mc}
\end{figure} 

From Figure \ref{fig:mc}, there are two types of customers with the following strict preferences: $1 \succ 0 \succ 2$ and $2 \succ 0 \succ 1$, which occur with probability 0.6 and 0.4, respectively. Each customer, randomly and independently drawn from the ground truth MC, is offered $\{1\}$ and $\{2\}$ once each. With $C$ randomly drawn customers, we have $I=2C$ transactions. The customer choice (selected product) from the combination of each customer type and each assortment is presented in Table \ref{table:choice}.

    \begin{table}[!htb]
\begin{center}
    \begin{tabular}{|c|c|c|}
        \hline
        customer type & $\{1\}$ & $\{2\}$ \\
        \hline
        $1 \succ 0 \succ 2$ (60\%) & 1 & 0 \\
        \hline
        $2 \succ 0 \succ 1$ (40\%)& 0 & 2 \\
        \hline
    \end{tabular}
\end{center}
\caption{Customer choice of each combination.}
\label{table:choice}
\end{table}
    
    In the traditional setting, when the number of customers $C$ becomes large, the observed choice probabilities converge to $\tilde\pi(0,\{1\})=0.4$, $\tilde\pi(1,\{1\})=0.6$, $\tilde\pi(0,\{2\})=0.6$, and $\tilde\pi(2,\{2\})=0.4$. Here, $\tilde\pi(j,S)$ is the choice probability that $j$ is chosen from $S_+$. Different from $\pi(j,S)$, which is the estimated choice probability that depends on the MC parameters, $\tilde\pi(j,S)$ is the given input. From \eqref{eq:log-like-trad}, we aim to maximize
    \begin{align}
        & \tilde\pi(1,\{1\}) \log \pi(1,\{1\}) + \tilde\pi(0,\{2\}) \log \pi(0,\{2\}) + \tilde\pi(0,\{1\}) \log \pi(0,\{1\}) + \tilde\pi(2,\{2\}) \log \pi(2,\{2\}) \nonumber \\
        = \quad & 0.6 \log(\lambda_1 + \lambda_2 \rho_{21}) + 0.6 \log(\lambda_0 + \lambda_1 \rho_{10}) + 0.4 \log(\lambda_0 + \lambda_2 \rho_{20}) + 0.4 \log(\lambda_2 + \lambda_1 \rho_{12}) \nonumber \\
        = \quad & 0.6 \log(\lambda_1 + \lambda_2 \rho_{21}) + 0.6 \log((1-\lambda_1-\lambda_2) + \lambda_1 (1-\rho_{12})) \nonumber \\
        & + 0.4 \log((1-\lambda_1-\lambda_2) + \lambda_2 (1-\rho_{21})) + 0.4 \log(\lambda_2 + \lambda_1 \rho_{12}) \label{eq:ex-trad-llh}    \end{align}
        subject to $\lambda_1+\lambda_2 \le 1$, and $\lambda_1, \lambda_2, \rho_{12}, \rho_{21} \in [0,1]$. Here, the factor $C$ is omitted, and we focus on the proportion of the choice outcomes. We have that $\pi(1,\{1\}) = \lambda_1 + \lambda_2 \rho_{21}$. When $\rho_{11} = \rho_{22} = 0$, product 1 is chosen from $\{0,1\}$ because either the customer starts from state 1 and ends the choice process or starts from state 2, transitions to state 1, then ends the choice process. The reasoning for $\pi(0,\{1\})$, $\pi(0,\{2\})$, and $\pi(2,\{2\})$ follows analogously.

        One can verify that \eqref{eq:ex-trad-llh} is maximized when
    \begin{align*}
        \pi(1,\{1\}) &= \lambda_1 + \lambda_2 \rho_{21} = 0.6, \\
        \pi(0,\{1\}) &= (1-\lambda_1-\lambda_2) + \lambda_2 (1-\rho_{21}) = 0.4, \\
        \pi(2,\{2\}) &= \lambda_2 + \lambda_1 \rho_{12} = 0.4, \text{and} \\
        \pi(0,\{2\}) &= (1-\lambda_1-\lambda_2) + \lambda_1 (1-\rho_{12}) = 0.6.
    \end{align*}
    The optimal solutions are parameterized by $\lambda_1$ and $\lambda_2$:
    \begin{align}
        \rho_{12} = \frac{0.4-\lambda_2}{\lambda_1} < 1 \text{ and }
        \rho_{21} = \frac{0.6-\lambda_1}{\lambda_2} < 1,
        \text{where } \lambda_1 \in (0,0.6],
        \lambda_2 \in (0,0.4],& \text{ and }        \lambda_1 + \lambda_2 \in [0.6,1]. \label{eq:cf}
    \end{align}
    The feasible region for $\lambda_1$ and $\lambda_2$ is a 2D polytope, which does not perfectly recover the MC parameters. Note that $\rho^{gt}_{12} = \rho^{gt}_{21} = 0$ while $\rho_{12}$ and $\rho_{21}$ can be positive. The limited number of assortments and the limited information from the choice probability profile $\tilde{\pi}$ do not capture the following scenarios: no customers will transition from state 1 to state 2 when product 1 is not offered, and no customers will transition from state 2 to state 1 when product 2 is not offered.

    In the setting with panel data, we observe that each customer of type $1 \succ 0 \succ 2$ selects product $1$ from $\{0,1\}$ and product $0$ from $\{0,2\}$, so the strict preference $1 \succ 0 \succ 2$ is fully observed. Similarly, if the customer is of type $2 \succ 0 \succ 1$, the strict preference is also fully observed. When the number of customer samples $C$ becomes large, we observe that there are 60 percent of type $1 \succ 0 \succ 2$ customers and 40 percent of type $2 \succ 0 \succ 1$ customers. From \eqref{eq:prob-rle-mc} and \eqref{eq:log-like-cus}, we aim to maximize
    \begin{align}
        & 0.6 \log(\pi(1,\{1,2\})\Pr[1 {\overset{\{1\}}{\leadsto}} 0]) + 0.4 \log(\pi(2,\{1,2\})\Pr[2 {\overset{\{2\}}{\leadsto}} 0]) \nonumber \\
        = \quad & 0.6 \log (\lambda_1 \rho_{10}) + 0.4 \log (\lambda_2 \rho_{20}) = 0.6 \log (\lambda_1 (1-\rho_{12})) + 0.4 \log (\lambda_2 (1-\rho_{21})) \label{eq:ex-cus-llh}
    \end{align}
    subject to $\lambda_1+\lambda_2 \le 1$, and $\lambda_1, \lambda_2, \rho_{12}, \rho_{21} \in [0,1]$. When $\rho_{11}=\rho_{22}=0$, $\pi(1,\{1,2\})\Pr[1 {\overset{\{1\}}{\leadsto}} 0] = \lambda_1 \rho_{10}$ is the probability of having the strict preference $1 \succ 0 \succ 2$ and $\pi(2,\{1,2\})\Pr[2 {\overset{\{2\}}{\leadsto}} 0] = \lambda_2 \rho_{20}$ is the probability of having the strict preference $2 \succ 0 \succ 1$. 
    The optimal solution is $\rho_{12}=\rho_{21}=0$, $\lambda_1=0.6$, and $\lambda_2=0.4$, which recovers the ground truth MC parameters. Note that this solution is also optimal for \eqref{eq:ex-trad-llh}.

    One might wonder how much the objective \eqref{eq:ex-cus-llh} would deteriorate while using an optimal solution to \eqref{eq:ex-trad-llh}. Plugging \eqref{eq:cf} into \eqref{eq:ex-cus-llh}, we have the log-likelihood 
    \begin{align}
        0.6 \log (\lambda_1 (1-\rho_{12})) + 0.4 \log (\lambda_2 (1-\rho_{21}))
        = 0.6 \log (\lambda_1 + \lambda_2 - 0.4) + 0.4 \log (\lambda_1 + \lambda_2 - 0.6). \label{eq:ex-cus-plug}
    \end{align}
    When $\lambda_1 + \lambda_2 \to 0.6^+$, \eqref{eq:ex-cus-plug} reaches $-\infty$. The customer either starts at state 0 with probability 0.4 and ends the choice process, or starts at state 1 (or 2) and transitions from state 1 to state 2 (respectively, state 2 to state 1) with high probability. The estimated MC parameters deviate dramatically from the ground truth.

\subsection{Parameter Identification for MNL} \label{sec:pp}

In this section, we show that the MNL parameters can be identified in the traditional setting when the choice probability vectors of a few assortments are given.

We assume that the choice probability follows an underlying MNL. Suppose a collection of offer sets $\cS = \{S_i \subseteq [n] \mid i \in [m]\}$ is given for some $m > 0$. To ensure that we have information about each product, we assume that $\cS$ is \emph{exhaustive}. That is, all products in $[n]$ are offered, i.e., $\cup_{S \in \cS} S = [n]$. For each $i \in [m]$, the offer set $S_i$ is offered with a given probability $p_i > 0$.

In the traditional setting, the choice probability vectors $\{\tilde\pi(S_+,S)\}_{S \in \cS}$, denoting the probability that $j \in S_+$ is chosen from $S_+$ for all $j \in S_+$, is the given input. Suppose we sample $I$ random transactions independently, where $S_i$ is offered with probability $p_i$, and $j$ is chosen with probability $\tilde\pi(j,S_i)$. When $I$ approaches infinity, the proportion that $S_i$ is offered converges to $p_i$. When $S_i$ is offered, the proportion that $j$ is chosen from ${S_i}_+$ converges to $\tilde\pi(j,S_i)$. Building on \eqref{eq:mnl-prob} and \eqref{eq:log-like-trad}, we aim to solve
\begin{equation} \label{eq:log-like-trad-est}
    \max \sum_{i=1}^m \sum_{j \in {S_i}_+} p_i\tilde\pi(j,S_i) \log (p_i\pi(j,S_i)) = \max_v \sum_{i=1}^m \sum_{j \in {S_i}_+} p_i\tilde\pi(j,S_i) (\log p_i + \log v_j - \log V({S_i}_+))
\end{equation}
by selecting the best MNL parameters. Here, $p_i$ and $\tilde\pi(j,S_i)$ are the input, and the estimated probability $\pi(j,S)$ depends on the MNL parameters $v$ to be selected.

Without loss of generality, we set $v_0=1$. For each $i \in [m]$ , we solve the system of linear equations
\begin{equation} \label{lin-sys:mnl}
    \tilde\pi(j,S_i) (1 + \sum_{\ell \in S_i} v_\ell) = v_j \quad \forall j \in S_i
\end{equation}
to get the value of $v_j$ so that the estimated probability matches $\tilde{\pi}(j,S_i)$. \eqref{lin-sys:mnl} has a unique solution $v_j = \tilde\pi(j,S_i)/\tilde\pi(0,S_i)$ for all $j \in S_i$. We note that if $j \in S_i$ and $j \in S_{i'}$ for $i \ne i'$, then $v_j$ is the solution to both the systems of linear equations of $S_i$ and $S_{i'}$ since the choice probability vectors are consistent with the same underlying MNL. Because $\cS$ is exhaustive, we obtain the value of $v_j$ for all $j \in [n]$. Combining the system of linear equations \eqref{lin-sys:mnl} for all $i \in [m]$, we have the following grand system of linear equations
\begin{equation} \label{lin-sys:mnl-grand}
    \tilde\pi(j,S_i) (1 + \sum_{\ell \in S_i} v_\ell) = v_j \quad \forall j \in S_i, \forall i \in [m].
\end{equation}
We show that the solution to \eqref{lin-sys:mnl-grand} with $v_0=1$ is optimal for \eqref{eq:log-like-trad-est}.

\begin{proposition} \label{lem:mnl-id}
    Given an exhaustive $\cS = \{S_i \subseteq [n] \mid i \in [m]\}$ with $p_i > 0$ for all $i \in [m]$ and choice probability vectors that are consistent with the underlying MNL, the solution to \eqref{lin-sys:mnl-grand} is the unique optimal solution to \eqref{eq:log-like-trad-est} when $v_0$ is fixed to 1. The unique optimal solution does not depend on the $p_i$'s.
\end{proposition}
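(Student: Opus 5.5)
Since $v_0=1$ is fixed, I will write the objective of \eqref{eq:log-like-trad-est} as a function of $v=(v_1,\dots,v_n)\in\R^n_{>0}$ alone. The constant terms $p_i\tilde\pi(j,S_i)\log p_i$ do not affect the maximizer and can be dropped.

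The approach is to reparametrize by $x_j=\log v_j$, with $x_0=0$. The objective then becomes
\begin{equation*}
F(x)=\sum_{i=1}^m p_i\Bigl(\sum_{j\in S_i}\tilde\pi(j,S_i)\,x_j-\log\bigl(1+\textstyle\sum_{\ell\in S_i}e^{x_\ell}\bigr)\Bigr),
\end{equation*}
where I have used $\sum_{j\in {S_i}_+}\tilde\pi(j,S_i)=1$. Each summand is a linear function minus a log-sum-exp, so it is concave, and therefore $F$ is concave on $\R^n$. This makes every stationary point a global maximizer.

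Next I will compute the gradient. For $j\in[n]$,
\begin{equation*}
\partial F/\partial x_j=\sum_{i:\,j\in S_i}p_i\bigl(\tilde\pi(j,S_i)-v_j/V({S_i}_+)\bigr).
\end{equation*}
Let $v^*$ be the solution of \eqref{lin-sys:mnl-grand}. It exists because the data are consistent with an underlying MNL; by the discussion before the proposition it is $v^*_j=\tilde\pi(j,S_i)/\tilde\pi(0,S_i)$ for any $S_i\ni j$. At $v^*$ each bracket vanishes termwise, since \eqref{lin-sys:mnl-grand} says exactly that $v^*_j/V^*({S_i}_+)=\tilde\pi(j,S_i)$. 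So $\nabla F=0$ at $\log v^*$ regardless of the weights $p_i$, and $v^*$ is a global maximizer. This immediately gives independence from the $p_i$, because $v^*$ is defined without them.

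The main obstacle is uniqueness, since log-sum-exp is not strictly concave in general. I plan to use strict concavity along all directions. Take $d\neq 0$ and restrict to the line $x+td$. The $i$-th term is strictly concave in $t$ unless the function $j\mapsto d_j$ is constant on ${S_i}_+$, using $d_0=0$; this is the standard fact that the Hessian of $\log\sum e^{y}$ is the covariance matrix of $d$ under the softmax distribution. Such a covariance vanishes only when $d$ is constant on the support, and constancy together with $d_0=0$ forces $d_j=0$ for all $j\in S_i$. Because $\cS$ is exhaustive, some $S_i$ contains a coordinate with $d_j\ne0$. For that $i$ the term is strictly concave, and since $p_i>0$, $F$ is strictly concave along $d$. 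Strict concavity then gives a unique maximizer, $\log v^*$, and hence the solution to \eqref{lin-sys:mnl-grand} is the unique optimal solution with $v_0=1$.
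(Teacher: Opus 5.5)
Your proof is correct and follows the paper's route. Like the paper, you show that the gradient vanishes termwise for every $S_i \ni j$ at the solution of \eqref{lin-sys:mnl-grand}, then pass to log-coordinates with $\mu_0=0$ and use strict concavity to get uniqueness and independence from the $p_i$; your softmax-covariance argument (using exhaustiveness and $d_0=0$) also supplies the justification for strict concavity that the paper only asserts.
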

\begin{proof}
    Let $L^{trad}(v) = \sum_{i=1}^m \sum_{j \in {S_i}_+} p_i\tilde\pi(j,S_i) (\log p_i + \log v_j - \log V({S_i}_+))$. Taking the partial derivative of $L^{trad}$ with respect to $v_j$, we have
    \begin{align*}
        \frac{\partial L^{trad}}{\partial v_j} = 
        \sum_{i \in [m]:j \in S_i} p_i\left(\tilde\pi(j,S_i)\left(\frac{1}{v_j} - \frac{1}{V({S_i}_+)} \right)-\sum_{\ell \in {S_i}_+\setminus\{j\}} \frac{\tilde\pi(\ell,S_i)}{V({S_i}_+)}\right).
    \end{align*}
    Let $v'_j$ be the solution to \eqref{lin-sys:mnl-grand}. Let $V'(S) = \sum_{\ell \in S}v'_\ell$ for any $S \subseteq [n]_+$. For all $i \in [m]$ such that $j \in S_i$, we have $\tilde\pi(j,S_i) = v'_j/V'({S_i}_+)$. Evaluating the derivative at $v_j=v'_j$, we have
    \begin{align*}
        &\tilde\pi(j,S_i)\left(\frac{1}{v'_j} - \frac{1}{V'({S_i}_+)} \right)-\sum_{\ell \in {S_i}_+\setminus\{j\}} \frac{\tilde\pi(\ell,S_i)}{V'({S_i}_+)} 
        =& \frac{v'_j}{V'({S_i}_+)} \frac{1}{v'_j} - \sum_{\ell \in {S_i}_+}\frac{v'_\ell}{V'({S_i}_+)^2} = \frac{1}{V'({S_i}_+)} - \frac{1}{V'({S_i}_+)} = 0
    \end{align*}
    for all $i \in [m]$ such that $j \in S_i$. This implies that $\partial L^{trad} /\partial v_j = 0$ when $v_j = v'_j$ for all $j \in [n]$. Writing \(v_j=\exp(\mu_j)\) and fixing \(\mu_0=0\), $L^{trad}$ is strictly concave in $\mu$. Moreover, we have $\partial L^{trad} /\partial \mu_j = 0$ when $v_j = v'_j$ for all $j \in [n]$ since $\partial L^{trad} /\partial \mu_j = v_j \partial L^{trad} /\partial v_j = 0$ and $v_j > 0$. Since the collection \(\mathcal S\) is exhaustive and every assortment includes the no-purchase option with fixed \(\mu_0\), the solution to \eqref{lin-sys:mnl-grand} is the unique optimal solution that maximizes $L^{trad}$.
\end{proof}

Proposition \ref{lem:mnl-id} implies that when the ground truth is an MNL, and the number of transaction samples is sufficient, then it suffices to gather the choice probability information to identify an MNL. In a scenario where the customer ID is provided for each transaction, ignoring correlations among transactions from the same customer does not hinder recovery of the MNL parameters. 

\section{Markov Chain Model Estimation with Panel Data} \label{sec:em}

In this section, we present EM algorithms for MC parameter estimation with panel data. Our EM algorithms are adapted from the traditional EM algorithm in \cite{csimcsek2018expectation}. In Section \ref{subsec:em-func}, we provide the incomplete and complete likelihood functions used in the EM algorithms. In Section \ref{subsec:em-basic}, we present the EM algorithm \textsc{Cus}, which assumes that each customer has a set of transactions consistent with at least one strict preference, or equivalently, each customer is associated with a reduced DAG. This setting captures the DAG-based partial customer preference information used in \cite{jagabathula2018partial}. In practice, however, individual transactions might not be consistent with any strict preference lists, thus inducing cyclic preferences. In Section \ref{subsec:em-hybrid}, we introduce a hybrid EM algorithm \textsc{Hyb}, which combines the traditional EM algorithm and \textsc{Cus} and provides the flexibility to accommodate transactions that induce cyclic preferences.

\subsection{Incomplete and Complete Likelihood Functions} \label{subsec:em-func}

Recall that $[C]$ is the set of customers and that each customer $c \in [C]$ has a set of transactions $\cD^c:=\{(j^c_\ell, S^c_\ell) \mid \ell \in [k_c]\}$ where $k_c$ is the number of transactions in $\cD^c$. Without loss of generality, we assume that $j^c_\ell \ne j^c_{\ell'}$ for all $\ell \ne \ell'$. Each set of transactions $\cD^c$ is consistent with at least one strict preference list for $[n]_+$, that is, there are no cycles induced by $\cD^c$, and a reduced DAG for $\cD^c$ can be constructed. Unlike the traditional EM algorithm \cite{csimcsek2018expectation}, which assumes that each transaction is independent, we assume that each customer is independent and transactions associated with the same customer are dependent. Our EM algorithm selects MC parameters to maximize the likelihood of the observed customer transaction histories, equivalently represented by reduced DAGs.

Building on \eqref{eq:log-like-cus}, given $\rho$ and $\lambda$, the log-likelihood of having the dataset $\{\cD^c\}_{c \in C}$ is
\begin{equation*} 
    L_I(\lambda,\rho) = \sum_{c=1}^C \log \Pr[\cE_{\cD^c} \mid \lambda, \rho]
\end{equation*}
where the subscript $I$ stands for \emph{incomplete} and $\Pr[\cE_{\cD^c} \mid \lambda, \rho]$ denotes the probability that the event $\cE_{\cD^c}$ occurs given the MC parameters $\rho$ and $\lambda$. Although $\Pr[\cE_{\cD^c} \mid \lambda, \rho]$ has a closed-form expression according to \eqref{prob:i-to-j}, \eqref{eq:prob-rle-mc}, and \eqref{eq:prob-rle}, it is convoluted and requires a better structure to design an EM algorithm. We introduce complete-data variables that record the initial state and transition counts of the unobserved random walk.

Let $F^{c,\sigma}_i \in \{0,1\}$ be a random variable such that $F^{c,\sigma}_i = 1$ if and only if customer $c$ enters the system from state $i$ and its random walk is in the form of the reduced linear extension $\sigma$. For each $\sigma \sim_r \cD^c$ and $s \in [k_c]$, let $U^{\sigma,c}_s:=\cup_{\ell=s}^{k_c} S^c_{\sigma(\ell)}$ and $A^{\sigma,c}_s:=\overline{U^{\sigma,c}_s}$. Let
$X^{c,\sigma,s}_{ij}$ be a random variable defined as follows:
\begin{enumerate}
    \item If the random walk is in the form of $\sigma$:
        \begin{enumerate}
            \item If $s=1$, then $X^{c,\sigma,s}_{ij}$ is the number of times customer $c$ transitions the choice from state $i$ to state $j$ during the course of her choice process starting from the initial state and ending at the first visit of $j_{\sigma(1)}$, through states in $A^{\sigma,c}_1$.
            \item If $s > 1$, then $X^{c,\sigma,s}_{ij}$ is the number of times customer $c$ transitions the choice from state $i$ to state $j$ during the course of her choice process starting from the first visit of $j_{\sigma(s-1)}$ and ending at the first visit of $j_{\sigma(s)}$, through states in $A^{\sigma,c}_s$.
        \end{enumerate}
    \item Otherwise, the random walk is not consistent with $\sigma$, so $X^{c,\sigma,s}_{ij} = 0$.
\end{enumerate}

Given $F^c:=\{F^{c,\sigma}_i\}_{\sigma \sim_r \cD^c, i \in [n]_+}$, $F:=\{F^c\}_{c \in [C]}$, $X^c:=\{X^{c,\sigma,s}_{ij}\}_{\sigma \sim_r \cD^c, s \in [k_c], i \in [n], j \in [n]_+}$, and $X:=\{X^c\}_{c \in [C]}$, we derive a well-structured likelihood function. Suppose the random walk of customer $c$ is in the form of $\sigma$, then for any $\sigma' \ne \sigma$, $F^{c,\sigma'}_i = 0$ and $X^{c,\sigma',s}_{ij} = 0$; besides, the customer enters the system at one of the states in $[n]_+$, so $\sum_{i \in [n]_+} F^{c,\sigma}_i = 1$. This implies that $\sum_{\sigma \sim_r \cD^c} \sum_{i \in [n]_+} F^{c,\sigma}_i = 1$ and the likelihood that customer $c$ has a random walk in the form of $\sigma$ and enters the system from state $i$ is $\prod_{i \in [n]_+}\lambda_i^{F^{c,\sigma}_i}$. After entering state $i$, customer $c$ performs a random walk until the first visit of $j_{\sigma(1)}$, via transitioning from state $i$ to state $j$ for $X^{c,\sigma,1}_{ij}$ times. The likelihood of this sub random walk is $\prod_{i \in [n], j \in [n]_+}\rho_{ij}^{X^{c,\sigma,1}_{ij}}$, regardless of the ordering of the visits in $A^{\sigma,c}_1$. Similarly, for $s>1$, after the first visit of $j_{\sigma(s-1)}$, customer $c$ performs a random walk until the first visit of $j_{\sigma(s)}$, via transitioning from state $i$ to state $j$ for $X^{c,\sigma,s}_{ij}$ times, regardless of the ordering of the visits in $A^{\sigma,c}_s$. The likelihood of this sub random walk is $\prod_{i \in [n], j \in [n]_+}\rho_{ij}^{X^{c,\sigma,s}_{ij}}$. By the Markov property, conditional on customer $c$’s random walk being in the form of $\sigma$, the likelihood factors over the prefix segment and the subsequent inter-arrival segments. Therefore, the likelihood that customer $c$ has a random walk in the form of $\sigma$, $F^c$, and $X^c$ is $\left(\prod_{i \in [n]_+}\lambda_i^{F^{c,\sigma}_i}\right) \prod_{s=1}^{k_c}\prod_{i \in [n],
j \in [n]_+
}\rho_{ij}^{X^{c,\sigma,s}_{ij}}$. 

Taking into account all customers $c \in [C]$ and all possible reduced linear extensions $\sigma$, the likelihood of having the dataset $\{\cD^c\}_{c \in [C]}$ that is consistent with $F$ and $X$ is
\[\prod_{c=1}^C\prod_{\sigma \sim_r \cD^c} \left(\left(\prod_{i \in [n]_+}\lambda_i^{F^{c,\sigma}_i}\right) \prod_{s=1}^{k_c}\prod_{
i \in [n]}
\prod_{j \in [n]_+
}\rho_{ij}^{X^{c,\sigma,s}_{ij}}\right),\]
and the log-likelihood is
\begin{equation} \label{eq:lc}
    L_C(\lambda,\rho) = \sum_{c=1}^C \sum_{\sigma \sim_r \cD^c} \left(\sum_{i \in [n]_+} F^{c,\sigma}_i \log \lambda_i +  \sum_{s=1}^{k_c} \sum_{
i \in [n]}
\sum_{j \in [n]_+}  X^{c,\sigma,s}_{ij} \log \rho_{ij}\right).
\end{equation}
The log-likelihood function \eqref{eq:lc} is written in terms of the latent variables $F$ and $X$, where the subscript $C$ stands for \emph{complete}. Since $\log x$ is concave in $x$, $L_C$ is concave in $\lambda$ and $\rho$. However, this formulation is not yet useful since we do not have access to $F$ and $X$. Nevertheless, the EM algorithm \cus presented in the next section replaces these latent variables by their conditional expectations.

\subsection{The EM Algorithm \textsc{Cus}} \label{subsec:em-basic}

In this section, we describe the EM algorithm \textsc{Cus}. An overview is provided in Section \ref{subsec:em-ov}. The expectation step and the maximization step are presented in Sections \ref{subsec:em-ex} and \ref{subsec:em-max}, respectively. 

\subsubsection{Overview of the Algorithm} \label{subsec:em-ov}

Let $\lambda^t$ and $\rho^t$ be the estimated MC parameters at the end of iteration $t$. Let $\lambda^0$ and $\rho^0$ be the initial MC parameters. Each iteration consists of an expectation step and a maximization step.

In the expectation step of iteration $t$, we first estimate the expected value of $F^{c,\sigma}_i$ and $X^{c,\sigma,s}_{ij}$, given the event $\cE_{\cD}$ denoting that customer $c$'s transaction set is $\cD^c$, and the MC parameters $\lambda^{t-1}$ and $\rho^{t-1}$. The estimation for $F^{c,\sigma}_i$ and $X^{c,\sigma,s}_{ij}$ at iteration $t$ are denoted as 
\[F^{c,\sigma,t}_i = \E[F^{c,\sigma}_i \mid \cE_{\cD^c},\lambda^{t-1},\rho^{t-1}] \text{ and } X^{c,\sigma,s,t}_{ij} = \E[X^{c,\sigma,s}_{ij} \mid \cE_{\cD^c},\lambda^{t-1},\rho^{t-1}], \text{ respectively}.\] Let $F^t:=\{F^{c,\sigma,t}_i\}_{c \in [C], \sigma \sim_r \cD^c, i \in [n]_+}$ and $X^t = \{X^{c,\sigma,s,t}_{ij}\}_{c \in [C], \sigma \sim_r \cD^c, s \in [k_c], i \in [n], j \in [n]_+}$. The expected value of $L_C(\lambda^{t-1},\rho^{t-1})$, conditional on $\{\cE_{\cD}\}_{c \in [C]}$ and the MC parameters $\lambda^{t-1}$ and $\rho^{t-1}$, can be obtained by plugging $F^t$ and $X^t$ in \eqref{eq:lc}.

In the maximization step of iteration $t$, given $F^t$ and $X^t$, we select $\lambda^t$ and $\rho^t$ as the solution that maximizes the log-likelihood objective \eqref{eq:lc}.

If the difference between $(\lambda^t, \rho^t)$ and $(\lambda^{t-1}, \rho^{t-1})$ is within tolerance, we end the EM algorithm. The algorithm repeats the expectation and maximization steps until convergence.\footnote{The proof of convergence closely follows \cite{csimcsek2018expectation,nettleton1999convergence} hence omitted.}

The EM algorithm \cus is briefly outlined as follows.

\begin{description}
    \item[\bf Step 1 (Initialization):] Initialize $\lambda^0$ and $\rho^0$ where $\sum_{i = 0}^n\lambda^0_i = 1$ and $\sum_{j = 0}^n\rho^0_{ij} = 1$ for all $i \in [n]$. 
    \item[\bf Step 2 (Expectation):] At iteration $t$, given $\lambda^{t-1}$ and $\rho^{t-1}$, set $F^{c,\sigma,t}_i = \E[F^{c,\sigma}_i \mid \cE_{\cD^c},\lambda^{t-1},\rho^{t-1}]$ and $X^{c,\sigma,s,t}_{ij} = \E[X^{c,\sigma,s}_{ij} \mid \cE_{\cD^c},\lambda^{t-1},\rho^{t-1}]$.
    \item[\bf Step 3 (Maximization):] Given $F^t$ and $X^t$, let $\lambda^t, \rho^t$ be the optimal solution of
    \begin{equation} \label{opt:em-max}
    \max_{\lambda, \rho} L_C(\lambda,\rho) \text{ over } \lambda \in \R^{n+1}_{\ge 0}, \rho \in \R^{n \times (n+1)}_{\ge 0} \text{ s.t. } \sum_{i \in [n]_+}\lambda_i = 1 \text{ and } \sum_{j \in [n]_+}\rho_{ij} = 1 \quad \forall i \in [n].
\end{equation}
\item[\bf Step 4 (Convergence Check):] If the difference between $(\lambda^t, \rho^t)$ and $(\lambda^{t-1}, \rho^{t-1})$ is within tolerance, terminate the algorithm. Otherwise, move on to the next iteration and go to Step 2.
\end{description}

We emphasize that \textsc{Cus} is more computationally intensive than the traditional EM algorithm \cite{csimcsek2018expectation} due to the blowup in the number of random variables. Nevertheless, our experimental results in Section \ref{sec:ex} show that \cus is practically feasible when the number of transactions per customer and the number of reduced linear extensions per customer are at a reasonable scale.

\subsubsection{The Expectation Step} \label{subsec:em-ex}

In this section, we compute $\E[F^{c,\sigma}_i \mid \cE_{\cD^c},\lambda^{t-1},\rho^{t-1}]$ and $\E[X^{c,\sigma,s}_{ij} \mid \cE_{\cD^c},\lambda^{t-1},\rho^{t-1}]$. For notation brevity, we focus on one specific customer and drop the notation $c$ for the customer. We compute the expected value of $F^{\sigma}_i$ and $X^{\sigma,s}_{ij}$ when the customer has a transaction set $\cD$ and the parameters are $\lambda^{t-1}$ and $\rho^{t-1}$.

\paragraph{\bf Computation of $\E[F^{\sigma}_i \mid \cE_{\cD},\lambda^{t-1},\rho^{t-1}]$.} Suppose the random walk is in the form of a reduced linear extension $\sigma \sim_r \cD$. Then, we cannot have any state in $U:=\cup_{\ell=1}^{k} S_{\ell}$ as the initial state when the customer enters the system, except $j_{\sigma(1)}$. For $\sigma \sim_r \cD$, recall the computation of $\pi^\cD_\sigma$ in \eqref{eq:prob-rle-mc}. Let
    \begin{equation} \label{eq:p-sigma}
        p_\cD(\sigma):=\prod_{s = 1}^{k-1}\Pr[j_{\sigma(s)} {\overset{A^\sigma_{s+1}}{\leadsto}} j_{\sigma(s+1)}]
    \end{equation}
    denote the probability of having a random walk in the form of $j_{\sigma(1)}\overset{A^\sigma_2}{\leadsto} j_{\sigma(2)} \overset{A^\sigma_3}{\leadsto} ...  \overset{A^\sigma_k}{\leadsto} j_{\sigma(k)}$ after the first visit of $j_\sigma(1)$. Let $q_\cD(i,\sigma)$ be the probability that the random walk is in the form of $\sigma$ and the initial state is $i$. We have that
\begin{equation} \label{eq:q-cD}
    q_\cD(i,\sigma) = \begin{cases}
        \lambda_i \Pr[i \overset{\overline{U}}{\leadsto} j_{\sigma(1)}] p_\cD(\sigma) & \text{if } i \in  \overline{U}, \\
        \lambda_i p_\cD(\sigma) & \text{if } i=j_{\sigma(1)}, \\
        0 & \text{otherwise},
    \end{cases}
\end{equation}
where the parameters $(\lambda, \rho)$ in \eqref{eq:q-cD} are replaced with $\lambda^{t-1}$ and $\rho^{t-1}$, respectively. Conditional on $\cE_{\cD}$, if the random walk is not in the form of $\sigma$, then $F^{\sigma}_i = 0$; otherwise, the expected value of $F^{\sigma}_i$ is the probability that $i$ is the initial state and the random walk is in the form of $\sigma$. Hence,
\begin{equation*} 
    \E[F^{\sigma}_i \mid \cE_{\cD}, \lambda^{t-1}, \rho^{t-1}] = 
        \frac{q_{\cD}(i,\sigma)}{\pi_{\cD}},
\end{equation*}
where $q_{\cD}(i,\sigma)$ in \eqref{eq:q-cD} and $\pi_{\cD}$ in \eqref{eq:prob-rle} have closed-form expressions in terms of the transaction set $\cD$ and the MC parameters $\lambda^{t-1}$ and $\rho^{t-1}$.

\paragraph{\bf Computation of $\E[X^{\sigma,s}_{ij} \mid \cE_{\cD},\lambda^{t-1},\rho^{t-1}]$.} If the random walk is not in the form of $\sigma$, then $X^{\sigma,s}_{ij} = 0$. We consider the event that the random walk is in the form of $\sigma$. Conditional on $\cE_{\cD}$, the probability that the random walk is in the form of $\sigma$ is $\pi(j_{\sigma(1)}, U) p_\cD(\sigma) / \pi_\cD$ as $U$ is first-visited at $j_{\sigma(1)}$ and the remaining is in the form of the suffix of $\sigma$ after the first visit of $j_{\sigma(1)}$, which occurs with probability $p_\cD(\sigma)$.

We are interested in computing the expected number of times that the customer transitions the choice from state $i$ to state $j$ during the course of her choice process starting from the first visit of $j_{\sigma(s-1)}$ and ending at the first visit of $j_{\sigma(s)}$, through states in $A^\sigma_s$, when $s > 1$. When $s=1$, we compute the number of times the choice transitions from state $i$ to state $j$ during the course of the choice process from the initial state to the first visit of $j_\sigma(1)$. To account for $s=1$ and $s>1$, we consider a more general problem.

Let $\eta := \{\eta_i\}_{i \in [n]_+} \in \R^{n+1}_{\ge 0}$ with $\sum_{i \in [n]_+} \eta_i = 1$ denote a distribution for the starting state of a random walk. For example, when the customer enters the system, $\eta = \lambda$; right after the customer's first visit of $j_{\sigma(s)}$, $\eta_{j_{\sigma(s)}} = 1$ and $\eta_j = 0$ for $j \in [n]_+ \setminus \{j_{\sigma(s)}\}$. Let $A \subseteq [n]$ be the set of states that are allowed to be visited before arriving at any state in $\overline{A}_+$. Consider the following system of linear equations adapted from \eqref{lineq:mc}:
\begin{equation*}
    \theta_{\eta}(j,\overline{A}) = \eta_j + \sum_{i \in A} \rho_{ij} \theta_{\eta}(i,\overline{A}) \quad \forall j \in [n]_+.
\end{equation*}
Following the same argument as in Section \ref{subsubsec:mc}, $\theta_{\eta}(j,\overline{A})$ denotes the expected number of times that state $j$ is visited during the course of the choice process when the starting state distribution is $\eta$ and $\overline{A}$ is offered. For $T \subseteq [n]_+$, let $\eta(T):=\{\eta_j\}_{j \in T}$ denote a sub-vector of $\eta$ with coordinates in $T$, and let $\theta_{\eta}(T,\overline{A}):=\{\theta_{\eta}(j, \overline{A})\}_{j \in T}$. Following \eqref{eq:pi-bar-S}, the closed-form expression for $\theta_{\eta}(A,\overline{A})$ is
\begin{equation*}
    \theta_{\eta}(A,\overline{A}) = \left((I-\rho(A, A))^T\right)^{-1} \eta(A) = \left((I-\rho(A, A))^{-1}\right)^T \eta(A).
\end{equation*}
Following \eqref{eq:pi-S}, the closed-form expression for $\theta_{\eta}(\overline{A}_+,\overline{A})$ is
\begin{align*}
    \theta_\eta(\overline{A}_+,\overline{A}) = \eta(\overline{A}_+) + \rho(A, \overline{A}_+)^T \left((I-\rho(A, A))^{-1}\right)^T \eta(A).
\end{align*}

Let $Y_{ij}$ be a random variable that denotes the number of times the choice transitions from state $i$ to state $j$ during the choice process starting from a given state distribution $
\eta$ and ending at a state in $\overline{A}_+$. Let $d \in \overline{A}_+$ be a destination state. Let $\cF_{\eta}(d,\overline{A})$ be the event that the random walk enters at state $d$ upon the first visit of $\overline{A}_+$, starting from distribution $\eta$. Given the starting state distribution $\eta$ and the offer set $\overline{A}$, conditional on $d$ being chosen from $\overline{A}_+$, a closed-form expression for the expectation of $Y_{ij}$ is as follows.

\begin{lemma}[\cite{csimcsek2018expectation}] \label{lem:yij}
    \[\E[Y_{ij} \mid \cF_{\eta}(d,\overline{A})] = \frac{\Pr[j \overset{A}{\leadsto} d] \rho_{ij} \theta_{\eta}(i,\overline{A})}{\theta_{\eta}(d,\overline{A})}.\]
\end{lemma}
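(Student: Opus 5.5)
We will prove the formula by writing the conditional expectation as a ratio of an unconditional quantity to the probability of the conditioning event. That is,
\[
\E[Y_{ij} \mid \cF_\eta(d,\overline{A})] = \frac{\E[Y_{ij}\mathbbm{1}\{\cF_\eta(d,\overline{A})\}]}{\Pr[\cF_\eta(d,\overline{A})]}.
\]

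The denominator is immediate. Since $d \in \overline{A}_+$ is absorbing in this process, it is visited at most once, and it is visited exactly when the walk is absorbed at $d$. Hence the expected number of visits equals the absorption probability, $\Pr[\cF_\eta(d,\overline{A})] = \theta_\eta(d,\overline{A})$. This is the same argument used after \eqref{lineq:mc} for $j \in S$.

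For the numerator, observe that $Y_{ij}$ can be nonzero only when $i \in A$, because the walk stops at states in $\overline{A}_+$. We decompose $Y_{ij}$ as a sum over time steps: $Y_{ij} = \sum_{t \ge 0} \mathbbm{1}\{Z_t = i,\, Z_{t+1} = j,\, t < \tau\}$, where $Z_t$ is the walk and $\tau$ is the absorption time. Multiplying by $\mathbbm{1}\{Z_\tau = d\}$ and taking expectations gives a sum over $t$ of products of three factors:
\begin{itemize}
\item $\Pr[Z_t = i,\, t < \tau]$, which is the probability of being at $i$ at time $t$ while still in $A$;
\item $\rho_{ij}$, the probability of the next step going to $j$;
\item the probability that the walk starting at $j$ is absorbed at $d$.
\end{itemize}
The strong Markov property at time $t+1$ justifies this factorization. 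The last factor requires a case split.
\begin{itemize}
\item If $j \in A$, the walk continues from $j$ through $A$ and is absorbed at $d$. This probability is $\sum_{q\ge0}(\rho(A,A)^q\rho(A,\{d\}))_j$, which we identify with $\Pr[j \overset{A}{\leadsto} d]$ from \eqref{prob:i-to-j}.
\item If $j \in \overline{A}_+$, absorption happens immediately at $j$. The factor is then $\mathbbm{1}\{j=d\}$.
\end{itemize}

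Summing $\Pr[Z_t=i,\, t<\tau]$ over $t$ gives the expected number of visits to $i$, namely $\theta_\eta(i,\overline{A})$ from the linear system. Therefore the numerator equals $\theta_\eta(i,\overline{A})\,\rho_{ij}\,\Pr[j \overset{A}{\leadsto} d]$, and dividing by the denominator yields the claimed formula.

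The main obstacle is reconciling the convention in \eqref{prob:i-to-j}. There $\Pr[j\overset{A}{\leadsto}d]$ equals $\rho_{jd} + \rho(\{j\},A)(I-\rho(A,A))^{-1}\rho(A,\{d\})$, which counts walks that take at least one step out of $j$. What the derivation needs is the probability of absorption at $d$ for a walk that is currently at $j$. These two agree when $j \in A$: conditioning on the first step from $j$ shows that the probability of absorption at $d$ from $j$ equals $\rho_{jd} + \sum_{a\in A}\rho_{ja}\cdot(\text{probability of absorption at } d \text{ from } a)$, which is the series form of the expression above. When $j \in \overline{A}_+$, we interpret the factor as $\mathbbm{1}\{j=d\}$, in line with the convention of \cite{csimcsek2018expectation}. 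We will state this interpretation explicitly. Finiteness of all the sums follows from the assumption that $I-\rho(A,A)$ is invertible with a Neumann series.
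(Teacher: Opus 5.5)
Your proof is correct. The paper gives no proof of this lemma; it imports it from \cite{csimcsek2018expectation}. Your derivation is the standard argument for this result: write $Y_{ij}$ as a sum of one-step indicators before absorption, apply the Markov property at time $t+1$, and identify $\sum_t \Pr[Z_t=i,\,t<\tau]$ with $\theta_\eta(i,\overline{A})$ and $\Pr[\cF_\eta(d,\overline{A})]$ with $\theta_\eta(d,\overline{A})$.

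Your remark about conventions is substantive, not cosmetic. Read literally with \eqref{prob:i-to-j}, the displayed formula fails for $j\in\overline{A}_+$. When $j=d\in\overline{A}$, the factor $\Pr[d\overset{A}{\leadsto}d]$ is a return probability rather than $1$, and when $j=0$ it is undefined. The formula also fails for $i\in\overline{A}$: there the left side is $0$, but $\rho_{ij}\theta_\eta(i,\overline{A})$ need not vanish. You mention this restriction only in passing.

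So state the lemma explicitly for $i\in A$, with the factor replaced by $\mathbbm{1}\{j=d\}$ when $j\in\overline{A}_+$, as you propose. This matters downstream. \eqref{eq:ex-x} applies the formula with $j$ ranging over $[n]_+$, which includes $j=j_{\sigma(s)}$, and with $i$ ranging over all of $[n]$.
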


Equipped with Lemma \ref{lem:yij}, we compute $\E[X^{\sigma,s}_{ij} \mid \cE_{\cD},\lambda^{t-1},\rho^{t-1}]$. Recall that when the random walk is in the form of $\sigma$, it is partitioned into sub random walks, starting from an initial state to the first visit of $j_{\sigma(1)}$, followed by the first visit of $j_{\sigma(s-1)}$ to the first visit of $j_{\sigma(s)}$ for $s > 1$. By the Markov property, conditional on the endpoints of each segment and the reduced linear extension $\sigma$, the likelihood factors into the product of the segment transition probabilities. Let $e^j$ denote a state distribution vector such that $e^j_j = 1$ for $j \in [n]$ and $e^j_{j'} = 0$ for $j' \in [n]_+ \setminus \{j\}$. We have the following closed-form expression for $\E[X^{\sigma,s}_{ij} \mid \cE_{\cD},\lambda^{t-1},\rho^{t-1}]$:
\begin{equation} \label{eq:ex-x}
    \E[X^{\sigma,s}_{ij} \mid \cE_{\cD},\lambda^{t-1},\rho^{t-1}] = \begin{cases}
        \mbox{\Large $\frac{\pi^\cD_\sigma}{\pi_\cD} \frac{\Pr[j \overset{\overline{U}}{\leadsto} j_{\sigma(1)}] \rho_{ij} \theta_{\lambda}(i,U)}{\theta_{\lambda}(j_{\sigma(1)},U)}$} & \text{if } s=1, \\
        \mbox{\Large $\frac{\pi^\cD_\sigma}{\pi_\cD} \frac{\Pr[j \overset{A^\sigma_s}{\leadsto} j_{\sigma(s)}] \rho_{ij} \theta_{e^{j_{\sigma(s-1)}}}(i,U^\sigma_s)}{\theta_{e^{j_{\sigma(s-1)}}}(j_{\sigma(s)},U^\sigma_s)}$} & \text{if } s \in [k] \setminus \{1\}.
    \end{cases}
\end{equation}
The MC parameters in \eqref{eq:ex-x} are $\lambda^{t-1}$ and $\rho^{t-1}$. The term $\pi^{\cD}_{\sigma} / \pi_\cD$ is the probability that the random walk is in the form of $\sigma$. The random variable $X^{\sigma,s}_{ij}$ is captured by the random variable $Y_{ij}$ in Lemma \ref{lem:yij}. When $s=1$, the starting distribution is $\lambda$, the admissible states are $\overline{U}$, and the destination state is $j_{\sigma(1)}$. When $s>1$, the starting distribution is $e^{j_{\sigma(s-1)}}$, the admissible states are $A^\sigma_s$, and the destination state is $j_{\sigma(s)}$.

\subsubsection{The Maximization Step} \label{subsec:em-max} After computing $F^t$ and $X^t$ in the expectation step, we aim to solve \eqref{opt:em-max}, which can be decomposed into the following optimization sub-problems:
\begin{equation} \label{opt:em-max-lambda}
\max_{\lambda} \sum_{c=1}^C \sum_{\sigma \sim_r \cD^c} \sum_{i \in [n]_+} F^{c,\sigma}_i \log \lambda_i \quad \text{over} \quad 
\lambda \in \R^{n+1}_{\ge 0} \quad \text{s.t.} \quad \sum_{i \in [n]_+}\lambda_i = 1
\end{equation}
and
\begin{equation} \label{opt:em-max-rho}
    \max_{\rho} \sum_{c=1}^C \sum_{\sigma \sim_r \cD^c} \sum_{s=1}^{k_c} \sum_{
i \in [n]}
\sum_{j \in [n]_+} X^{c,\sigma,s}_{ij} \log \rho_{ij} \text{ over } \rho \in \R^{n \times (n+1)}_{\ge 0} \quad \text{s.t.} \sum_{j \in [n]_+}\rho_{ij} = 1 \quad \forall i \in [n].
\end{equation}
\eqref{opt:em-max-lambda} and \eqref{opt:em-max-rho} are in the form of the problem of computing the maximum likelihood estimators under the multinomial distribution. According to \cite{bishop2006pattern}, the optimal solution $x^*$ of the problem
\begin{equation} \label{opt:multi}
    \max_{x} \sum_{i=1}^m a_i \log x_i \quad \text{over} \quad x \in \R^m_{\ge 0} \quad \text{s.t.} \quad \sum_{i=1}^m x_i = 1
\end{equation}
where $a \in \R^m_{\ge 0}$ and $\sum_{i=1}^m a_i > 0$, is unique and has a closed-form expression  $x^*_i = a_i / (\sum_{i=1}^m a_i)$ for all $i \in [m]$.\footnote{For ease of notation, when $a_i=0$, then $x_i=0$, and we set $a_i \log x_i = 0$ in the objective of \eqref{opt:multi}.} Therefore, we set
\begin{equation*}
    \lambda^t_i = \frac{\sum_{c=1}^C \sum_{\sigma \sim_r \cD^c} F^{c,\sigma,t}_i}{\sum_{c=1}^C \sum_{\sigma \sim_r \cD^c}\sum_{j \in [n]_+} F^{c,\sigma,t}_j} \quad \forall i \in [n]_+
\end{equation*}
and
\begin{equation*}
    \rho^t_{ij} = \frac{\sum_{c=1}^C \sum_{\sigma \sim_r \cD^c} \sum_{s=1}^{k_c} X^{c,\sigma,s,t}_{ij}}{\sum_{c=1}^C \sum_{\sigma \sim_r \cD^c} \sum_{s=1}^{k_c} \sum_{k \in [n]_+} X^{c,\sigma,s,t}_{ik}} \quad \forall i \in [n], j \in [n]_+.
\end{equation*}

\subsection{The EM Algorithm \hyb} \label{subsec:em-hybrid} In addition to the customer transactions $\{\cD^c\}_{c \in [C]}$ where each customer's transaction set is consistent with at least one strict preference, the hybrid EM algorithm \hyb also takes the \emph{independent transaction set} $\cI$ as the input. The independent transaction set $\cI = \{(j_r, S_r) \mid r \in [I]\}$ has $I$ transactions $(j_r, S_r)$ denoting that $j_r$ is chosen from the offer set $S_r$ in transaction $r$. Each transaction in $I$ is assumed to be independent. \hyb aims to maximize the likelihood of having both $\{\cD^c\}_{c \in [C]}$ and $\cI$, by selecting the MC parameters. 
If all transactions are added to $\cI$, then \hyb captures the traditional EM algorithm.

In practice, the independent transaction set provides the flexibility to adapt to real-life datasets. Suppose a customer has a transaction set that induces a cyclic preference. One heuristic is to construct an arc-weighted directed graph that captures the customer's preference according to the transactions, and extract a DAG that best describes the customer's preference. The DAG construction might involve arc deletions. Transactions that violate the DAG preference are then not used in the DAG construction, but could provide information about the preference distribution. As opposed to the DAG-based heuristic in \cite{jagabathula2018partial} that discards the information from the violating arcs, one approach is to include these DAG-violating transactions in $\cI$. Another heuristic is to consider the most recent transactions in the DAG construction, and include the older transactions and the DAG-violating transactions in $\cI$.


Let $G^r_i \in \{0,1\}$ be a random variable such that $G^r_i = 1$ if and only if the random walk of transaction $r$ enters the system from state $i$. Let $Y^r_{ij}$ be a random variable denoting the number of times the choice transitions from state $i$ to state $j$ during the course of the choice process in the random walk of transaction $r$. Let $G:=\{G^r_i\}_{r \in [I], i \in [n]_+}$ and $Y:=\{Y^r_{ij}\}_{r \in [I], i \in [n], j \in [n]_+}$. Following \cite{csimcsek2018expectation}, the likelihood of having $\cI$ is
\[\prod_{r=1}^I \left(\left(\prod_{i \in [n]_+}\lambda_i^{G^r_i}\right) \prod_{\substack{
i \in [n], \\
j \in [n]_+
}}\rho_{ij}^{Y^r_{ij}}\right),\]
and the log-likelihood is
\begin{equation} \label{eq:L}
    L(\lambda,\rho) = \sum_{r=1}^I \left(\sum_{i \in [n]_+} G^r_i \log \lambda_i +  \sum_{
i \in [n]}
\sum_{j \in [n]_+
} Y^r_{ij} \log \rho_{ij}\right).
\end{equation}
The log-likelihood of having both $\{\cD^c\}_{c \in [C]}$ and $\cI$, called the \emph{hybrid log-likelihood}, is 
\[L_H(\lambda,\rho) = L_C(\lambda,\rho) + L(\lambda,\rho).\]

The hybrid EM algorithm \hyb is briefly outlined as follows.

\begin{description}
    \item[\bf Step 1 (Initialization):] Initialize $\lambda^0$ and $\rho^0$ where $\sum_{i=0}^n\lambda^0_i = 1$ and $\sum_{j=0}^n\rho^0_{ij} = 1$ for all $i \in [n]$. 
    \item[\bf Step 2 (Expectation):] At iteration $t$, given $\lambda^{t-1}$ and $\rho^{t-1}$, set $F^{c,\sigma,t}_i = \E[F^{c,\sigma}_i \mid \cE_{\cD^c},\lambda^{t-1},\rho^{t-1}]$, $X^{c,\sigma,s,t}_{ij} = \E[X^{c,\sigma,s}_{ij} \mid \cE_{\cD^c},\lambda^{t-1},\rho^{t-1}]$,  $G^{r,t}_i = \E[G^r_i \mid \cE(j_r,S_r),\lambda^{t-1},\rho^{t-1}]$, and $Y^{r,t}_{ij} = \E[Y^r_{ij} \mid \cE(j_r,S_r),\lambda^{t-1},\rho^{t-1}]$, where $\cE(j_r,S_r)$ denotes the event that $j_r$ is chosen from ${S_r}_+$.
    \item[\bf Step 3 (Maximization):] Given $F^t$, $X^t$, $G^t$, and $Y^t$, let $\lambda^t, \rho^t$ be the optimal solution of
    \begin{equation} \label{opt:em-max-hybrid}
    \max_{\lambda, \rho} L_H(\lambda,\rho) \text{ over } \lambda \in \R^{n+1}_{\ge 0}, \rho \in \R^{n \times (n+1)}_{\ge 0} \text{ s.t. } \sum_{i \in [n]_+}\lambda_i = 1 \text{ and } \sum_{j \in [n]_+}\rho_{ij} = 1 \quad \forall i \in [n].
\end{equation}
\item[\bf Step 4 (Convergence Check):] If the difference between $(\lambda^t, \rho^t)$ and $(\lambda^{t-1}, \rho^{t-1})$ is within tolerance, terminate the algorithm. Otherwise, move on to the next iteration and go to Step 2.
\end{description}

In the expectation step, the calculation for $F$ and $X$ follows Section \ref{subsec:em-ex}. The computation for $G$ and $Y$ follows the same idea in \cite{csimcsek2018expectation}. The term for $G^{r,t}_i$ is as follows:
\[\E[G^r_i \mid \cE(j_r,S_r),\lambda^{t-1},\rho^{t-1}] = \frac{\lambda_i \Pr[i \overset{ \overline{S_r}}{\leadsto} j_r]}{\pi(j_r,S_r)}\]
since when the random walk is consistent with the transaction $(j_r, S_r)$, if it enters the system at state $i$, it must be followed by a random walk in the form of $i \overset{ \overline{S_r}}{\leadsto} j_r$. The term for $Y^{r,t}_{ij}$ is as follows:
\[\E[Y^r_{ij} \mid \cE(j_r,S_r),\lambda^{t-1},\rho^{t-1}] = \frac{\Pr[j \overset{ \overline{S_r}}{\leadsto} j_r] \rho_{ij} \theta_{\lambda}(i,S_r)}{\theta_{\lambda}(j_r,S_r)}\]
where the term $\theta_\lambda$ follows Lemma \ref{lem:yij}. The starting state distribution is $\lambda$, the destination state is $j_r$, and the admissible set is $\overline{S_r}$.

In the maximization step, the optimization problem \eqref{opt:em-max-hybrid} is in the form of \eqref{opt:multi}. Therefore, we set
\begin{equation*}
    \lambda^t_i = \frac{\sum_{c=1}^C \sum_{\sigma \sim_r \cD^c} F^{c,\sigma,t}_i + \sum_{r=1}^I G^{r,t}_i}{\sum_{c=1}^C \sum_{\sigma \sim_r \cD^c}\sum_{j \in [n]_+} F^{c,\sigma,t}_j + \sum_{r=1}^I \sum_{j \in [n]_+} G^{r,t}_j} \quad \forall i \in [n]_+
\end{equation*}
and
\begin{equation*}
    \rho^t_{ij} = \frac{\sum_{c=1}^C \sum_{\sigma \sim_r \cD^c} \sum_{s=1}^{k_c} X^{c,\sigma,s,t}_{ij} + \sum_{r=1}^I Y^{r,t}_{ij}}{\sum_{c=1}^C \sum_{\sigma \sim_r \cD^c} \sum_{s=1}^{k_c} \sum_{k \in [n]_+} X^{c,\sigma,s,t}_{ik} + \sum_{r=1}^I \sum_{k \in [n]_+} Y^{r,t}_{ik}} \quad \forall i \in [n], j \in [n]_+.
\end{equation*}

\section{Experimental Results} \label{sec:ex}

To evaluate the performance of our EM algorithms, we design experiments where the ground truth choice models are known. 
In Section \ref{subsec:pm-known}, we introduce the performance measures. In Section \ref{subsec:ben}, we introduce the benchmark choice models that we compare with. In Section \ref{subsec:syn}, we present the experimental results on pure synthetic datasets. In Section \ref{subsec:sushi}, we present the experimental results on the semi-synthetic data constructed from the sushi dataset \cite{kamishima2003nantonac}. These experiments are adapted from \cite{berbeglia2022comparative}. 


\subsection{Performance Measures with Ground Truth} \label{subsec:pm-known}

We consider two types of performance measures: predictive performance (unconditional and conditional soft root mean square error), and revenue-based measures (revenue ratio and conditional revenue ratio).

To evaluate predictive performance, we use root mean square error (RMSE) to capture how far an estimated MC model is from the ground truth. Let $\theta$ be the ground truth model and $\phi$ be an estimated model. Let $\pi^\psi(j,S)$ denote the unconditional probability $\pi(j,S)$ that $j$ is chosen from $S_+$ under model $\psi$, where $\psi$ could be $\theta$ or $\phi$. Following \cite{berbeglia2022comparative}, the \emph{soft RMSE} is defined as follows:
\begin{equation} \label{eq:srmse}
    \sqrt{\frac{\sum_{S \subseteq [n]}\sum_{j \in S^+} \left(\pi^\phi(j,S) - \pi^\theta(j,S)\right)^2}{\sum_{S \subseteq [n]}(|S|+1)}}.
\end{equation}
Similarly, using \eqref{eq:cond-prob}, let $\pi^\psi_{\cD^c}(j,S)$ denote the probability that $j$ is chosen from $S_+$ under model $\psi$, conditional on the transaction data $\cD^c$ of customer $c$. Note that $\pi^\psi_{\cD^c}(j,S) = \pi^\psi(j,S)$ if customer $c$ has no historical transaction, i.e., $\cD^c = \varnothing$. The \emph{conditional soft RMSE} is defined as follows:
\begin{equation} \label{eq:srmse-con}
    \sqrt{\frac{\sum^C_{c=1}\sum_{S \subseteq [n]}\sum_{j \in S^+} \left(\pi^\phi_{\cD^c}(j,S) - \pi^\theta_{\cD^c}(j,S)\right)^2}{\sum_{c=1}^C\sum_{S \subseteq [n]}(|S|+1)}}.
\end{equation}
In \eqref{eq:srmse} and \eqref{eq:srmse-con}, the RMSE is computed over all possible offer sets and all possible selections for each offer set, which ensures robustness. Additionally, in \eqref{eq:srmse-con}, the RMSE is computed over all customers that are present in the data.\footnote{Considering all possible transaction sets (or DAGs), which have $2^ {\Theta(n^2)}$ possibilities, is not computationally practical.} The additive 1 in the denominator represents the no-purchase option.

Building on predictive performance, we proceed with conditional and unconditional revenue performance. Given the revenue vector $r$, under model $\psi$, let $S^*_{\psi}$ denote the optimal assortment for \eqref{opt:tao} and $R_{\psi}(S) := \sum_{j \in S} r_j \pi^\psi(j,S)$ denote the expected revenue obtained by offering $S$. The \emph{revenue ratio} is defined as $R_{\theta}(S^*_\phi)/R_{\theta}(S^*_\theta)$. That is, under the ground truth $\theta$, the ratio between the expected revenue of the optimal assortment under $\phi$ and that under the ground truth. Under $\psi$, let $S^*_{\psi,\cD^c}$ denote the optimal assortment for \eqref{opt:cao} and $R_{\psi, \cD^c}(S):= \sum_{j \in S} r_j \pi^\psi_{\cD^c}(j,S)$ denote the expected revenue obtained by offering $S$ conditional on $\cD^c$. The \emph{conditional revenue ratio} is
\begin{equation} \label{eq:rev-con}
    \frac{\sum_{c=1}^C R_{\theta,\cD^c}(S^*_{\phi,\cD^c})}{\sum_{c=1}^C R_{\theta,\cD^c}(S^*_{\theta,\cD^c})}.
\end{equation}
That is, under the ground truth $\theta$, we consider the ratio between the total expected revenue of the optimal assortment conditional on $\cD^c$ under $\phi$ and that under the ground truth $\theta$.

\subsection{Benchmark Choice Models} \label{subsec:ben}

In addition to the MC choice model estimated by the traditional EM algorithm \cite{csimcsek2018expectation}, denoted \textsc{Mc}, we compare our estimated choice models with two types of choice models: MNL and latent-class MNL (LC-MNL). The LC-MNL model is a mixture of $\ell$ MNL models, each capturing a customer segment. Each segment $s \in [\ell]$ is associated with a probability $\alpha_s$, and each customer belongs to a segment with probability $\alpha_s$. Each type can be further categorized as traditional or partial-ordering-based \cite{jagabathula2018partial}.

Traditional choice model estimation algorithms aim to find the parameters to maximize log-likelihood \eqref{eq:log-like-trad}. For MNL, when the parameter $v_0$ is fixed, \eqref{eq:log-like-trad} is concave and can be efficiently optimized
using standard nonlinear optimization packages. For LC-MNL, \eqref{eq:log-like-trad} becomes non-concave. To ensure comparability, we consider the EM algorithm \cite{train2008algorithms}, as the partial-ordering-based LC-MNL estimation in \cite{jagabathula2018partial} is also based on the framework \cite{train2008algorithms}. In the traditional setting, the estimated MNL and LC-MNL models are termed \textsc{Mnl} and \textsc{Lc-Mnl}, respectively.

For choice model estimation with panel data, the log-likelihood objective \eqref{eq:log-like-cus} is more complicated. The objective is non-concave under MNL. We proceed with the partial-ordering-based MNL estimation. Following \cite{jagabathula2018partial} and Section \ref{subsec:dag}, consider the DAG $D_c=([n]_+,A_c)$ of customer $c$. Let $\Psi^c_j$ denote the set of products that are reachable from product $j$ in $D_c$, including $j$ itself. The probability $\pi_{\cD^c}$ can be approximated as 
\begin{equation} \label{eq:apx-cd}
    \hat{\pi}_{\cD^c} := \prod_{j \in [n]_+} \frac{v_j}{\sum_{j' \in \Psi^c_j} v_{j'}}.
\end{equation}
It was shown that $\hat{\pi}_{\cD^c} \le \pi_{\cD^c}$ where the equality holds when $D_c$ is a directed forest. A lower bound for $\hat{\pi}_{\cD^c}$ in terms of $\pi_{\cD^c}$ was presented in \cite{jagabathula2022personalized}. Building on \eqref{eq:apx-cd}, the approximate log-likelihood is
\begin{equation} \label{eq:apx-llh}
    \hat{L}^{cus}(v) := \sum_{c=1}^C \log \left( \prod_{j \in [n]_+} \frac{v_j}{\sum_{j' \in \Psi^c_j} v_{j'}} \right) = \sum_{c=1}^C \log \hat{\pi}_{\cD^c}.
\end{equation}
The objective \eqref{eq:apx-llh}, when $v_0$ is fixed, is concave in $\mu_j = \log v_j$ for all $j \in [n]$ and can be efficiently optimized
using standard nonlinear optimization packages. We focus on the simple \emph{partial order MNL} that maximizes \eqref{eq:apx-llh} as our benchmark, as opposed to the more sophisticated estimation in \cite{jagabathula2018partial,jagabathula2022personalized}, which considers $\ell_1$-regularization. The benchmark partial order MNL model is termed \textsc{Pomnl}.

Building on the approximation \eqref{eq:apx-cd}, we turn to the \emph{latent-class partial order MNL} (LC-POMNL) model introduced in \cite{jagabathula2018partial}. Here, the probability $\pi_{\cD^c}$ can be approximated as 
\begin{equation} \label{eq:apx-cd-lc}
    \hat{\pi}_{\cD^c} := \sum_{s=1}^\ell \alpha_s \prod_{j \in [n]_+} \frac{v^s_j}{\sum_{j' \in \Psi^c_j} v^s_{j'}}
\end{equation}
where $v^s$ is the MNL attraction vector of segment $s$. Building on \eqref{eq:apx-cd-lc}, the goal is to maximize the approximate log-likelihood objective by solving
\begin{align} \label{eq:apx-llh-lc}
    \max_{\alpha,v} \hat{L}^{cus}(\alpha,v) &:= \sum_{c=1}^C \log \left( \sum_{s=1}^\ell \alpha_s \prod_{j \in [n]_+} \frac{v^s_j}{\sum_{j' \in \Psi^c_j} v^s_{j'}} \right) \\ 
    \text{s.t.} & \sum_{s=1}^\ell \alpha_s = 1, \alpha \in \R^\ell_{\ge 0}, \nonumber \\
    &v \in \R^{\ell \times (n+1)}_{\ge 0} \text{ with } v^s_0=1 \quad \forall s \in [\ell]. \nonumber
\end{align}
We consider the EM algorithm based on \cite{train2008algorithms}, which aims to solve \eqref{eq:apx-llh-lc}. This simple partial order MNL serves as our benchmark, as opposed to the more sophisticated estimation in \cite{jagabathula2018partial,jagabathula2022personalized}, which considers $\ell_1$-regularization. The benchmark latent-class partial order MNL model is termed \textsc{Lc-Pomnl}.

\subsection{Synthetic Data} \label{subsec:syn}
The first stage in the data generation process is the creation of the ground truth. There are $n=10$ products. Each product is associated with a revenue uniformly, randomly, and independently chosen from $[1,5]$. We construct 2 types of ground truths: Rank List (or stochastic preference) and Mallows-based Rank List.

The Rank List model has a set of strict preference lists, each associated with a given probability. Each customer's strict preference is selected according to the given probability parameters. In general, Rank List is equivalent to RUM when the number of preferences is unlimited \cite{block1974random}. In the Rank List models that we construct, the first preference list is chosen with probability $0.1$ and has the no-purchase option as the first alternative.  Following \cite{berbeglia2022comparative}, we generate two ground truth models: Rank List with 10 lists (RL-10) and with 100 lists (RL-100). The remaining 0.9 probability is covered by 9 or 99 preference lists characterized as follows: the no-purchase option is not the top choice; each strict preference is drawn uniformly and independently at random from all possible preferences; each strict preference list $\ell$ occurs with a probability $0.9 \beta_\ell / (\sum_{\ell'} \beta_{\ell'})$ where $\beta_\ell \in (0,1)$ is drawn uniformly and independently at random for all list $\ell$.

To consider a scenario in which customers' strict preferences are correlated rather than independently drawn, we also generate Rank List ground truths based on the Mallows model \cite{mallows1957non}. The Mallows model is the most popular among distance-based ranking models, characterized by a central strict preference $\tau$ and a concentration parameter $\kappa \in (0,1)$. Here, $\tau: [n]_+ \to [n+1]$ where $j \in [n]_+$ is the $\tau(j)$-th most preferred product among $[n]_+$. The probability of a customer having a strict preference $\tau'$ is proportional to $\kappa^{d(\tau,\tau')}$, where $d(\tau, \tau'):=\sum_{i,j \in [n]_+:i < j} \mathbbm{1}[(\tau(i)-\tau(j))(\tau'(i)-\tau'(j)) < 0]$ is the Kendall-Tau distance between $\tau$ and $\tau'$. Namely, $d(\tau, \tau')$ counts the number of pairwise ordering disagreements between $\tau$ and $\tau'$. Intuitively, the Mallows model defines a set of similar preferences centered around a common permutation, where the deviation probability decreases exponentially in the number of disagreements. When $\kappa \to 1$, the preference distribution is uniform; when $\kappa \to 0$, the preference distribution is highly centered around $\tau$. Although the unconditional choice probability under Mallows can be efficiently computed according to \cite{desir2016assortment}, computing the conditional choice probability could be convoluted. One approach is to enumerate all possible complete linear extensions, but this yields $(n+1)!$ possibilities, and it is unclear how to use reduced linear extensions. Therefore, we construct Rank List ground truths that approximate the Mallows model, where the unconditional choice probability calculation is tractable when a few strict preferences are sampled.

We randomly and uniformly pick a central strict preference $\tau$ in which the no-purchase option is not the first alternative. 2000 strict preferences $\tau'$ are independently sampled proportionally to $\kappa^{d(\tau, \tau')}$. In the Mallows-based Rank List models that we construct, the first preference list is chosen with probability $0.2$ and has the no-purchase option as the first alternative. The remaining 0.8 is covered by the 2000 strict preference lists $\tau'$, each associated with probability $0.0004$. We merge the same preference lists by adding the probabilities. Two ground truth models are considered: Mallows-based Rank List with $\kappa=0.4$ (MRL-0.4) and with $\kappa=0.7$ (MRL-0.7).

In the second stage, we generate in-sample transaction data from the ground truth, following a setting similar to \cite{berbeglia2022comparative}. There are 2000 customers, each associated with an underlying strict preference list over $[n]_+$. There are 600 periods, each containing 10 transactions. Each period is associated with an offer set $S \subseteq [10]$ such that $3 \le |S| \le 6$. Each transaction contains a customer ID in $[2000]$, the offer set, and the product selected according to the customer's preference list. The customer of a transaction is chosen uniformly at random. On average, each customer has $600 \times 10 / 2000 = 3$ transactions. We note that if the same product $j$ is chosen from different offer sets, then these transactions are merged into one transaction where $j$ is chosen from the union of the offer sets, so the average value of $k_c \le 3$.

We generate 100 instances for each combination of ground truth and the number of in-sample customers used to estimate the choice model. The number of customers $C \in \{100, 250, 500, 1000, 2000\}$, and we use transactions associated with customers in $[C]$ for training, so that the nested structure ensures information gain as $C$ increases. We compare the MC choice models estimated by our EM algorithms \cus and \hyb with the benchmark models in Section \ref{subsec:ben}. We use customer IDs in $[C]$ to test \cus, which aims to maximize \eqref{eq:log-like-cus}. The estimated choice model is termed \cusmc. To test \hyb, we reveal half of the customer IDs in $[C]$. More specifically, only the odd customer IDs are revealed, while transactions associated with even customer IDs are added to the independent set. The estimated choice model is termed \hybmc. In practice, customers can be categorized as members or guests, and only the transactions made by members are recorded as customer-specific. For each performance measure, we take the average over the 100 instances. To test \lcmnl, we consider the number of segments $\ell=1,2,...,10$, and pick $\ell$ with the lowest average soft RMSE over the 100 instances. The number of segments for \lcpomnl is selected in the same way. This selection rule uses the known ground truth and is intended only to select a strong synthetic benchmark.\footnote{We use this approach to align with \cite{jagabathula2018partial,jagabathula2022personalized}. In applications without ground truth, $\ell$ should be selected using validation log-likelihood or another implementable model-selection criterion.} To avoid overfitting, we rule out the in-sample offer sets for the predictive RMSE measures in \eqref{eq:srmse} and \eqref{eq:srmse-con}. For computational reasons, when testing the conditional soft RMSE \eqref{eq:srmse-con} and the conditional revenue ratio \eqref{eq:rev-con}, we consider only $c \in [100]$, regardless of the number of in-sample customers $C$. This tests the impact of the richness of extra transaction data on the conditional measures of a fixed set of customers.

\subsubsection{Results} \label{subsubsec:syn-results}

Overall, we compare 7 different choice models, 2 from our framework and 5 benchmark models from the previous literature \cite{csimcsek2018expectation,train2008algorithms,jagabathula2018partial,jagabathula2022personalized}, on 4 different ground truths. For each ground truth, we have 5 different data sizes based on the number of customers, each with 100 instances. In total, 2000 instances are tested for each choice model estimation. The computationally intensive tasks were executed in parallel on a cluster at the University of Melbourne.

Before showing the experimental results, we present the metadata and characteristics of the in-sample transaction data from the 2000 customers for each ground truth in Table \ref{table:syn-md}. On average, customers drawn from the Rank List have more transactions after merging ($k_c$) than those drawn from the Mallows-based Rank List, resulting in a higher number of reduced linear extensions. For the Mallows-based Rank List, the lower the concentration parameter $\kappa$, the lower the number of distinct sampled preference lists, indicating that the preference is more concentrated around the central preference.

\begin{table}[ht]
\centering

\begin{minipage}{0.57\textwidth}
\centering
\begin{tabular}{lcccc}
\hline
ground truth & RL-10 & RL-100 & MRL-0.4 & MRL-0.7 \\
\hline
average $k_c$ & 1.839 & 1.834 & 1.691 & 1.671 \\
average \#RLE & 1.430 & 1.431 & 1.350 & 1.347 \\
\hline
\end{tabular}
\end{minipage}
\hfill
\begin{minipage}{0.4\textwidth}
\centering
\begin{tabular}{lcc}
\hline
ground truth & MRL-0.4 & MRL-0.7 \\
\hline
average \#lists & 1681.84 & 1999.37 \\
\hline
\end{tabular}
\end{minipage}

\caption{The average $k_c$ and the average \#RLE are the average (over 100 instances) of the average number of transactions after merging and reduced linear extensions (over 2000 in-sample customers), respectively; the average \#lists is the average number of distinct preference lists (including the first list that has no-purchase as the top choice, which occurs with probability 0.2) over 100 instances.}
\label{table:syn-md}
\end{table}

\begin{figure}[H]
    \centering
    \includegraphics[width=\textwidth]{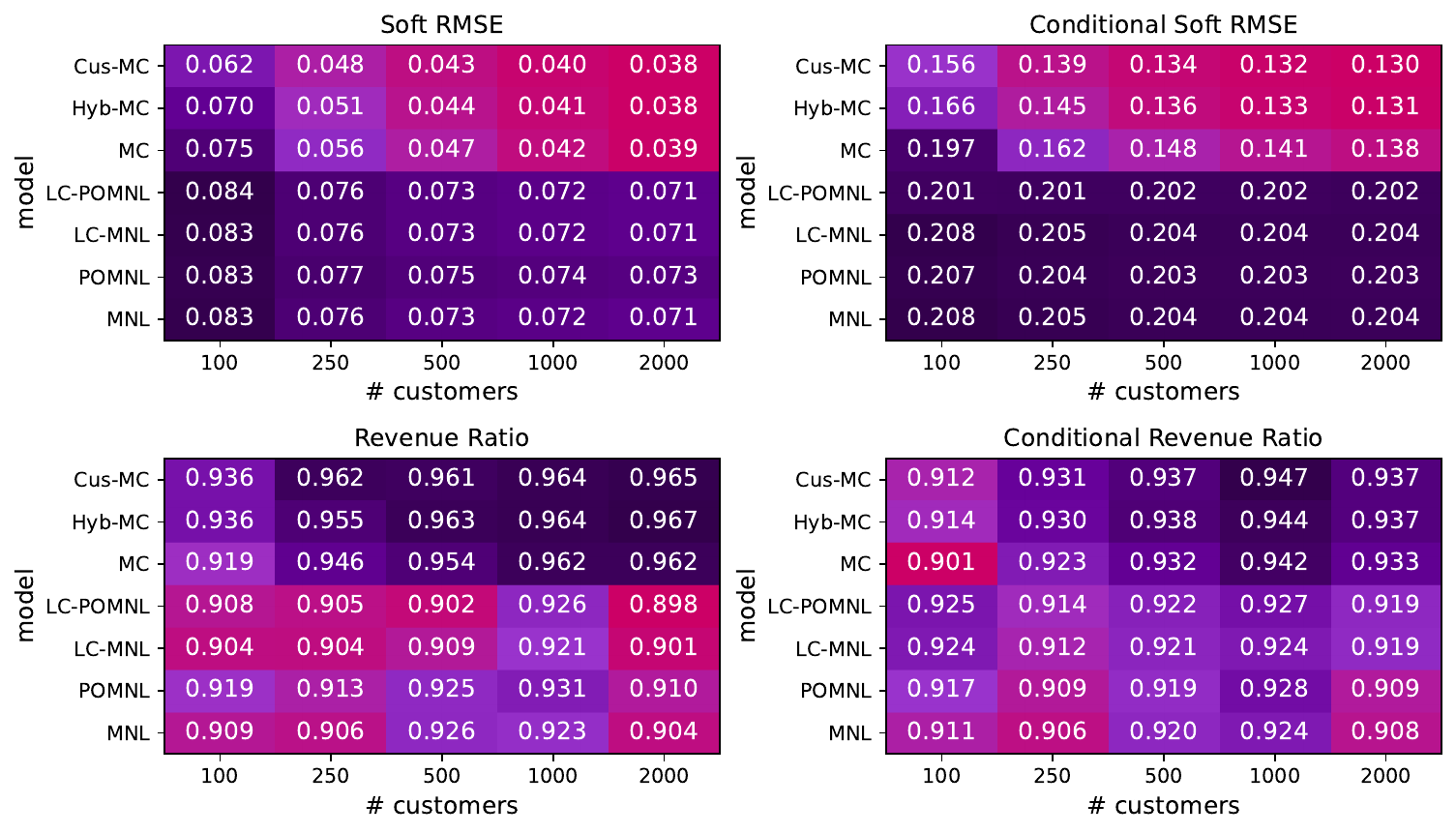}
	\caption{RL-10 Ground Truth} \label{fig:syn-rl-10}
\end{figure}

\begin{figure}[H]
    \centering
    \includegraphics[width=\textwidth]{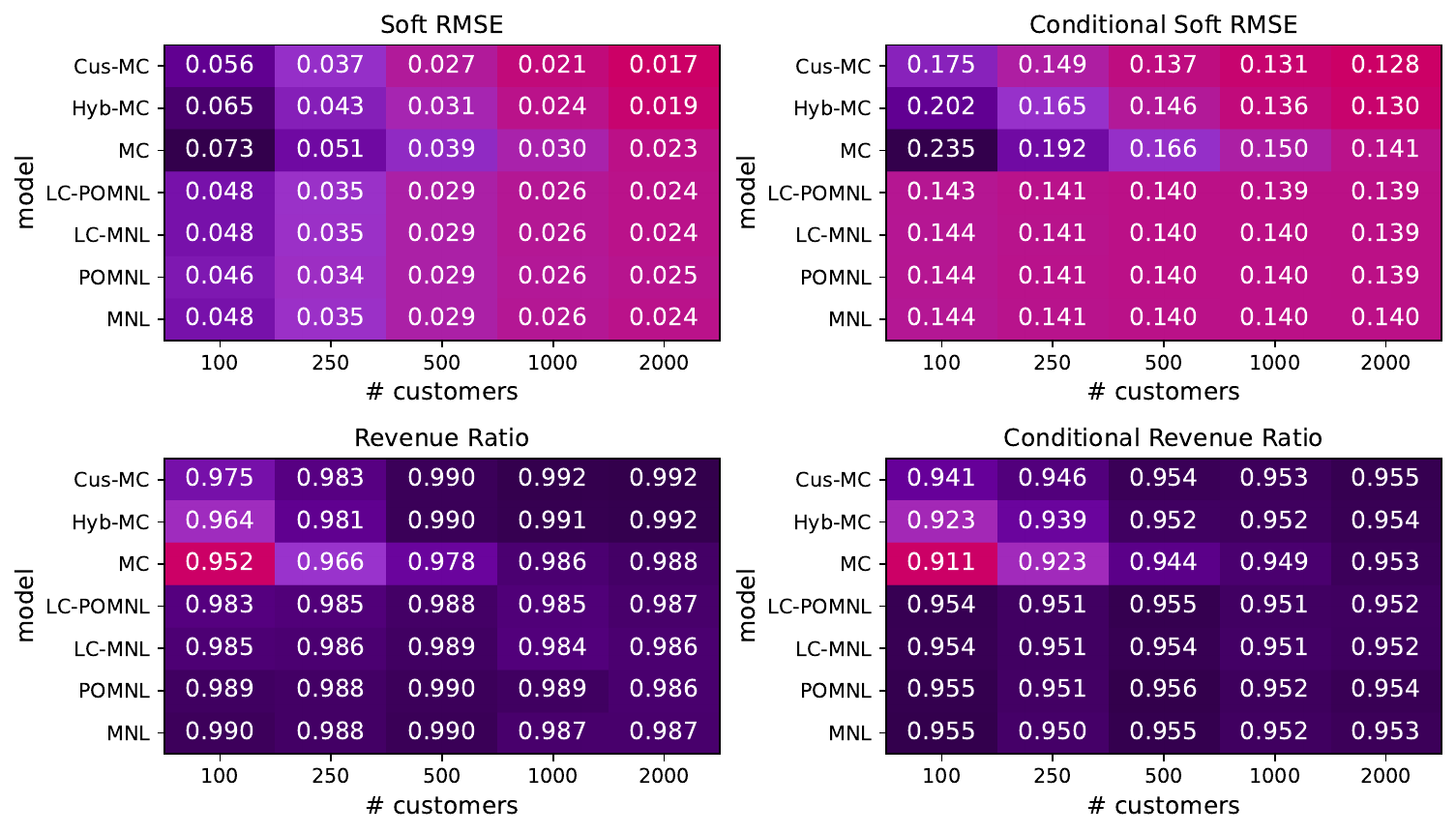}
	\caption{RL-100 Ground Truth} \label{fig:syn-rl-100}
\end{figure}

\begin{figure}[H]
    \centering
    \includegraphics[width=\textwidth]{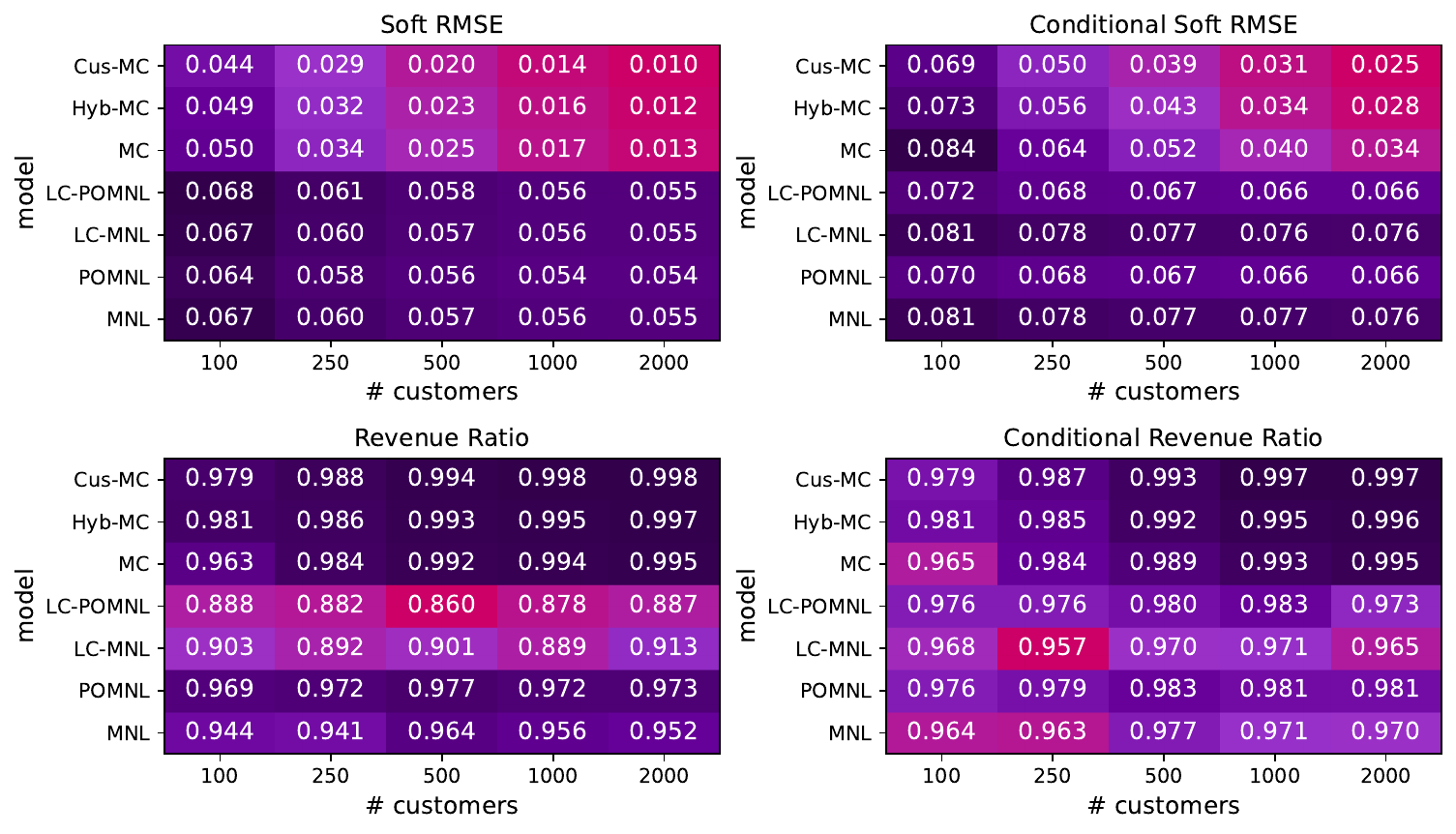}
	\caption{MRL-0.4 Ground Truth} \label{fig:syn-mrl-0.4}
\end{figure}

\begin{figure}[H]
    \centering
    \includegraphics[width=\textwidth]{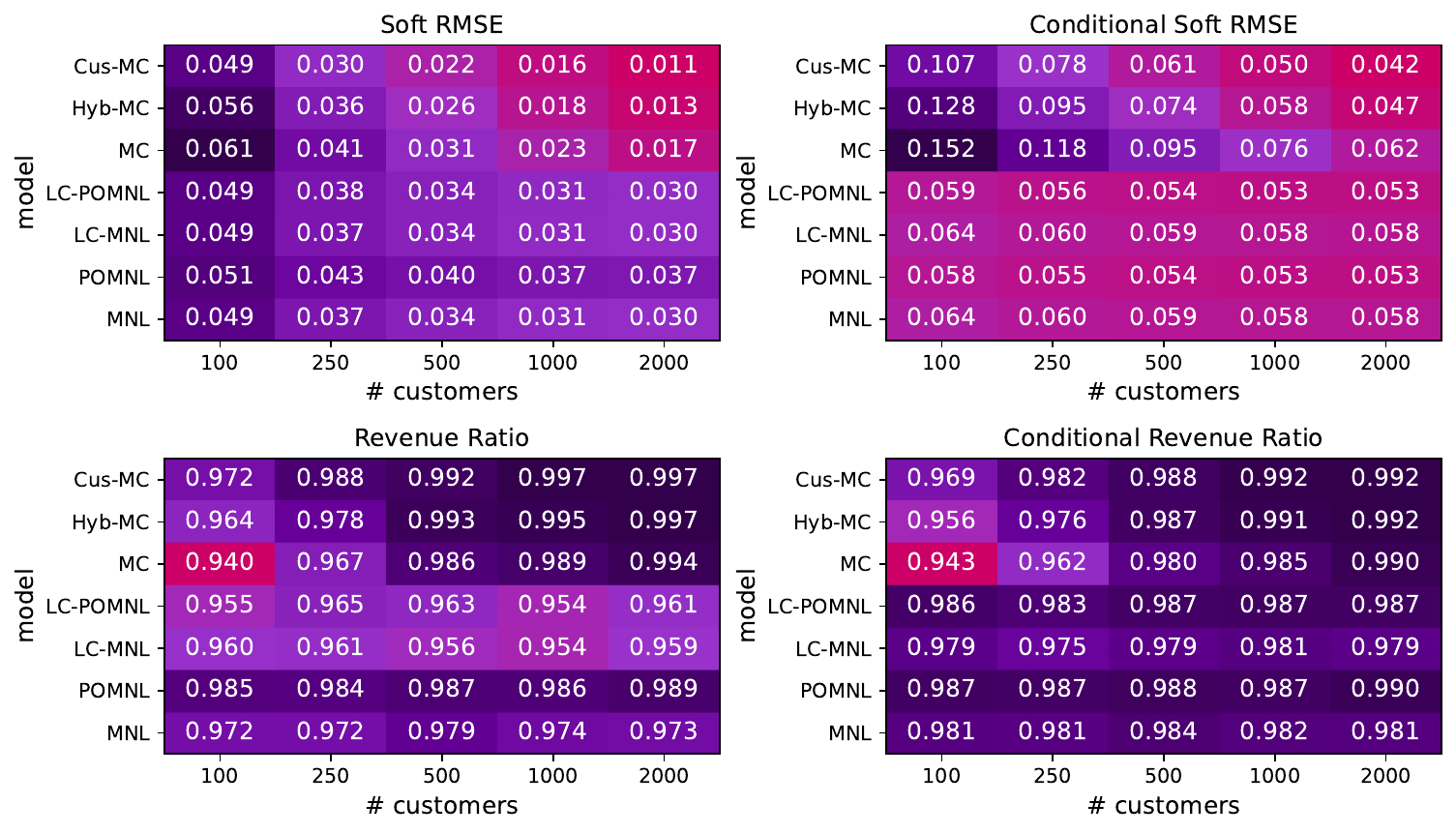}
	\caption{MRL-0.7 Ground Truth} \label{fig:syn-mrl-0.7}
\end{figure}

The experimental results are presented in Figures \ref{fig:syn-rl-10}, \ref{fig:syn-rl-100}, \ref{fig:syn-mrl-0.4}, and \ref{fig:syn-mrl-0.7}. We focus on two comparisons: first, the performance comparison between the MC-based choice models (\mc, \hybmc, and \cusmc) and the MNL-based choice models (\mnl, \pomnl, \lcmnl, and \lcpomnl); second, the improvement margin comparison between the traditional choice model estimation and the one with panel data. To ensure a more granular comparison, we also consider the impact of the training data size (number of customers) on the performance of each choice model estimation while making these comparisons.

In general, the MC-based choice models outperform the MNL-based choice models when the data size gets larger, or the ground truth distribution is more concentrated on a smaller number of preference lists (RL-10 and MRL-0.4), as opposed to spreading out more evenly. When preferences are more evenly distributed (RL-100 and MRL-0.7), MNL-based choice models outperform MC-based choice models when the data size is small in most cases. With large data sizes, MC-based choice models usually better capture the preference list distribution due to information richness.

In most cases, compared with MNL and LC-MNL, estimating an MC results in a greater improvement margin when customer-level information is incorporated into the choice model estimation, especially when the data size is small. Although this observation is based on synthetic experiments, it is theoretically supported in Section \ref{sec:val-cus}. When the ground truth is RL-10 or RL-100, \mc is the state-of-the-art in terms of the soft RMSE performance metric \cite{berbeglia2022comparative}. Our framework improves this result by exploiting individual information where transactions are correlated.

\subsection{Sushi Data} \label{subsec:sushi}

We assess the performance of estimation, prediction, and the impact on revenue using the sushi dataset, which contains preference lists of 5000 individuals \cite{kamishima2003nantonac}. In the commercial web survey used to collect preference data, each individual was first shown on the screen $n=10$ types of sushi. The individuals were asked to rank the sushi types according to their preferences. Each user provides one of the $10!$ permutations. Normalized prices are given from the data and used as the revenue vector. 

We follow a setting similar to \cite{berbeglia2022comparative}. Since the no-sushi option was not available in the survey, the ground truth model is generated by adding the \emph{no-sushi} list that ranks the no-sushi option as the top alternative. The no-sushi list is selected with probability half. With the remaining half, we select one list uniformly at random from the 5000 lists. We assume that an individual would purchase if one of the top $t$ is available. That is, the $(t+1)$-th choice is replaced with the no-sushi option. There are at most $10!/(10-t)!$ possible top $t$ preference lists. We merge the individuals with the same top $t$ lists and sum up the probabilities to construct the Rank List ground truth. We test $t=5$ and $t=7$ and follow the same data generation procedure, choice model estimation, and performance evaluation as in Section \ref{subsec:syn}. The metadata and characteristics of the in-sample transaction data for each ground truth are presented in Table \ref{table:sushi-md}.

\begin{table}[ht]
\centering

\begin{tabular}{lcc}
\hline
ground truth & top 5 & top 7 \\
\hline
average $k_c$ & 1.473 & 1.474 \\
average \#RLE & 1.288 & 1.318 \\
\#lists & 3274 & 4694 \\
\hline
\end{tabular}
\caption{The average $k_c$ and the average \#RLE are the average number of transactions after merging and reduced linear extensions (over 2000 in-sample customers), respectively; the \#lists is the number of distinct preference lists (including the no-sushi list, which occurs with probability 0.5) over 100 instances.}
\label{table:sushi-md}
\end{table}

\subsubsection{Results} \label{subsubsec:sushi-results}

The experimental results are presented in Figures \ref{fig:sushi-5} and \ref{fig:sushi-7}. In general, the MC-based choice models outperform the MNL-based choice models, except for the conditional revenue ratio under the top 5 ground truth. For \hybmc and \cusmc, this performance metric degrades as the data size gets larger. One potential reason is the lack of information to fit an MC when panel data is used.

\begin{figure}[H]
    \centering
    \includegraphics[width=\textwidth]{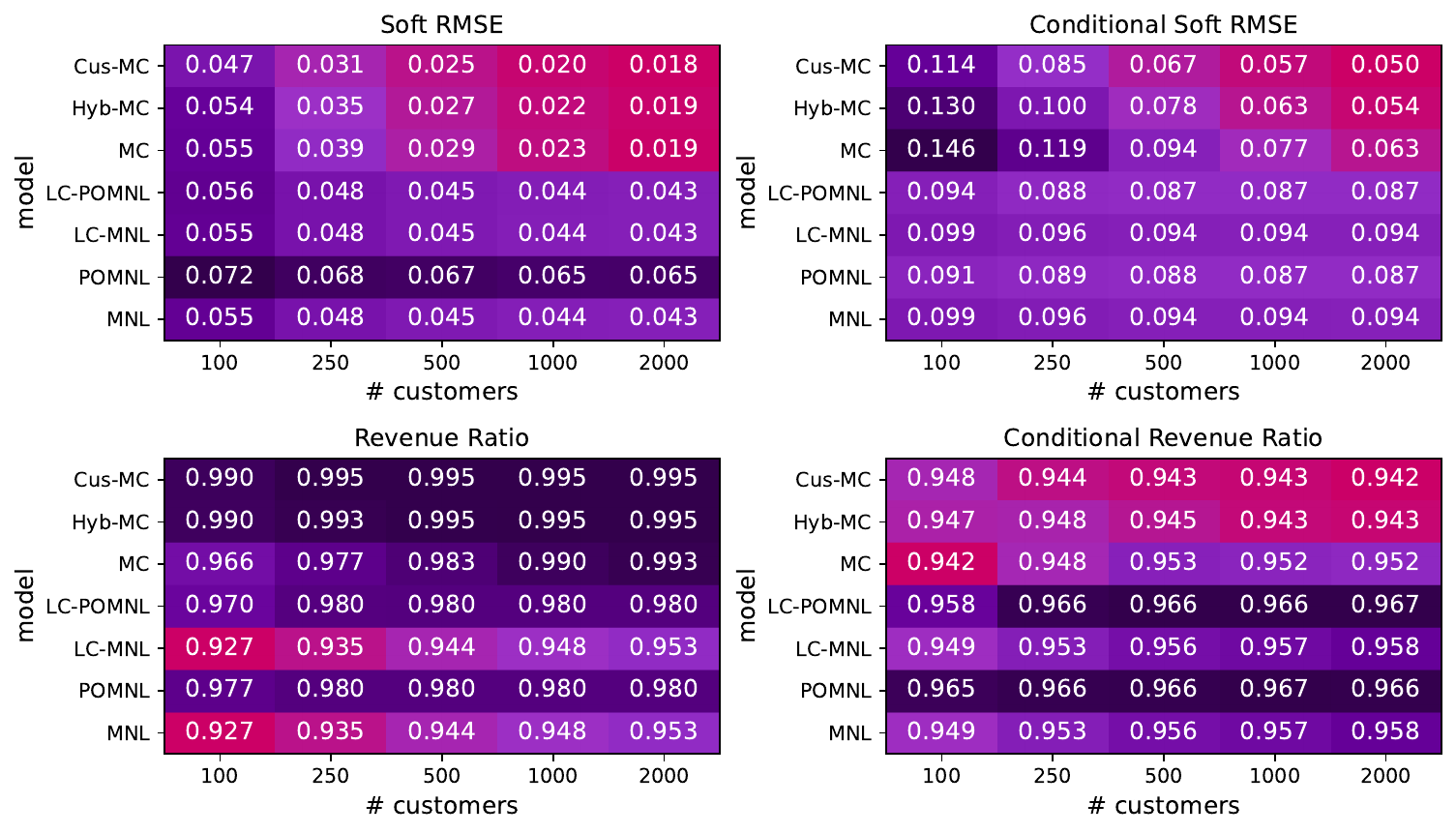}
	\caption{Top 5 Ground Truth} \label{fig:sushi-5}
\end{figure}
\begin{figure}[H]
    \centering
    \includegraphics[width=\textwidth]{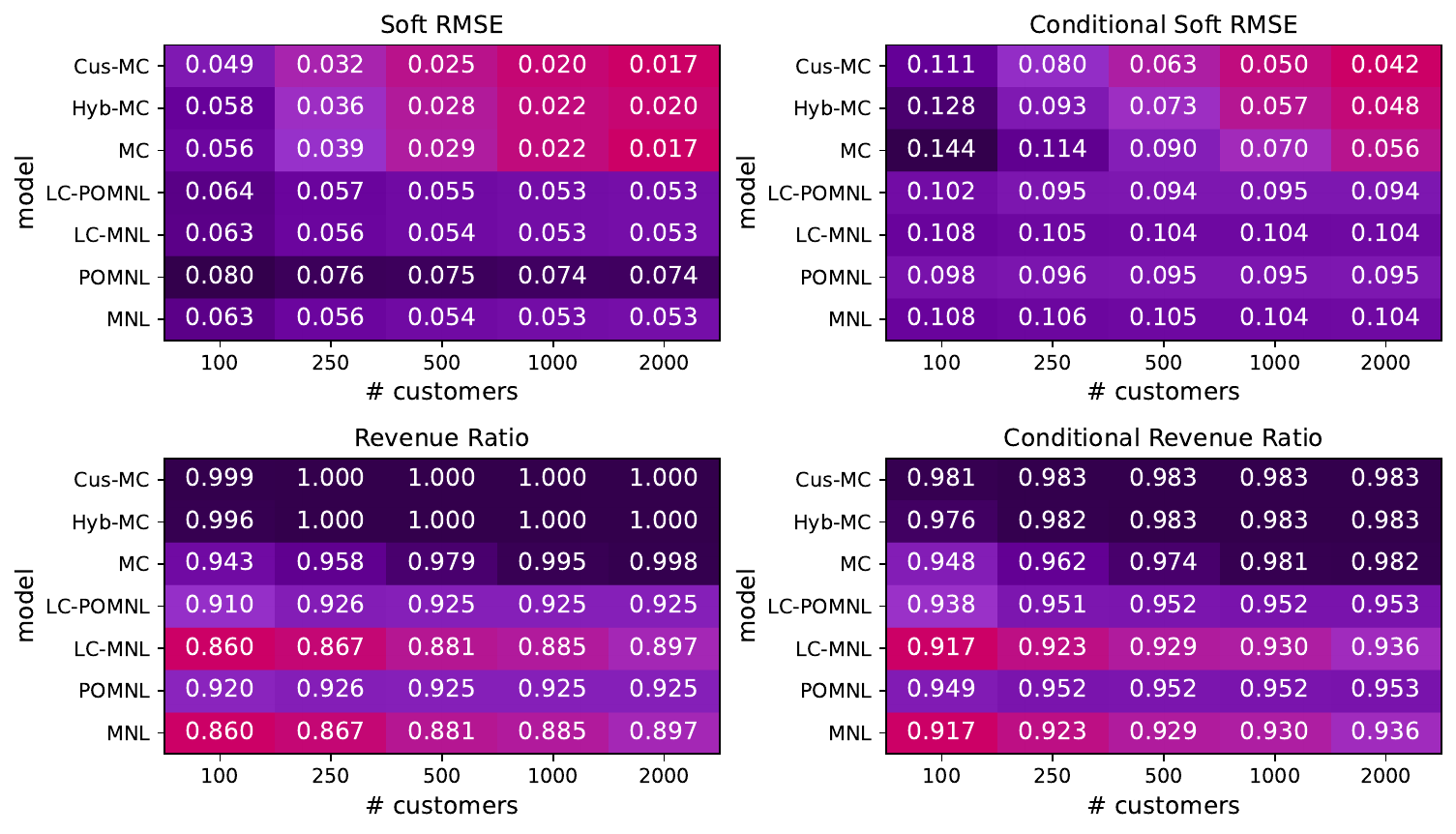}
	\caption{Top 7 Ground Truth} \label{fig:sushi-7}
\end{figure}

Compared with the top-7 truncation, the top-5 truncation might not provide sufficient information to incorporate customer-level information to fit an MC and improve the conditional revenue performance. Transaction information from other customers is not valuable for improving the conditional revenue ratio of the first 100 customers.

Compared with MNL and LC-MNL, estimating an MC results in a greater improvement margin when the choice model estimation uses panel data, especially for the predictive performance metrics, and when the data size is small. The revenue-based performance measures, on the other hand, do not fully reflect that using panel data is more beneficial for MC estimation than MNL and LC-MNL estimation. The factor for the revenue improvement margin for MNL and LC-MNL when panel data is used requires further investigation. In general, for the estimated choice models with panel data (\pomnl, \lcpomnl, \hybmc, and \cusmc), as the number of customers goes beyond 250, the improvement in the revenue-based metrics is marginal. Nevertheless, \hybmc and \cusmc obtain almost optimal expected revenue except for the conditional revenue under the top 5 ground truth.

\section{Conclusion} \label{sec:con}

In this work, we present EM algorithms \hyb and \cus for MC estimation using panel data. Our framework outperforms the traditional EM algorithm for MC estimation \cite{csimcsek2018expectation}, the traditional MNL-based estimation, and the partial-ordering-MNL-based estimation \cite{jagabathula2018partial,jagabathula2022personalized} on synthetic data and the semi-synthetic sushi data. We provide theoretical and empirical evidence that customer-level information, which captures correlations between transactions of the same customer, is more valuable for MC estimation.

Our primary future research direction is to test \hyb on real data, where customer preferences might be cyclic. This requires cycle deletion heuristics to extract a DAG that best describes the customer preference, as proposed in \cite{jagabathula2018partial,jagabathula2022personalized}. One potential advantage of \hyb is that arc-violating transactions, which might provide information for the unconditional population choice distribution, can be included in the independent set for training, instead of being discarded. Other research directions include choice prediction and estimation with panel data for other classes of RUMs, as well as investigating the value of customer-level information under different ground truth RUMs.

\bibliographystyle{alpha}
\bibliography{reference}

@article{jagabathula2022personalized,
  title={Personalized retail promotions through a directed acyclic graph--based representation of customer preferences},
  author={Jagabathula, Srikanth and Mitrofanov, Dmitry and Vulcano, Gustavo},
  journal={Operations Research},
  volume={70},
  number={2},
  pages={641--665},
  year={2022},
  publisher={Informs}
}

@article{blanchet2016markov,
  title={A \uppercase{M}arkov chain approximation to choice modeling},
  author={Blanchet, Jose and Gallego, Guillermo and Goyal, Vineet},
  journal={Operations Research},
  volume={64},
  number={4},
  pages={886--905},
  year={2016},
  publisher={INFORMS}
}

@article{berbeglia2022comparative,
  title={A comparative empirical study of discrete choice models in retail operations},
  author={Berbeglia, Gerardo and Garassino, Agust{\'\i}n and Vulcano, Gustavo},
  journal={Management Science},
  volume={68},
  number={6},
  pages={4005--4023},
  year={2022},
  publisher={Informs}
}

@inproceedings{kamishima2003nantonac,
  title={Nantonac collaborative filtering: recommendation based on order responses},
  author={Kamishima, Toshihiro},
  booktitle={Proceedings of the Ninth ACM SIGKDD International Conference on Knowledge Discovery and Data Mining},
  pages={583--588},
  year={2003}
}

@article{feldman2017revenue,
  title={Revenue management under the \uppercase{M}arkov chain choice model},
  author={Feldman, Jacob B. and Topaloglu, Huseyin},
  journal={Operations Research},
  volume={65},
  number={5},
  pages={1322--1342},
  year={2017},
  publisher={INFORMS}
}

@book{puterman2014markov,
  title={Markov decision processes: discrete stochastic dynamic programming},
  author={Puterman, Martin L.},
  year={2014},
  publisher={John Wiley \& Sons}
}

@article{gallego2004managing,
  title={Managing flexible products on a network},
  author={Gallego, Guillermo and Iyengar, Garud and Phillips, Robert and Dubey, Abhay},
  journal={Available at SSRN 3567371},
  year={2004}
}

@article{talluri2004revenue,
  title={Revenue management under a general discrete choice model of consumer behavior},
  author={Talluri, Kalyan and van Ryzin, Garrett},
  journal={Management Science},
  volume={50},
  number={1},
  pages={15--33},
  year={2004},
  publisher={INFORMS}
}

@article{beggs1981assessing,
  title={Assessing the potential demand for electric cars},
  author={Beggs, Steven and Cardell, Scott and Hausman, Jerry},
  journal={Journal of Econometrics},
  volume={17},
  number={1},
  pages={1--19},
  year={1981},
  publisher={Elsevier}
}

@article{chen2023assortment,
  title={Assortment optimization for the multinomial logit model with repeated customer interactions},
  author={Chen, Ningyuan and Gao, Pin and Wang, Chenhao and Wang, Yao},
  journal={Available at SSRN 4526247},
  year={2023}
}

@article{csimcsek2018expectation,
  title={An expectation-maximization algorithm to estimate the parameters of the \uppercase{M}arkov chain choice model},
  author={{\c{S}}im{\c{s}}ek, A. Serdar and Topaloglu, Huseyin},
  journal={Operations Research},
  volume={66},
  number={3},
  pages={748--760},
  year={2018},
  publisher={INFORMS}
}

@article{nettleton1999convergence,
  title={Convergence properties of the \uppercase{EM} algorithm in constrained parameter spaces},
  author={Nettleton, Dan},
  journal={Canadian Journal of Statistics},
  volume={27},
  number={3},
  pages={639--648},
  year={1999},
  publisher={Wiley Online Library}
}

@book{bishop2006pattern,
  title={Pattern recognition and machine learning},
  author={Bishop, Christopher M.},
  publisher={Springer},
  year={2006}
}

@inproceedings{brightwell1991counting,
  title={Counting linear extensions is \#\uppercase{P}-complete},
  author={Brightwell, Graham and Winkler, Peter},
  booktitle={Proceedings of the Twenty-third Annual ACM Symposium on Theory of Computing},
  pages={175--181},
  year={1991}
}

@incollection{block1974random,
  title={Random orderings and stochastic theories of responses (1960)},
  author={Block, Henry D.},
  booktitle={Economic Information, Decision, and Prediction: Selected Essays: Volume I Part I Economics of Decision},
  pages={172--217},
  year={1974},
  publisher={Springer}
}

@article{desir2020constrained,
  title={Constrained assortment optimization under the \uppercase{M}arkov chain--based choice model},
  author={D{\'e}sir, Antoine and Goyal, Vineet and Segev, Danny and Ye, Chun},
  journal={Management Science},
  volume={66},
  number={2},
  pages={698--721},
  year={2020},
  publisher={INFORMS}
}

@article{desir2022capacitated,
  title={Capacitated assortment optimization: Hardness and approximation},
  author={D{\'e}sir, Antoine and Goyal, Vineet and Zhang, Jiawei},
  journal={Operations Research},
  volume={70},
  number={2},
  pages={893--904},
  year={2022},
  publisher={INFORMS}
}

@article{bradley1952rank,
  title={Rank analysis of incomplete block designs: I. the method of paired comparisons},
  author={Bradley, Ralph Allan and Terry, Milton E.},
  journal={Biometrika},
  volume={39},
  number={3/4},
  pages={324--345},
  year={1952},
  publisher={JSTOR}
}

@article{mcfadden1974conditional,
  title={Conditional Logit Analysis of Qualitative Choice Behavior},
  author={McFadden, Daniel},
  journal={Frontiers in Econometrics},
  year={1974},
  publisher={Academic Press},
  pages={105--142}
}

@book{luce1959individual,
  title={Individual choice behavior: A Theoretical Analysis},
  author={Luce, R. Duncan},
  year={1959},
  publisher={Wiley New York}
}

@article{plackett1975analysis,
  title={The analysis of permutations},
  author={Plackett, Robin L.},
  journal={Journal of the Royal Statistical Society Series C: Applied Statistics},
  volume={24},
  number={2},
  pages={193--202},
  year={1975},
  publisher={Oxford University Press}
}

@article{jagabathula2018partial,
  title={A partial-order-based model to estimate individual preferences using panel data},
  author={Jagabathula, Srikanth and Vulcano, Gustavo},
  journal={Management Science},
  volume={64},
  number={4},
  pages={1609--1628},
  year={2018},
  publisher={Informs}
}

@article{berbeglia2016discrete,
  title={Discrete choice models based on random walks},
  author={Berbeglia, Gerardo},
  journal={Operations Research Letters},
  volume={44},
  number={2},
  pages={234--237},
  year={2016},
  publisher={Elsevier}
}

@article{dempster1977maximum,
  title={Maximum likelihood from incomplete data via the \uppercase{EM} algorithm},
  author={Dempster, Arthur P. and Laird, Nan M. and Rubin, Donald B.},
  journal={Journal of the Royal Statistical Society Series B: Methodological},
  volume={39},
  number={1},
  pages={1--22},
  year={1977},
  publisher={Wiley Online Library}
}

@article{train2008algorithms,
  title={\uppercase{EM} algorithms for nonparametric estimation of mixing distributions},
  author={Train, Kenneth E.},
  journal={Journal of Choice Modelling},
  volume={1},
  number={1},
  pages={40--69},
  year={2008},
  publisher={Elsevier}
}

@article{vulcano2012estimating,
  title={Estimating primary demand for substitutable products from sales transaction data},
  author={Vulcano, Gustavo and van Ryzin, Garrett and Ratliff, Richard},
  journal={Operations Research},
  volume={60},
  number={2},
  pages={313--334},
  year={2012},
  publisher={INFORMS}
}

@article{van2015market,
  title={A market discovery algorithm to estimate a general class of nonparametric choice models},
  author={van Ryzin, Garrett and Vulcano, Gustavo},
  journal={Management Science},
  volume={61},
  number={2},
  pages={281--300},
  year={2015},
  publisher={INFORMS}
}

@article{van2017expectation,
  title={An expectation-maximization method to estimate a rank-based choice model of demand},
  author={van Ryzin, Garrett and Vulcano, Gustavo},
  journal={Operations Research},
  volume={65},
  number={2},
  pages={396--407},
  year={2017},
  publisher={Informs}
}

@article{jagabathula2017nonparametric,
  title={A nonparametric joint assortment and price choice model},
  author={Jagabathula, Srikanth and Rusmevichientong, Paat},
  journal={Management Science},
  volume={63},
  number={9},
  pages={3128--3145},
  year={2017},
  publisher={INFORMS}
}

@article{anupindi1998estimation,
  title={Estimation of consumer demand with stock-out based substitution: An application to vending machine products},
  author={Anupindi, Ravi and Dada, Maqbool and Gupta, Sachin},
  journal={Marketing Science},
  volume={17},
  number={4},
  pages={406--423},
  year={1998},
  publisher={INFORMS}
}

@article{kok2007demand,
  title={Demand estimation and assortment optimization under substitution: Methodology and application},
  author={K{\"o}k, A G{\"u}rhan and Fisher, Marshall L.},
  journal={Operations Research},
  volume={55},
  number={6},
  pages={1001--1021},
  year={2007},
  publisher={INFORMS}
}

@article{conlon2013demand,
  title={Demand estimation under incomplete product availability},
  author={Conlon, Christopher T. and Mortimer, Julie Holland},
  journal={American Economic Journal: Microeconomics},
  volume={5},
  number={4},
  pages={1--30},
  year={2013},
  publisher={American Economic Association}
}

@article{stefanescu2009multivariate,
  title={Multivariate customer demand: modeling and estimation from censored sales},
  author={Stefanescu, Catalina},
  journal={Available at SSRN 1334353},
  year={2009}
}

@article{desir2024robust,
  title={Robust assortment optimization under the \uppercase{M}arkov chain choice model},
  author={D{\'e}sir, Antoine and Goyal, Vineet and Jiang, Bo and Xie, Tian and Zhang, Jiawei},
  journal={Operations Research},
  volume={72},
  number={4},
  pages={1595--1614},
  year={2024},
  publisher={INFORMS}
}

@article{li2025online,
  title={Online learning for constrained assortment optimization under \uppercase{M}arkov chain choice model},
  author={Li, Shukai and Luo, Qi and Huang, Zhiyuan and Shi, Cong},
  journal={Operations Research},
  volume={73},
  number={1},
  pages={109--138},
  year={2025},
  publisher={Informs}
}

@article{gupta2020parameter,
  title={Parameter identification in \uppercase{M}arkov chain choice models},
  author={Gupta, Arushi and Hsu, Daniel},
  journal={Theoretical Computer Science},
  volume={808},
  pages={99--107},
  year={2020},
  publisher={Elsevier}
}

@article{mallows1957non,
  title={Non-null ranking models. \uppercase{I}},
  author={Mallows, Colin L.},
  journal={Biometrika},
  volume={44},
  number={1/2},
  pages={114--130},
  year={1957},
  publisher={JSTOR}
}

@article{desir2016assortment,
  title={Assortment optimization under the \uppercase{M}allows model},
  author={D{\'e}sir, Antoine and Goyal, Vineet and Jagabathula, Srikanth and Segev, Danny},
  journal={Advances in Neural Information Processing Systems},
  volume={29},
  year={2016},
  pages={4707--4715}
}

@inproceedings{lovasz2006fast,
  title={Fast algorithms for log-concave functions: sampling, rounding, integration and optimization},
  author={Lov{\'a}sz, L{\'a}szl{\'o} and Vempala, Santosh},
  booktitle={2006 47th Annual IEEE Symposium on Foundations of Computer Science (FOCS'06)},
  pages={57--68},
  year={2006},
  organization={IEEE}
}

\newpage

\appendix

\section{Hardness and Computational Results} \label{sec:hnc}

In Section \ref{subsec:hard-fpras}, we establish \#P-hardness and randomized approximation schemes for computing the transaction probability under MNL. In Section \ref{subsec:ao}, we show hardness and computational results for the conditional assortment problems. As MNL is a special case of MC, the hardness results for MNL also hold for MC.

\subsection{Computing the Transaction Probability under MNL} \label{subsec:hard-fpras}

\subsubsection{\#P-hardness} \label{subsubsec:hard}

In this section, we show that computing the probability $\pi_\cD$ in \eqref{eq:def-cD} and the conditional probability $\pi_\cD(j,S)$ in \eqref{eq:cond-prob} are both \#P-hard under MNL. These results provide theoretical guarantees and necessitate heuristics to approximate the transaction probability \cite{jagabathula2018partial,jagabathula2022personalized}. In contrast to \cite{jagabathula2018partial,jagabathula2022personalized}, where the time complexity depends on the number of complete linear extensions, we consider reduced linear extensions, which could potentially improve computation time. We note that although these problems are hard in general, they are computationally tractable when $k! \in \poly(n)$.

We reduce the problem of counting the number of linear extensions \cite{brightwell1991counting} to the problem of computing the (conditional) transaction probability.

\begin{restatable}{theorem}{thmsp} \label{thm:sp}
Computing $\pi_\cD$ in \eqref{eq:def-cD} is \#P-hard under MNL.
\end{restatable}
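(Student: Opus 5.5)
We reduce from counting linear extensions of a poset, which is \#P-complete \cite{brightwell1991counting}. The key observation is that an MNL in which all attraction values are equal makes every complete ranking of $[n]_+$ equally likely. Then \eqref{eq:prob-cle} turns $\pi_\cD$ into a count of complete linear extensions, scaled by a known factor.

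Let $P=([m],\prec_P)$ be an arbitrary poset, given by its cover relations or its full relation. We build an MNL instance with $n=m$ products and $v_j=1$ for all $j\in[n]_+$. For each element $a\in[m]$ we add one transaction $(j_a,S_a)$ with $j_a=a$ and $S_a=\{a\}\cup\{b \mid a\succ_P b\}$.

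This set of transactions has the following properties.
\begin{itemize}
\item Each product is chosen in exactly one transaction, so the convention $j_\ell\neq j_{\ell'}$ holds without merging.
\item A transaction for a sink $a$ of $P$ is simply $(a,\{a\})$. It still carries information, namely $u_a>u_0$.
\item Since $0\in {S_a}_+$ for every $a$, the DAG of Section \ref{subsec:dag} has arcs $(a,b)$ for $a\succ_P b$ and $(a,0)$ for every $a\in[m]$.
\item Because $P$ is acyclic, this DAG is acyclic, so $\cD$ is consistent with some strict ranking.
\end{itemize}

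Next we identify the complete linear extensions $\sigma\sim_c\cD$. Every product is forced above $0$, so $0$ must be ranked last in any such $\sigma$. The first $m$ positions then form an ordering of $[m]$ that respects every arc $a\to b$ with $a\succ_P b$. Arcs from the DAG construction are exactly the relations of $P$ (up to transitive ones, which are implied anyway). Hence there is a bijection between complete linear extensions consistent with $\cD$ and linear extensions of $P$, and the number of such $\sigma$ equals $e(P)$.

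Under MNL with $v_j=1$, the utilities $u_j=\varepsilon_j$ are i.i.d.\ continuous (standard Gumbel). By exchangeability, every strict ranking of $[n]_+$ has probability $\pi_\sigma=1/(m+1)!$. By \eqref{eq:prob-cle},
\[
\pi_\cD=\frac{e(P)}{(m+1)!}.
\]
Equivalently, one can check this directly from \eqref{eq:prob-rle-mnl} with $v\equiv 1$, since each factor becomes $1/|{U^\sigma_s}_+|$.

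It follows that an oracle computing $\pi_\cD$ yields $e(P)=(m+1)!\,\pi_\cD$ with one multiplication of polynomial bit-length. The instance has $m$ products, $m$ transactions of total size $O(m^2)$, and unit rational parameters, so the reduction is polynomial. This proves \#P-hardness.

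The step needing the most care is handling the no-purchase option. If it were left unconstrained, $0$ could float anywhere below the chosen products, and the count would become a weighted count of extensions of $P$ augmented with $0$ rather than $e(P)$. Adding the singleton transactions $(a,\{a\})$ for sinks, which also happens automatically for every $a$ because $0\in{S_a}_+$, pins $0$ to the bottom and makes the bijection exact.
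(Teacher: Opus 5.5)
Your proof is correct, and it takes a cleaner route than the paper's. The paper uses the same transactions ($j_\ell=\ell$, and $S_\ell$ consisting of $\ell$ together with everything reachable from $\ell$). It then sets $v_j=1$ for $j\in[n]$ but $v_0=n^n$, and only bounds each reduced-extension probability from \eqref{eq:prob-rle-mnl} inside the interval $[(n^n+n)^{-n},(n^n+1)^{-n}]$. From there it must show that the intervals $I_c$ are disjoint for distinct counts $c\le n!$. That step needs a binomial estimate valid only for $n\ge 5$, with small $n$ handled by brute force, and the count is finally recovered by binary search.

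Your choice $v\equiv 1$ makes all $(m+1)!$ rankings of $[n]_+$ equiprobable. Then \eqref{eq:prob-cle} gives the exact identity $\pi_\cD=e(P)/(m+1)!$, so you need no separation argument and no restriction on $n$. The argument is also a pure symmetry-and-counting one, so it applies verbatim to any RUM with i.i.d.\ continuous utilities.

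Your cross-check through the reduced-extension formula also shows that the paper's interval step is avoidable even with its own weights. For any $\sigma\sim_r\cD$ in this construction, $S_{\sigma(\ell)}\subseteq\{\sigma(\ell),\dots,\sigma(k)\}$, so $|U^\sigma_s|=k-s+1$ is the same for every $\sigma$. Every reduced extension therefore has the identical probability $\prod_{i=1}^{n}(n^n+i)^{-1}$. A disjoint-interval argument like the paper's is only needed when per-extension probabilities genuinely differ across $\sigma$, which they do not here.
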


\begin{proof}
    We reduce the problem of counting the number of linear extensions of a DAG, which is \#P-hard \cite{brightwell1991counting}, to the problem of computing $\pi_\cD$ under MNL.

    Suppose the DAG is $D = (V, A)$ with $V = [n]$, i.e., there are $n$ vertices labeled 1, 2, ..., $n$. We construct $\cD$ as follows. Let $[n]$ be the product set. Let the number of transactions $k=n$ and let $(j_\ell, S_\ell)$ be such that $j_\ell = \ell$ and $S_\ell = \{\ell\} \cup \{\ell' \mid \text{there is an $\ell \leadsto \ell'$ path in $D$}\}$. Let the attraction value of no-purchase $v_0 = n^n$.\footnote{Note that encoding $n^n$ only requires $n \log n$ bits, which is polynomial in the input size.} Let the attraction value of other products $v_j = 1$ for all $j \in [n]$.

    We claim that when $n \ge 5$, if $\pi_\cD$ can be computed in polynomial time, then the number of linear extensions of $D$ can be computed in polynomial time. We do not consider $n < 5$ since a brute-force algorithm can compute the number of linear extensions.

    Since each $i \in [n]$ appears to be chosen from $S_i$ in $\cD$, a reduced linear extension of $\cD$ ranks the products in $[n]$. Although the no-purchase option is not ranked in the reduced linear extension, it is the least preferred according to $\cD$. Furthermore, from the construction of $\cD$, $\sigma$ is a linear extension of $D$ if and only if $\sigma \sim_r \cD$. This defines a one-to-one mapping between a linear extension of $D$ and a reduced linear extension of $\cD$. Following \eqref{eq:prob-rle-mnl}, the probability of having a specific reduced linear extension $\sigma \sim_r \cD$ is
    \[\pi^\cD_\sigma = \prod_{i=1}^n \frac{1}{n^n + \left|\cup_{\ell=i}^n S_{j_{\sigma(\ell)}}\right|} \in \left[\frac{1}{\left(n^n+n\right)^n}, \frac{1}{\left(n^n+1\right)^n}\right].\]
    Let $c$ be the number of linear extensions of $D$, this implies that
    \[\pi_\cD \in \left[\frac{c}{\left(n^n+n\right)^n}, \frac{c}{\left(n^n+1\right)^n}\right].\]

    Let $I_c := \left[\frac{c}{\left(n^n+n\right)^n}, \frac{c}{\left(n^n+1\right)^n}\right]$ be an interval. Observe that $c \le n!$. We argue that $I_c \cap I_{c'} = \varnothing$ for any $c, c' \in [n!]$ with $c \ne c'$. That is, each interval $I_c$ is disjoint for $c \in [n!]$. Knowing $\pi_\cD$ implies that a binary search on $c$ can be used to retrieve the interval $I_c$ in polynomial time. Therefore, it suffices to show that
    \begin{equation} \label{ineq:thm-pi-cD}
        \frac{c}{(n^n+1)^n} < \frac{c+1}{(n^n+n)^n} \quad \forall c \in [n!].
    \end{equation}
    Equation \eqref{ineq:thm-pi-cD} is equivalent to
    \[\left(1+\frac{n-1}{1+n^n}\right)^n < 1 + \frac{1}{c} \quad \forall c \in [n!].\]
    We show the strictest inequality when $c = n!$:
    \begin{align*}
    \left(1+\frac{n-1}{1+n^n}\right)^n &\le \left(1+\frac{n}{n^n}\right)^n \le \left(1+\frac{1}{n^{n-1}}\right)^n = \sum_{q=0}^n {n \choose q} \frac{1}{n^{q(n-1)}}\\
    &= 1 + \frac{n}{n^{n-1}} + \frac{n(n-1)}{2!n^{2(n-1)}} + \frac{n(n-1)(n-2)}{3!n^{3(n-1)}} + ... + \frac{n}{n^{(n-1)(n-1)}} + \frac{1}{n^{n(n-1)}} \\
    &\le 1 + \frac{1}{n^{n-2}} + \frac{n^2}{n^{2(n-1)}} + \frac{n^3}{n^{3(n-1)}} + ... + \frac{n^{n-1}}{n^{(n-1)(n-1)}} + \frac{n^n}{n^{n(n-1)}} \\
    &\le \sum_{q=0}^\infty \left(\frac{1}{n^{n-2}}\right)^q = 1 + \frac{1}{n^{n-2}-1} < 1 + \frac{1}{n!}
\end{align*}
where the last inequality holds whenever $n \ge 5$.
\end{proof}

\begin{restatable}{theorem}{thmconsp} \label{thm:sp-con}
    Computing $\pi_\cD(j,S)$ in \eqref{eq:cond-prob} is \#P-hard under MNL.
\end{restatable}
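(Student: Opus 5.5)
We reduce the computation of $\pi_\cD$ (which is \#P-hard under MNL by Theorem \ref{thm:sp}) to polynomially many evaluations of conditional probabilities $\pi_{\cD'}(j,S)$. The key identity is the chain rule. Order the transactions of $\cD=\{(j_\ell,S_\ell)\mid \ell\in[k]\}$ arbitrarily and let $\cD_\ell:=\{(j_1,S_1),\dots,(j_\ell,S_\ell)\}$, with $\cD_0=\varnothing$. By \eqref{eq:cond-prob} and the observation that $\cE_{\cD_{\ell-1}}\cap\cE(j_\ell,S_\ell)=\cE_{\cD_\ell}$,
\[
\pi_\cD=\prod_{\ell=1}^{k}\frac{\pi_{\cD_\ell}}{\pi_{\cD_{\ell-1}}}=\prod_{\ell=1}^{k}\pi_{\cD_{\ell-1}}(j_\ell,S_\ell),
\]
where the $\ell=1$ factor is the unconditional MNL probability $\pi(j_1,S_1)=v_{j_1}/V({S_1}_+)$. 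Each denominator $\pi_{\cD_{\ell-1}}$ is strictly positive, because $\cD$ is consistent with a strict ranking and MNL puts positive mass on every ranking. Hence an oracle for $\pi_{\cD'}(j,S)$ yields $\pi_\cD$ with $k\le n$ calls and $k$ multiplications. This is a polynomial-time Turing reduction, which suffices for \#P-hardness.

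To be concrete, I would apply this directly to the instance in the proof of Theorem \ref{thm:sp}: $v_0=n^n$, $v_j=1$, $j_\ell=\ell$, and $S_\ell$ equal to the set of vertices reachable from $\ell$ in $D$. Every prefix $\cD_\ell$ is a consistent transaction set, and the queried pairs $(j_\ell,S_\ell)$ are transactions of that same instance, so every oracle query is a legitimate instance of the problem.

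The main point requiring care is bit complexity. The oracle returns rationals, and the product of $k$ of them must have polynomial size. Under MNL with these integer weights, every $\pi_{\cD'}$ is a sum of at most $n!$ terms of the form \eqref{eq:prob-rle-mnl}, each with denominators bounded by $(n^n+n)^n$. So each conditional probability is a rational with polynomially many bits, and the product stays polynomial-size. Once $\pi_\cD$ is recovered exactly, the binary-search argument over the disjoint intervals $I_c$ from the proof of Theorem \ref{thm:sp} returns the number of linear extensions of $D$. If one prefers a single-query reduction, a fallback is to add a fresh product that always acts as the chosen item, but the chain-rule Turing reduction is the route I would take.
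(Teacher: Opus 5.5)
Your proposal is correct. It uses the same mechanism as the paper's proof: telescope \eqref{eq:cond-prob} so that an oracle for conditional probabilities yields $\pi_\cD$ exactly, then invoke Theorem \ref{thm:sp}. The difference is which end you anchor the telescope at.

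\textbf{The paper's route.} It starts from $\cD$ and adds artificial pairwise transactions $(j_{k+i},\{j_{k+i},j'_{k+i}\})$ until the reduced linear extension is unique. It then evaluates $\pi_{\cD_p}$ in closed form via \eqref{eq:prob-rle-mnl} and recovers
\[
\pi_\cD=\pi_{\cD_p}\Big/\prod_{i=1}^{p}\pi_{\cD_{i-1}}(j_{k+i},\{j_{k+i},j'_{k+i}\}).
\]

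\textbf{Your route.} You peel $\cD$ down to the empty history, where $\pi_{\varnothing}=1$. So you need no artificial transactions, no argument that the set of extensions shrinks to a singleton, and no closed-form evaluation at the end. You use exactly $k$ oracle calls, all on prefixes of $\cD$. Your positivity argument (MNL puts mass on every ranking) and your bit-length bound (common denominator dividing $\prod_{m=1}^{n}(n^n+m)^n$) fill in details the paper leaves implicit.

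\textbf{What the paper's route buys.} Every queried assortment in its reduction has size two. It therefore shows hardness persists even when the query $(j,S)$ is a single pairwise comparison, whereas your queries use the original offer sets $S_\ell$.

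\textbf{The fallback.} The ``fresh product'' single-query idea is too vague to evaluate as written. It is also unnecessary, since the Turing reduction already establishes \#P-hardness.
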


\begin{proof}
    We show that under MNL, if $\pi_\cD(j,S)$ in \eqref{eq:cond-prob} can be computed in polynomial time, then $\pi_\cD$ in \eqref{eq:def-cD} can be computed in polynomial time.

    Given a transaction set $\cD$ under MNL, we sequentially add artificial transactions $(j_\ell, \{j_\ell, j'_\ell\})$ for $\ell=k+1, k+2, ..., k+p-1, k+p$ to $\cD$, until $\cD \cup \{(j_\ell, \{j_\ell, j'_\ell\}) \mid \ell=k+1, ..., k+p\}$ has a unique reduced linear extension, which is a strict ranking for $\{j \mid j \in [n]_+ \text{ such that } u_j > u_0\}$. Since the no-purchase option is always available, we do not know the ranking for products less preferred than no-purchase. Let $\cD_0 = \cD$ and $\cD_i = \cD_0 \cup \{(j_{k+\ell}, \{j_{k+\ell}, j'_{k+\ell}\}) \mid \ell \in [i]\}$ such that $\{\sigma \mid \sigma \sim_r \cD_i\} \subsetneq \{\sigma \mid \sigma \sim_r \cD_{i-1}\}$ for $i \in [p]$. Let $\sigma \sim_r \cD_p$ be the unique reduced linear extension. Using \eqref{eq:prob-rle-mnl}, $\pi^{\cD_p}_\sigma = \pi_{\cD_{p}}$ can be computed in polynomial time.

    From \eqref{eq:cond-prob}, we have that
    \begin{equation} \label{eq:thm-sp-con}
        \pi_{\cD_{i-1}}(j_{k+i}, \{j_{k+i}, j'_{k+i}\}) = \frac{\Pr[\cE_{\cD_i}]}{\Pr[\cE_{\cD_{i-1}}]} = \frac{\pi_{\cD_i}}{\pi_{\cD_{i-1}}} \quad \forall i \in [p].
    \end{equation}
    To compute $\pi_\cD = \pi_{\cD_0}$, observe that
    \[\pi_{\cD_{p}} = \pi_{\cD} \prod_{i=1}^{p} \frac{\pi_{\cD_i}}{\pi_{\cD_{i-1}}}.\]
    This implies that if \eqref{eq:thm-sp-con} can be computed in polynomial time, then $\pi_\cD$ can also be computed in polynomial time. From Theorem \ref{thm:sp}, computing $\pi_\cD(j,S)$ in \eqref{eq:cond-prob} is also \#P-hard under MNL.
\end{proof}

\subsubsection{Randomized Approximation Schemes}

To complement the \#P-hardness results in Section \ref{subsubsec:hard}, we present randomized approximation schemes for computing the probability $\pi_\cD$ in \eqref{eq:def-cD} and the conditional probability $\pi_\cD(j,S)$ in \eqref{eq:cond-prob} under MNL. When the attraction ratio is polynomially bounded, our results imply fully polynomial-time randomized approximation schemes (FPRAS) for these problems.

For notation brevity, let $v_{\max} := \max_{j \in [n]_+} v_j$ and $v_{\min} := \min_{j \in [n]_+} v_j$, and recall that $v_j > 0$ for all $j \in [n]_+$. Let the attraction ratio $R:= v_{\max} / v_{\min}$.

\begin{restatable}{theorem}{thmfpras} \label{thm:fpras} Under MNL, for any $\ep, \delta \in (0,1)$, there exists a randomized algorithm that returns an estimate $\hat{\pi}_{\cD}$ for
$\pi_\cD$ in \eqref{eq:def-cD}, satisfying $\Pr\left[\hat{\pi}_{\cD} \in [(1-\ep)\pi_{\cD}, (1+\ep)\pi_{\cD}]\right] \ge 1 - \delta,$ with running time polynomial in $n,R,1/\ep$, $\log(1/\delta)$, and the encoding length of $v$. If $R \in \poly(n)$, then $\pi_\cD$ admits an FPRAS.
\end{restatable}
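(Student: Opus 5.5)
The plan is to write $\pi_\cD$ as the integral of a log-concave function over a convex polyhedron, and then apply the randomized integration algorithm for log-concave functions of \cite{lovasz2006fast}. Under MNL, $u_j = \mu_j + \ep_j$ with $\mu_j = \log v_j$ and independent standard Gumbel shocks. Each Gumbel shock has density $g(x) = \exp(-x - e^{-x})$, and $\log g(x) = -x - e^{-x}$ is concave. Hence the joint density
\[
f(u) = \prod_{j \in [n]_+} g(u_j - \mu_j)
\]
is log-concave on $\R^{n+1}$. By the DAG representation of Section \ref{subsec:dag}, the event $\cE_\cD$ is the open polyhedral cone
\[
K = \{u : u_{j_\ell} > u_i \ \ \forall \ell \in [k],\ i \in {S_\ell}_+ \setminus \{j_\ell\}\},
\]
which has polynomially many linear constraints. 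Therefore
\[
\pi_\cD = \int_{\R^{n+1}} f(u)\,\mathbbm{1}[u \in K]\,du
\]
is the integral of the log-concave function $f\cdot\mathbbm{1}_K$, which has an efficient membership and evaluation oracle.

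The steps, in order, are as follows.

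First, I normalize by shifting so that $v_{\min} = 1$. Then $\mu_j \in [0, \log R]$ for all $j$.

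Second, I truncate $K$ to a box $B = [-M, M]^{n+1}$ with $M = O(\log R + \log(n/\ep))$. The Gumbel tails are doubly exponential on the left and exponential on the right, so the mass lost is $\Pr[\exists j: |\ep_j| > M - \log R]$. I will compare this loss to a lower bound on $\pi_\cD$ itself. That lower bound comes from picking any single linear extension $\sigma$ and using \eqref{eq:prob-rle-mnl}, which gives $\pi_\cD \ge \pi^\cD_\sigma \ge (1/((n+1)R))^{k}$. Choosing $M = O(n\log(nR) + \log(1/\ep))$ then makes the truncation error at most $(\ep/2)\pi_\cD$, and this choice still has polynomial bit length.

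Third, I exhibit a point deep inside $K \cap B$ at which the density is not too small relative to its maximum. I take the point $u^\ast$ that realizes a fixed linear extension, with consecutive gaps of size $1$; then a ball of radius $1/2$ around $u^\ast$ lies in $K$. The ratio $f(u^\ast)/\max f$ is at least $\exp(-\poly(n, \log R))$. These facts supply the "well-guaranteed" input needed by \cite{lovasz2006fast}, namely an inner ball, an outer radius, and a bound on the density ratio.

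Fourth, I invoke the Lovász--Vempala integration algorithm. It returns a $(1\pm\ep/2)$-approximation of the truncated integral with probability $1-\delta$. Its running time is polynomial in the dimension $n+1$, $1/\ep$, $\log(1/\delta)$, and the logarithms of the radius ratio and density ratio. Combining this with the truncation step gives the stated guarantee.

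I expect the main obstacle to be the third step together with matching the exact hypotheses of \cite{lovasz2006fast}. The level sets of $f\cdot\mathbbm{1}_K$ may be very thin, since $K$ can be a narrow cone when $k$ is large, and the algorithm's cost depends on a rounding parameter. I would first apply the algorithm's own rounding phase, feeding it the explicit inner ball around $u^\ast$ and the outer box $B$, and check that all parameters are bounded by $\poly(n, R, 1/\ep)$. This is where the polynomial dependence on $R$, as opposed to $\log R$, is allowed in the statement. When $R \in \poly(n)$, all of these quantities are polynomial in $n$, $1/\ep$ and $\log(1/\delta)$, so the algorithm is an FPRAS. If the integration route proves awkward, my fallback is a telescoping product: add the pairwise constraints of $\cD$ one at a time, and estimate each ratio by MCMC sampling from $f$ restricted to the current cone. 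Each such ratio is bounded below by roughly $1/(1+R)$, which is again where $R$ enters polynomially.
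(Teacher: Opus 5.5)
Your plan is correct in outline and shares the paper's skeleton: lower-bound $\pi_\cD$ by one linear extension, truncate to a box, exhibit inner and outer balls, and invoke log-concave integration from \cite{lovasz2006fast}. The difference is the coordinates. You integrate the Gumbel density in utility space, whereas the paper first passes to the exponential race $T_j\sim\Exp(v_j)$ of Claim~\ref{cl:ex-gum}, i.e.\ $t_j=e^{-u_j}$. That change of variables is what lets the paper use a crude black box. The density $\prod_j v_je^{-v_jt_j}$ is log-affine on a polyhedron in the nonnegative orthant, and the right Gumbel tail is absorbed into the constraint $t_j\ge 0$. Only the cut $t_j\le B$ (equivalently $u_j\ge-\ln B$) is needed, and the log-range over all of $K_B$ is at most $(n+1)RB$. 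Theorem~\ref{thm:lv-logconcave} is then applied with polynomial dependence on this global log-range $\Delta$; that, not the rounding step, is where the polynomial dependence on $R$ comes from.

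In your coordinates the density is log-concave but not log-affine. On the symmetric box $[-M,M]^{n+1}$ (where $M=\Omega(n)$ is needed just to contain your inner ball), the log-range is at least of order $e^{M}$, i.e.\ exponential, so Theorem~\ref{thm:lv-logconcave} as stated in the paper would not apply. You therefore genuinely need the warm-start form of the Lov\'asz--Vempala guarantees, with only $\log(1/\beta)$ dependence on $\beta=f(u^\ast)/\max f$. That form is closer to what \cite{lovasz2006fast} proves. Since all your parameters are then polynomial in $n$, $\log R$ and $\log(1/\ep)$, your route would even give polylogarithmic dependence on $R$. Alternatively, cutting asymmetrically at $u_j\ge-\ln B$ brings the log-range down to $O(nRB)$ and recovers the paper's bound with its black box.

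For your third step to hold, you must place $u^\ast$ to the right of all means. For example, with $v_{\min}=1$ (so $\mu_j\in[0,\log R]$) and a complete linear extension $\sigma$, take $u^\ast_{\sigma(s)}=\log R+(n+2-s)$; then $\log\bigl(f(u^\ast)/\max f\bigr)\ge-(n+1)(n+\log R+2)$. If instead a coordinate of $u^\ast$ sits a distance $a$ below its $\mu_j$, the left Gumbel tail contributes about $-e^{a}$. A centered placement would thus make $\beta$ doubly exponentially small. Two smaller points:
\begin{itemize}
\item Handle the cyclic case ($\pi_\cD=0$) before picking a linear extension.
\item Discard your first choice $M=O(\log R+\log(n/\ep))$. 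Since $\pi_\cD$ can be as small as $((n+1)R)^{-k}$, only your second choice controls the relative truncation error.
\end{itemize}

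Finally, your fallback is not valid as stated, because a telescoping ratio need not be of order $1/(1+R)$. With equal weights, after imposing $d_i\succ a$ and $b\succ c_i$ for $i\in[m]$, the conditional probability of $a\succ b$ is $1/\binom{2m+2}{m+1}$. The no-purchase constraints change this only by a polynomial factor. So the order in which constraints are added would have to be chosen with care.
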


\begin{proof}

If the directed graph induced by $D$ contains a cycle, then $\pi_\cD=0$, and the algorithm returns 0. Henceforth assume the graph is acyclic.

    The proof consists of three main steps. First, we define an exponential race that equivalently captures the MNL choice probability. More specifically, we introduce $n+1$ independent exponential random variables that equivalently capture the Gumbel random variables in MNL. Second, we consider the $(n+1)$-dimensional polytope induced by the partial ordering defined by $\cD$. The corresponding joint probability density function (PDF) is the product of independent exponential densities. We truncate the domain so that the excluded tail probability is negligible, and the truncated domain is well bounded. Third, since the polytope after truncation is well-structured and the joint PDF is log-concave, this allows us to adopt the randomized convex-body integration framework \cite{lovasz2006fast} to approximate the probability mass of the polytope. Combining these steps implies a randomized approximation scheme for $\pi_{\cD}$.

    Without loss of generality, we scale $v$ so that $v_j \in [1, R]$ for all $j \in [n]_+$.

\paragraph{\bf Exponential Race.} Let $T_j$ be independent random variables following the exponential distribution $\Exp(v_j)$. Here, $T_j$ can be regarded as the arrival time of element $j$.

Recall that under the exponential distribution, the support is in $x \in [0,\infty)$. For element $j$, the PDF is $f_j(x) = v_j \exp(-v_j x)$, and the cumulative distribution function (CDF) is $F_j(x) = 1- \exp(-v_j x)$. Under MNL, the utility of product $j$ is $u_j = \mu_j + \ep_j = \ln v_j + \ep_j$, where $\ep_j$ are independent random variables following the Gumbel distribution $\Gum$ with CDF $G_j(x) = \exp(-\exp(-x))$ for $x \in \R$. We show that the exponential arrival ordering is distributionally equivalent to the utility ordering under MNL.

\begin{claim} \label{cl:ex-gum}
    Let $\sigma:[n+1] \to [n]_+$ be a permutation, where $\sigma(r)$ is the $r$-th ranked element. Then 
    \[\Pr[u_{\sigma(1)} > u_{\sigma(2)} > ... > u_{\sigma(n+1)}] = \Pr[T_{\sigma(1)} < T_{\sigma(2)} < ... < T_{\sigma(n+1)}].\]
\end{claim}

\begin{proof}
    For each $j \in [n]_+$, let $\ep'_j := -\ln T_j - \mu_j$. We first show that $\ep'_j$ follows \Gum. For $x \in \R$,
    \begin{align*}
        \Pr[\ep'_j \le x] &= \Pr[-\ln T_j - \mu_j \le x] = \Pr[T_j \ge \exp(-x-\mu_j)] \\
        &= 1 - (1 - \exp(-v_j \exp(-x-\mu_j))) \\
        &= \exp\left(-\exp(\mu_j)\exp(-x-\mu_j) \right) = \exp(-\exp(-x)).
    \end{align*}
    This implies that $\ep_j$ and $\ep'_j$ are distributionally equivalent because both $\ep_j$ and $\ep'_j$ follow \Gum. Since the
\(T_j\)'s are independent, the \(\ep'_j\)'s are also independent, implying that $\ep_j$ and $\ep'_j$ are joint distributionally equivalent. Let $u'_j:= \mu_j + \ep'_j$. By construction, $u'_j:= \mu_j - \ln T_j - \mu_j = - \ln T_j$. Hence, for any $j,j'\in[n]_+$,
\[
u'_j > u'_{j'}
\iff
-\ln T_j>-\ln T_{j'}
\iff
T_j < T_{j'},
\]
because $-\ln y$ is strictly decreasing in $y \in (0,\infty)$.

Since $u_j$ and $u'_j$ are (joint) distributionally equivalent, for any ordering $\sigma$,
\[
\Pr\left[u_{\sigma(1)}>u_{\sigma(2)}>\cdots>u_{\sigma(n+1)}\right]
=
\Pr\left[u'_{\sigma(1)}>u'_{\sigma(2)}>\cdots>u'_{\sigma(n+1)}\right].
\]
Using $u'_j=-\log T_j$, the event on the right is exactly $T_{\sigma(1)}<T_{\sigma(2)}<\cdots<T_{\sigma(n+1)}$. Therefore,
\[
\Pr\left[u_{\sigma(1)}>u_{\sigma(2)}>...>u_{\sigma(n+1)}\right]
=
\Pr\left[T_{\sigma(1)}<T_{\sigma(2)}<...<T_{\sigma(n+1)}\right].
\]
\end{proof}

\paragraph{\bf Partial-ordering-based Polytope.}

From Claim \ref{cl:ex-gum}, we construct a polytope derived from the partial ordering based on $\cD$. Let $D=([n]_+, A)$ be a DAG induced by $\cD$. That is, if $(j,j') \in A$, then $u_j > u_{j'}$, or distributionally equivalently, $T_j < T_{j'}$. We have that $\pi_{\cD} = \Pr[T_j < T_{j'} \ \forall (j,j') \in A]$. The corresponding polytope is $K:=\{t \in \R_{\ge 0}^{n+1}: t_j \le t_{j'} \ \forall (j,j') \in A\}$. Let the joint PDF 
\begin{equation} \label{eq:ex-pdf}
    f(t):=\prod_{j=0}^n v_j \exp(-v_j t_j),
\end{equation}
then $\pi_{\cD} = \int_K f(t) dt$. Note that $f$ is log-concave in $t$ because $\ln f(t) = \sum_{j=0}^n \ln v_j - \sum_{j=0}^n v_j t_j$ is affine in $t$.

Because $D$ is a DAG, there exists at least one complete linear extension $\sigma \sim_c \cD$. Therefore
\[\pi_\cD \ge \pi_\sigma = \Pr\left[T_{\sigma(1)}<T_{\sigma(2)}<...<T_{\sigma(n+1)}\right] = \prod_{\ell=1}^{n+1}\frac{v_{\sigma(\ell)}}{\sum_{s=\ell}^{n+1}v_{\sigma(s)}} \ge \frac{1}{R^{n+1}(n+1)!}.
\]
Since $K$ is unbounded, we truncate $K$ to preserve sufficient probability mass for the analysis. Let 
\begin{equation} \label{eq:lbk}
    L:=\frac{1}{R^{n+1}(n+1)!}, B:=\ln \frac{4(n+1)}{\ep L}, \text{and } K_B:= K \cap [0,B]^{n+1}.
\end{equation}

Let $\tilde{\pi}_\cD:= \int_{K_B} f(t)dt$ be an estimated probability for $\pi_\cD$ after truncation. Since $v_j \ge 1$, we have that $\Pr[T_j > B] = \exp(-v_j B) \le \exp(-B)$ for $j \in [n]_+$. By the union bound, this implies
\[\Pr[\max_{j \in [n]_+} T_j > B] \le (n+1) \exp(-B) = \frac{\ep L}{4} \le \frac{\ep \pi_\cD}{4}\]
and $\pi_\cD - \tilde{\pi}_\cD \in [0,\ep \pi_\cD/4]$, or equivalently, 
\begin{equation} \label{eq:tilde-D}
    \tilde{\pi}_\cD \in [(1-\frac{\ep}{4})\pi_\cD, \pi_\cD].
\end{equation}
In the remainder, we approximate $\pi_\cD$ by approximating $\tilde{\pi}_\cD$, thereby completing the proof.

\paragraph{\bf Randomized Convex-body Integration.}

To approximate $\tilde{\pi}_\cD$, we use the theorem which follows from \cite{lovasz2006fast}.

\begin{theorem}[Adapted from \cite{lovasz2006fast}]
\label{thm:lv-logconcave}
Let \(K\subseteq \mathbb R^d\) be a full-dimensional compact convex body given by a
polynomial-time separation oracle. Suppose that we are given a point \(x\in K\) and
radii \(0<r_{\mathrm{in}}\le r_{\mathrm{out}}\) such that the Euclidean balls satisfy
$B_2(x,r_{\mathrm{in}})\subseteq K\subseteq B_2(x,r_{\mathrm{out}}).$
Let $\Gamma:=r_{\mathrm{out}}/r_{\mathrm{in}}$ and \(f:K\to \mathbb R_{> 0}\) be a log-concave function whose logarithm can be evaluated
to polynomial precision in polynomial time. Suppose that the logarithmic range of
\(f\) over \(K\) is bounded by \(\Delta\), i.e.,
\begin{equation} \label{eq:log-range}
    \ln\frac{\sup_{t\in K} f(t)}{\inf_{t\in K} f(t)}
\le \Delta .
\end{equation}
Then, for any \(\eta,\delta\in(0,1)\), there is a randomized algorithm that outputs
\(\hat{I}\) such that
\[
\Pr\left[
(1-\eta)\int_K f(t)\,dt
\le
\hat{I}
\le
(1+\eta)\int_K f(t)\,dt
\right]\ge 1-\delta .
\]
The running time is polynomial in $d, 1/\eta, \log(1/\delta), \Gamma$, $\Delta$, and the encoding length of the separation and evaluation oracles.
\end{theorem}

This theorem is a standard consequence of the randomized integration
framework for log-concave functions \cite{lovasz2006fast}. The aspect-ratio dependence is handled by the
standard rounding preprocessing: starting from a convex body with polynomially bounded
\(\Gamma\), one first applies an affine transformation that brings the support into
well-rounded position, and then applies the log-concave integration algorithm.

To use Theorem \ref{thm:lv-logconcave}, we show that the following requirements hold. First, there is an efficient separation oracle, i.e., it takes polynomial time to verify if any point $t \in \R^{n+1}$ belongs to $K_B$. Second, there exist $x \in K_B$, $r_\mathrm{in}$, and $r_\mathrm{out}$, such that $B_2(x,r_{\mathrm{in}})\subseteq K_B \subseteq B_2(x,r_{\mathrm{out}})$ and $\Gamma = 4(n+2)\sqrt{n+1}$. Finally, \eqref{eq:log-range} holds with $K=K_B$, $f$ defined in \eqref{eq:ex-pdf} is log-concave, and $\Delta = (n+1)RB \in \poly(n,R,1/\ep)$.

The convex body $K_B \subseteq \R^{n+1}$ is described by the linear inequalities $t_j \in [0,B]$ for all $j \in [n]_+$ and $t_j \le t_{j'}$ for all $(j,j') \in A$. This admits a polynomial-time separation oracle.

Let $\sigma \sim_c \cD$ and $x \in \R^{n+1}$ be such that $x_{\sigma(s)} = sB/(n+2)$ for $s \in [n+1]$. For every $(j,j') \in A$, we have $x_{j'} - x_j \ge B/(n+2)$. Let 
\[r_{\mathrm{in}}:=\frac{B}{4(n+2)} \text{ and } r_{\mathrm{out}}:=B\sqrt{n+1}.\]
If $\|y-x\|_2 \le r_{\mathrm{in}}$, then $y_j \in [0,B]$ for all $j \in [n]_+$ and \[y_{j'} - y_j \ge x_{j'} - x_j - 2 r_{\mathrm{in}} \ge \frac{B}{2(n+2)} > 0.\]
This implies $B_2(x,r_{\mathrm{in}})\subseteq K_B$. For any $y \in K_B \subseteq [0,B]^{n+1}$, $\|y-x\|_2 \le \sqrt{\sum_{j=1}^{n+1} B^2} = B \sqrt{n+1}$, implying that $K_B \subseteq B_2(x,r_{\mathrm{out}})$. Consequently, $\Gamma = r_{\mathrm{out}}/r_{\mathrm{in}} = 4(n+2)\sqrt{n+1}$.

Since $K_B \subseteq [0,B]^{n+1}$ and $v_j \in [1,R]$ for all $j \in [n]_+$, using \eqref{eq:ex-pdf} implies that 
\begin{equation*}
    \ln\frac{\sup_{t\in K_B} f(t)}{\inf_{t\in K_B} f(t)} \le \ln \frac{\prod_{j=0}^n v_j}{\prod_{j=0}^n v_j \exp(-v_j B)} = B\sum_{j=0}^nv_j \le (n+1)BR.
\end{equation*}
Moreover, from \eqref{eq:lbk}, we have that $B=O(n \log R + n \log n + \log (1/\ep))$.

By setting $\eta = \ep/4$, Theorem \ref{thm:lv-logconcave} implies a randomized approximation scheme that outputs an estimate $\hat{\pi}_\cD$ for $\tilde{\pi}_\cD$ such that
\begin{equation} \label{eq:apx-tilde}
    \Pr\left[
(1-\frac{\ep}{4})\tilde{\pi}_\cD
\le
\hat{\pi}_\cD
\le
(1+\frac{\ep}{4})\tilde{\pi}_\cD
\right]\ge 1-\delta
\end{equation}
with running time polynomial in $n$, $R$, $1/\ep$, and $\log(1/\delta)$. Combining \eqref{eq:tilde-D} and \eqref{eq:apx-tilde}, on the success event of the integration algorithm, we have that
\[\hat{\pi}_\cD \ge (1-\frac{\ep}{4})\tilde{\pi}_\cD \ge (1-\frac{\ep}{4})^2 \pi_\cD \ge (1-\ep) \pi_\cD
\]
where the last inequality follows since $\ep \in (0,1)$. Similarly,
\[\hat{\pi}_\cD \le (1+\frac{\ep}{4})\tilde{\pi}_\cD \le (1+\ep)\tilde{\pi}_\cD \le (1+\ep)\pi_\cD.
\]
Therefore, $\Pr\left[\hat{\pi}_{\cD} \in [(1-\ep)\pi_{\cD}, (1+\ep)\pi_{\cD}]\right] \ge 1 - \delta$, which proves the existence of the randomized approximation scheme.
\end{proof}

\begin{restatable}{theorem}{thmfprascon} \label{thm:fpras-con} Under MNL, for any $\ep, \delta \in (0,1)$ and $\cD$ where the induced directed graph is acyclic, there exists a randomized algorithm that returns an estimate $\hat{\pi}_{\cD}(j,S)$ for
$\pi_\cD(j,S)$ in \eqref{eq:cond-prob}, satisfying 
\[\Pr\left[\hat{\pi}_{\cD}(j,S) \in [(1-\ep)\pi_{\cD}(j,S), (1+\ep)\pi_{\cD}(j,S)]\right] \ge 1 - \delta,\] with running time polynomial in $n,R,1/\ep$, $\log(1/\delta)$, and the encoding length of $v$. If $R \in \poly(n)$, then $\pi_\cD(j,S)$ admits an FPRAS.
\end{restatable}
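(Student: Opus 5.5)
The approach is to write the conditional probability as a ratio, $\pi_\cD(j,S)=\pi_{\cD'}/\pi_\cD$ with $\cD'=\cD\cup\{(j,S)\}$ as in \eqref{eq:cond-prob}, and to estimate numerator and denominator separately with Theorem \ref{thm:fpras}. First we handle the degenerate cases. If the directed graph induced by $\cD'$ contains a cycle, then $\pi_{\cD'}=0$. This can be detected in polynomial time, and the algorithm outputs $0$, which is exact. Otherwise both $\cD$ and $\cD'$ are acyclic. The lower bound in the proof of Theorem \ref{thm:fpras} then gives $\pi_{\cD'},\pi_\cD\ge 1/(R^{n+1}(n+1)!)>0$, so the ratio is well defined and positive.

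In the main case, we run the algorithm of Theorem \ref{thm:fpras} on $\cD'$ and on $\cD$. Each run uses accuracy $\ep'=\ep/3$ and failure probability $\delta/2$, producing estimates $\hat\pi_{\cD'}$ and $\hat\pi_\cD$, and we output $\hat\pi_\cD(j,S):=\hat\pi_{\cD'}/\hat\pi_\cD$. The construction of $\cD'$ adds one transaction, which amounts to at most $n$ arcs in the DAG. Its encoding is polynomial, and the parameters $n$, $R$, and $v$ are unchanged. So both calls run in time polynomial in $n,R,1/\ep,\log(1/\delta)$ and the encoding length of $v$.

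By a union bound, with probability at least $1-\delta$ both estimates are within their relative error bounds. On that event,
\[
\frac{1-\ep/3}{1+\ep/3}\,\pi_\cD(j,S)\le \hat\pi_\cD(j,S)\le \frac{1+\ep/3}{1-\ep/3}\,\pi_\cD(j,S).
\]
For $\ep\in(0,1)$ we have $(1-\ep/3)/(1+\ep/3)\ge 1-2\ep/3\ge 1-\ep$. We also have $(1+\ep/3)/(1-\ep/3)=1+\tfrac{2\ep/3}{1-\ep/3}\le 1+\ep$, since $1-\ep/3\ge 2/3$. This gives the stated guarantee. When $R\in\poly(n)$ the running time is polynomial in $n,1/\ep,\log(1/\delta)$, which yields an FPRAS.

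The main point to check is that Theorem \ref{thm:fpras} applies verbatim to $\cD'$. The added transaction $(j,S)$ may have $j=0$, or $j$ may coincide with some $j_\ell$. Either way, the polytope construction only needs the arc set $A'$ induced by $\cD'$, and the truncation bound uses only the existence of one complete linear extension, which acyclicity guarantees. There is one further subtlety. The hypothesis states that $\cD$ is acyclic but says nothing about $\cD'$. We handle it by the explicit zero case above, rather than by relying on Theorem \ref{thm:fpras} returning $0$, since a relative-error guarantee for a zero quantity requires the exact value.
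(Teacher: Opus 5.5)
Your proposal is correct and follows the paper's proof essentially verbatim: write $\pi_\cD(j,S)=\pi_{\cD'}/\pi_\cD$, return $0$ when $\cD'$ induces a cycle, and otherwise call Theorem~\ref{thm:fpras} on $\cD'$ and on $\cD$ and combine the two estimates with a union bound. The paper uses accuracy $\ep/4$ and failure probability $\delta/4$ per call, where you use $\ep/3$ and $\delta/2$; both choices work. The one case the paper handles that you skip is the trivial case $j\notin S_+$. There $\pi_\cD(j,S)=0$ and you must return $0$ directly, because the arc construction for $(j,S)$ would otherwise make $\pi_{\cD'}$ positive.
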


\begin{proof}

If $j \notin S_+$, the algorithm returns \(0\). Henceforth, we consider the case \(j\in S_+\).

Let $\eta = \ep/4$, $\xi = \delta/4$, and $\pi_\cD(j,S) = \pi_{\cD'} / \pi_\cD$ with $\cD' = \cD \cup \{(j,S)\}$. If the directed graph induced by $\cD'$ has a cycle, return 0. Otherwise, by Theorem \ref{thm:fpras}, there exists a randomized algorithm that returns estimates $\hat{\pi}_{\cD}$ for $\pi_\cD$ and $\hat{\pi}_{\cD'}$ for $\pi_{\cD'}$, satisfying 
\[\Pr\left[\hat{\pi}_{\cD} \in [(1-\eta)\pi_{\cD}, (1+\eta)\pi_{\cD}]\right] \ge 1 - \xi \text{ and } \Pr\left[\hat{\pi}_{\cD'} \in [(1-\eta)\pi_{\cD'}, (1+\eta)\pi_{\cD'}]\right] \ge 1 - \xi,\] with running time polynomial in $n,R,1/\eta$, and $\log(1/\delta)$.

The probability that either $\hat{\pi}_{\cD} \in [(1-\eta)\pi_{\cD}, (1+\eta)\pi_{\cD}]$ or $\hat{\pi}_{\cD'} \in [(1-\eta)\pi_{\cD'}, (1+\eta)\pi_{\cD'}]$ fails is at most $2 \xi = \delta/2 \le \delta$, which implies that 
\[\Pr\left[\hat{\pi}_{\cD} \in [(1-\eta)\pi_{\cD}, (1+\eta)\pi_{\cD}] \text{ and } \hat{\pi}_{\cD'} \in [(1-\eta)\pi_{\cD'}, (1+\eta)\pi_{\cD'}]\right] \ge 1 - \delta.\]
Let $\hat{\pi}_{\cD}(j,S) = \hat{\pi}_{\cD'} / \hat{\pi}_{\cD}$. When both events are successful, we have that
\[\hat{\pi}_{\cD}(j,S) = \frac{\hat{\pi}_{\cD'}}{\hat{\pi}_{\cD}} \ge \frac{(1-\eta)\pi_{\cD'}}{(1+\eta)\pi_{\cD}} = \frac{(1-\ep/4)\pi_{\cD'}}{(1+\ep/4)\pi_{\cD}} \ge \frac{(1-\ep)\pi_{\cD'}}{\pi_{\cD}} = (1-\ep)\pi_{\cD}(j,S)\]
where the last inequality holds because $(1+\ep/4)(1-\ep) = 1 - 3\ep/4 - \ep^2/4 \ge 1 - \ep$ when $\ep \in (0,1)$. Similarly,
\[\hat{\pi}_{\cD}(j,S) = \frac{\hat{\pi}_{\cD'}}{\hat{\pi}_{\cD}} \le \frac{(1+\eta)\pi_{\cD'}}{(1-\eta)\pi_{\cD}} = \frac{(1+\ep/4)\pi_{\cD'}}{(1-\ep/4)\pi_{\cD}} \le \frac{(1+\ep)\pi_{\cD'}}{\pi_{\cD}} = (1+\ep)\pi_{\cD}(j,S)\]
where the last inequality holds because $(1-\ep/4)(1+\ep) = 1 + 3\ep/4 - \ep^2/4 \le 1 + \ep$ when $\ep \in (0,1)$.
\end{proof}

\subsection{Hardness and Computability of Conditional Assortment Optimization} \label{subsec:ao}

MC and MNL are widely used in assortment planning due to their computational tractability for \eqref{opt:tao}. However, conditioning on historical transactions significantly alters the problem structure, thus rendering them less tractable even under MNL. We show that \eqref{opt:cao} is NP-hard under MNL conditional on one transaction. This proof is adapted from \cite{chen2023assortment}, which shows the hardness of a multi-stage conditional assortment problem.\footnote{We note that it is possible to design an FPTAS for \eqref{opt:cao} under MNL based on \cite{chen2023assortment} when $k$ is constant.} On the positive side, we show that \eqref{opt:tcao} under MC is computationally tractable when $k$ is small, using an LP-based approach similar to LP \eqref{lp:mc-tao}.

\begin{restatable}{theorem}{thmcaomnl} \label{thm:caomnl}
    \eqref{opt:cao} is NP-hard under MNL, even when $k=1$.
\end{restatable}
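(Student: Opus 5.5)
Since MNL conditioned on a single transaction can be written in closed form, I plan to reduce from \textsc{Subset Sum}: given positive integers $w_1,\dots,w_m$ and a target $T$, decide whether some subset sums exactly to $T$. The conditioning transaction will be $\cD=\{(a,\{a\})\}$ for an auxiliary product $a$ with $r_a=0$. Conditioning on it means only that $u_a>u_0$. The real products $1,\dots,m$ get attraction values $v_j=w_j$ and all the same revenue $r_j=1$. Without loss of generality the optimizer never offers $a$: its revenue is zero, and excluding it can only help. Writing $X:=V(S)$ for $S\subseteq[m]$, the conditional expected revenue will then depend on $S$ only through $X$.

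\textbf{Step 1 (closed form).} I will compute $\Pr[\cE(j,S)\cap\{u_a>u_0\}]$ using the exponential race of Claim~\ref{cl:ex-gum}. Write
\[
\Pr[\cE(j,S)\cap\{u_a>u_0\}]=\Pr[\cE(j,S)]-\Pr[\cE(j,S)\cap\{u_0>u_a\}].
\]
The last event is $T_j<T_i$ for all $i\in S\setminus\{j\}$ together with $T_j<T_0<T_a$. A direct double integral should give
\[
\Pr[\cE(j,S)\cap\{u_0>u_a\}]=\frac{v_j\,v_0}{(v_0+v_a)(X+v_0+v_a)} .
\]
Dividing by $\pi_\cD=v_a/(v_0+v_a)$ and summing over $j\in S$ with $r_j=1$ yields
\[
R_\cD(S)=\frac{v_0+v_a}{v_a}\left(\frac{X}{X+v_0}-\frac{v_0}{v_0+v_a}\cdot\frac{X}{X+v_0+v_a}\right)=:f(X).
\]

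\textbf{Step 2 (shape of $f$).} I will show that $f$ is strictly unimodal on $[0,\infty)$ for suitable $v_0,v_a$. Heuristically, conditioning on $u_a>u_0$ makes no-purchase less attractive, so adding products first raises revenue; the subtracted term then creates an interior maximum. To see this concretely, set $f'(X)=0$. This reduces to
\[
v_0(X+v_0+v_a)^2=\frac{v_0^2}{v_0+v_a}(X+v_0)^2\cdot\frac{v_0+v_a}{v_0},
\]
which after simplification has a unique positive root $X^*(v_0,v_a)$. I will then choose $v_0,v_a$, rational and of polynomial bit length, so that $X^*=T$. Because the dependence is only on ratios, one natural first attempt is to fix $v_0=1$ and solve for $v_a$. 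If that does not hit $T$ exactly, I will rescale all the $w_j$ by a common factor.

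\textbf{Step 3 (the gap).} An optimal $S$ has $R_\cD(S)\ge f(T)$ if and only if some subset sums to $T$. Since every $X$ is an integer, whenever no subset sums to $T$ the best attainable value is at most $\max\{f(T-1),f(T+1)\}<f(T)$. Thus deciding (CAO) at threshold $f(T)$ decides \textsc{Subset Sum}.

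I expect Step 2 to be the main obstacle. The difficulty is arranging $X^*=T$ exactly with polynomially encoded rational parameters, while also keeping the gap $f(T)-f(T\pm1)$ representable in polynomial precision. If an exact match proves awkward, my fallback is to follow \cite{chen2023assortment} and use a \textsc{Partition}-style instance. There one requires $X=\tfrac12\sum_j w_j$ and adds a second revenue class, so that the objective becomes a separable concave-in-$X$ expression maximized exactly at the balanced split. The NP-hardness then follows for $k=1$, and it transfers to MC because MNL is a special case.
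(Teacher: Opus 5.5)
There is a genuine gap. Your Step~1 computation is correct (it agrees with \eqref{eq:pi-cD-notin}), and so is the observation that a zero-revenue conditioning product $a$ should never be offered. The proof breaks at Step~2.

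\textbf{Step~2 fails.} For every choice of $v_0,v_a>0$, your $f$ is strictly increasing on $[0,\infty)$. So there is no interior maximizer, and your instance is solved trivially by offering all of $[m]$. Indeed,
\[
f'(X)=\frac{v_0(v_0+v_a)}{v_a}\left(\frac{1}{(X+v_0)^2}-\frac{1}{(X+v_0+v_a)^2}\right)>0 .
\]
Your own stationarity equation says the same thing: it simplifies to $(X+v_0+v_a)^2=(X+v_0)^2$, which has no nonnegative root when $v_a>0$. No tuning of $v_0,v_a$ and no rescaling of the $w_j$ can fix this. When every offered product has the same revenue, the conditional expected revenue is just the conditional purchase probability. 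The no-purchase event $\{u_0>u_i\ \forall i\in S\}$ only shrinks as $S$ grows, and conditioning on $u_a>u_0$ does not change that. Hence revenue is monotone in $S$ under any random utility model. Hardness must come from heterogeneous revenues, where adding low-revenue products cannibalizes a high-revenue one. Your intuition that ``the subtracted term creates an interior maximum'' is therefore mistaken.

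\textbf{The fallback.} It points in the right direction and is essentially the paper's route, but as stated it leaves out everything that makes the reduction work. The paper reduces the partition problem using $\cD=\{(m+2,\{m+2\})\}$ and the following parameters: $v_0=v_{m+1}=v_{m+2}=1$; $v_i=c_i/T$ and $r_i=69/37$ for $i\in[m]$; $r_{m+1}=3$ and $r_{m+2}=1$. It then establishes three facts.
\begin{enumerate}
\item Offering the conditioning product $m+2$ is suboptimal. With it, the problem becomes a standard MNL instance with no-purchase weight $v_0+v_{m+2}$, whose best revenue is $323/185<2$.
\item The high-revenue product $m+1$ belongs to an optimal set. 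Without it the revenue is below $69/37<2$, while $\{m+1\}$ alone yields $2$.
\item For $S=S'\cup\{m+1\}$ with $X=V(S')$, the revenue equals
\[
\frac{(4+X)\left(3+\frac{69}{37}X\right)}{(2+X)(3+X)} .
\]
The constant $69/37$ makes this unimodal, with maximum $75/37$ attained exactly at $X=1$, i.e., exactly at a balanced split.
\end{enumerate}
The paper imports the second and third facts from the lemma of \cite{chen2023assortment}. To turn your fallback into a proof, you must supply such an instance and verify these facts. Note also that the resulting objective is a unimodal ratio in $X$, not a separable concave function.
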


\begin{proof}
We follow the ideas in \cite{chen2023assortment}. The proof consists of two main parts. First, we provide closed-form expressions for the conditional probability $\pi_\cD(j,S)$ when $\cD=\{j_1, S_1\} = \{(i, \{i\})\}$ for an $i \in [n]$. Second, we reduce an NP-complete problem to \eqref{opt:cao}, where the analysis involves the closed-form expressions.

Suppose $\cD=\{(j_1, S_1)\} = \{(i, \{i\})\}$ for an $i \in [n]$. From \eqref{eq:cond-prob} and \eqref{eq:prob-rle-mnl},\footnote{Or alternatively, \cite{chen2023assortment}.} we have that if $i \in S$, then
\begin{equation} \label{eq:pi-cD-in}
    \pi_\cD(j,S) = 
    \begin{cases}
        \mbox{\large $\frac{v_0+v_i}{V(S_+)}$} & \text{if } j=i, \\
        0 & \text{if } j=0, \\
        \mbox{\large $\frac{v_j}{V(S_+)}$} & \text{if } j \in S \setminus \{i\};\\
    \end{cases}
\end{equation}
if $i \notin S$, then
\begin{equation} \label{eq:pi-cD-notin}
    \pi_\cD(j,S) = 
    \begin{cases}
        \mbox{\large $\frac{v_0}{V(S_+)}\frac{v_0+v_i}{v_i+V(S_+)}$} & \text{if } j=0, \\
        \mbox{\large $\frac{v_j}{v_i+V(S_+)} + \frac{v_j (v_0+v_i)}{V(S_+)(v_i+V(S_+))}$} & \text{if } j \in S.\\
    \end{cases}
\end{equation}

Now we reduce the partition problem, which is NP-complete, to \eqref{opt:cao}. In the partition problem, we have $c_i > 0$ for $i \in [m]$ with $\sum_{i \in [m]} c_i = 2T$ for an integer $m > 0$. The goal is to find a subset of numbers $P \subseteq [m]$ such that $\sum_{i \in P}c_i = \sum_{i \in [m] \setminus P}c_i = T$.

Given an instance of the partition problem, we construct an instance of \eqref{opt:cao} under MNL with $k=1$ as follows. We have $n=m+2$ products with the following revenues and attraction values:
\begin{equation} \label{rv}
    r_i = 
    \begin{cases}
        \frac{69}{37} & \text{if } i \in [m], \\
        3 & \text{if } i=m+1, \\
        1 & \text{if } i=m+2, \\
    \end{cases}
    \text{ and }
    v_i = 
    \begin{cases}
        \frac{c_i}{T} & \text{if } i \in [m], \\
        1 & \text{if } i=0, m+1, m+2.\\
    \end{cases}
\end{equation}
Let $\cD = \{(j_1,S_1)\} = \{(m+2, \{m+2\})\}$. Let $S^* \subseteq [m+2]$ be an optimal offer set for \eqref{opt:cao}. We show that $m+2 \notin S^*$ and $m+1 \in S^*$.

Suppose $m+2 \in S^*$, then from \eqref{eq:pi-cD-in}, \eqref{opt:cao} is equivalent to \eqref{opt:tao} with input parameters $r'$ and $v'$ described as follows. Since $m+2$ is already selected from $\{0, m+2\}$. We know that offering $m+2$ ensures a revenue of $r_{m+2}$. Therefore, we set the revenues $r'_i = r_i - r_{m+2}$ as the shifted revenues and $v'_i = v_i$ for $i \in [m+1]$, and set $m+2$ as the no-purchase option with an attraction value of $v_0 + v_{m+2}$. Then we consider revenue-ordered assortments and have that $S = [n] = [m+2]$ is optimal for \eqref{opt:cao}, providing an expected revenue of
\begin{equation} \label{rev_n2}
    \frac{3\cdot v_{m+1} + 69/37\cdot 2 + 1 \cdot (v_0 + v_{m+2})}{v_0+\sum_{i \in [m+1]} v_i+v_{m+2}} = \frac{3 + 118/37 + 2}{5} = \frac{323}{185} < 2.
\end{equation}

Suppose $m+2 \notin S^*$, then from \eqref{eq:pi-cD-notin}, for any $S \subseteq [m+1]$, the revenue obtained by showing $S$ is
\begin{align*}
    &\quad\sum_{j \in S} \left(\frac{1}{v_i+V(S_+)} + \frac{v_0+v_i}{V(S_+)(v_i+V(S_+))}\right)r_j v_j = \sum_{j \in S} \left(\frac{1}{2+V(S)} + \frac{2}{(1+V(S))(2+V(S))}\right)r_j v_j \\
    &= \sum_{j \in S} \left(\frac{2 + (1+V(S))}{(1+V(S))(2+V(S))}\right)r_j v_j = \sum_{j \in S} \left(\frac{2(2+V(S)) - (1+V(S))}{(1+V(S))(2+V(S))}\right)r_j v_j \\
    &= \sum_{j \in S}2\left(\frac{1}{1+V(S)} - \frac{1}{2(2+V(S))}\right)r_j v_j.
\end{align*}

Therefore, $S^*$ is a solution of
\begin{equation} \label{opt:thm-cao}
    \max_{S \subseteq [m+1]}\sum_{j \in S}2\left(\frac{1}{1+V(S)} - \frac{1}{2(2+V(S))}\right)r_j v_j.
\end{equation}

The following lemma is from \cite{chen2023assortment}.

\begin{lemma}[\cite{chen2023assortment}]
Given $r$ and $v$ from \eqref{rv}, the optimal solution $S^*$ for \eqref{opt:thm-cao} is such that $S^* = S' \cup \{m+1\}$ for some $S' \subseteq [m]$. In addition, the expected revenue of $S^*$ is $75/37 > 2$ if and only if there exists $S' \subseteq [m]$ such that $V(S') = 1$.
\end{lemma}

Combining the above lemma and \eqref{rev_n2} implies that $m+2 \notin S^*$ and $m+1 \in S^*$. The partition problem has a solution if and only if there exists an assortment with expected revenue $75/37$. Suppose that there is a solution to the partition problem, then there exists $S' \subseteq [m]$ such that $\sum_{i \in S'} \frac{c_i}{T} = 1$, so the expected revenue is $75/37$. On the other hand, if there is no solution to the partition problem, then for any $S' \subseteq [m]$, the expected revenue using the assortment $S' \cup \{m+1\}$ is strictly less than $75/37$.
\end{proof}

\begin{theorem} \label{thm:tcaomc}
    There exists an algorithm for \eqref{opt:tcao} under MC that takes time polynomial in $k!$, $n$, and the encoding length of $\lambda,\rho,r$, and $\cD$. If $k! \in \poly(n)$, then \eqref{opt:tcao} under MC is polynomial-time solvable.
\end{theorem}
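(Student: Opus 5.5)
The plan is to reduce \eqref{opt:tcao} under MC to a single instance of the LP \eqref{lp:mc-tao} on a modified chain. The coefficients of that instance are obtained by enumerating the at most $k!$ reduced linear extensions $\sigma \sim_r \cD$.

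First, by \eqref{eq:cond-prob}, $\pi_\cD(j,S) = \Pr[\cE_\cD \cap \cE(j,S)]/\pi_\cD$. The denominator $\pi_\cD$ does not depend on $S$, and it is computable in time polynomial in $k!$ and $n$ via \eqref{eq:prob-rle-mc} and \eqref{eq:prob-rle}. So it suffices to maximize $\sum_{j\in S} r_j \Pr[\cE_\cD \cap \cE(j,S)]$ over $S \sim_t \cD$.

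Fix $\sigma \sim_r \cD$ and let $U = \cup_\ell S_\ell$. On the event that the walk has the form of $\sigma$, it stays in $\overline{U}$ until its first visit to $U_+$, which happens at $j_{\sigma(1)}$, and the remaining segments have probability $p_\cD(\sigma)$ as in \eqref{eq:p-sigma}. Transparency gives $j_{\sigma(1)} \in S_+$. Hence the product chosen from $S$ is either some $i \in S \cap \overline{U}$ visited before $j_{\sigma(1)}$, or $j_{\sigma(1)}$ itself. The states in $S\cap U$ other than $j_{\sigma(1)}$ are never reached before $j_{\sigma(1)}$, so they do not affect the choice. In the first case, the Markov property gives that, after the purchase at $i$, the walk must still follow $i \overset{\overline{U}}{\leadsto} j_{\sigma(1)}$ and then the suffix of $\sigma$. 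Therefore
\[
\Pr[\cE_\cD \cap \cE(j,S)] = \sum_{\sigma\sim_r\cD} p_\cD(\sigma)\,\Pr\bigl[\text{prefix walk from }\lambda\text{ within }\overline{U}\text{ first hits }(S\cup U)_+\text{ at }j\bigr]\cdot h_\sigma(j),
\]
where $h_\sigma(j) = \Pr[j \overset{\overline{U}}{\leadsto} j_{\sigma(1)}]$ for $j \in \overline{U}$, and $h_\sigma(j_{\sigma(1)}) = 1$.

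The crucial observation is that the law of the prefix walk does not depend on $\sigma$: it starts from $\lambda$, moves by $\rho$ inside $\overline{U}$, and is absorbed upon reaching $U_+$. The dependence on $\sigma$ is only through the absorbing target $j_{\sigma(1)}$. By linearity I can therefore aggregate the weights:
\[
a_i := \sum_{\sigma} p_\cD(\sigma)\, \Pr[i \overset{\overline{U}}{\leadsto} j_{\sigma(1)}] \quad \text{for } i\in\overline{U},
\qquad
b_u := r_u \sum_{\sigma: j_{\sigma(1)}=u} p_\cD(\sigma) \quad \text{for } u \in U_+ .
\]
Here $b_u = 0$ if $u$ is never a top element of some $\sigma$, or if $u=0$. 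The objective then becomes the expected reward of an MC stopping problem on $\overline{U}$. At each state $i\in\overline{U}$ we either offer $i$ and collect $r_i a_i$, or pass and transition by $\rho$; reaching $u\in U_+$ collects $b_u$. Any $S\subseteq \overline{U}$ is feasible, and adding the purchased products (and any subset of $U$) to it keeps the objective unchanged, so the transparency constraint is harmless.

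This stopping problem is solved exactly as in \eqref{lp:mc-tao}:
\[
\min_g \sum_{i\in\overline{U}} \lambda_i g_i \quad\text{s.t.}\quad g_i \ge r_i a_i,\qquad g_i \ge \sum_{j\in\overline{U}}\rho_{ij} g_j + \sum_{u\in U_+}\rho_{iu} b_u \qquad \forall i\in\overline{U},
\]
plus the constant $\sum_{u\in U_+}\lambda_u b_u$. We then offer $\{i: g^*_i = r_i a_i\} \cup \{j_\ell\}$. Correctness follows the argument of \cite{feldman2017revenue}, using the spectral radius assumption on $\rho$. Each quantity $p_\cD(\sigma)$ and $\Pr[i \overset{\overline{U}}{\leadsto} u]$ is a matrix-inverse expression from \eqref{prob:i-to-j}, so the total time is polynomial in $k!$, $n$, and the encoding lengths.

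The main obstacle is the decomposition step. It must be argued carefully, via the strong Markov property at the purchase time and at the first visit to $j_{\sigma(1)}$, that conditioning on the purchase at $i$ leaves the remaining segments with exactly the probabilities $h_\sigma(i)\,p_\cD(\sigma)$. It must also be shown that the events for different $\sigma$ are disjoint, so that the sum over $\sigma$ commutes with the choice of $S$. The edge case $j_{\sigma(1)}=0$, in which $\cD$ contains only a no-purchase transaction, needs only $b_0 = 0$.
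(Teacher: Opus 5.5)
Your proposal is correct and follows essentially the same route as the paper. You decompose $\Pr[\cE_\cD \cap \cE(j,S)]$ over reduced linear extensions into $\pi(j, S\cup U)$ times an $S$-independent suffix weight; your weights $r_j a_j$ and $b_u$ are exactly $\pi_\cD r^\cD_j$ and $\pi_\cD r^\cD_u$ from the paper's proof. You then solve a Feldman--Topaloglu-type LP, as the paper does. The only differences are cosmetic: you treat the states of $U_+$ as terminal rewards and skip dividing by $\pi_\cD$, whereas the paper keeps all states in the LP and imposes only $g_i \ge r^\cD_i$ for $i \in U$.
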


\begin{proof}
    Given $\cD=\{(j_\ell, S_\ell) \mid \ell \in [k]\}$, we recall the closed-form expression for $\pi_\cD(j,S)$ when $S \sim_t \cD$. Then we adapt LP \eqref{lp:mc-tao} to find the optimal assortment $S^*$ for \eqref{opt:tcao}.

    Recall that $A^\sigma_s := \overline{\cup_{\ell=s}^{k} S_{\sigma(\ell)}}$, and from \eqref{eq:p-sigma} that for $\sigma \sim_r \cD$,
    \begin{equation*}
        p_\cD(\sigma):=\prod_{s = 1}^{k-1}\Pr[j_{\sigma(s)} {\overset{A^\sigma_{s+1}}{\leadsto}} j_{\sigma(s+1)}]
    \end{equation*}
    denotes the probability of having a random walk in the form of $j_{\sigma(1)}\overset{A^\sigma_2}{\leadsto} j_{\sigma(2)} \overset{A^\sigma_3}{\leadsto} ...  \overset{A^\sigma_k}{\leadsto} j_{\sigma(k)}$ after the first visit to $j_\sigma(1)$. Recall that $U := \cup_{\ell \in [k]}S_{\ell}$. From \eqref{eq:prob-rle-mc} and \eqref{eq:prob-rle}, we have that \begin{equation} \label{prob:pi-cD-mc}
    \pi_\cD = \sum_{\ell \in [k]}\pi(j_\ell, U) \left(\sum_{\sigma \sim_r \cD: \sigma(1)=\ell} p_\cD(\sigma)\right)
    \end{equation}
    where the term $\pi(j_\ell, U)$ is the probability of having a prefix random walk until the first visit of $j_\ell$ for some $\ell \in [k]$, and the term $\sum_{\sigma \sim_r \cD: \sigma(1)=j_\ell} p_\cD(\sigma)$ considers all possible suffix random walks.

We consider three cases to compute $\Pr[\cE_\cD \cap \cE(j,S)]$ for any $S \sim_t \cD$. First, if $j \in S \setminus U$, then the customer's first visit to $S \cup U$ enters at $j$. After the first visit to $j$, the customer continues the random walk from $j$ to $j_\ell$, through states in $\overline{U}$. The remaining suffix, starting from the first visit of $j_{\sigma(1)}$, is consistent with some $\sigma$, where $\sigma(1) = \ell$ and $\sigma \sim_r \cD$, with probability $p_\cD(\sigma)$. Therefore,
\begin{equation} \label{eq:prob-in}
    \Pr[\cE_\cD \cap \cE(j,S)] = \pi(j,S \cup U) \left(\sum_{\sigma \sim_r \cD} \Pr[j \overset{\overline{U}}{\leadsto} j_{\sigma(1)}] p_\cD(\sigma)\right).
\end{equation}    
Second, let $J = \{j_\ell \mid \ell \in [k]\}$. If $j \in J$, i.e., $j = j_\ell$ for some $\ell \in [k]$, then the customer's first visit to $S \cup U$ enters at $j = j_\ell$. The remaining suffix, starting from the first visit to $j_\ell$, is consistent with some $\sigma$, where $\sigma(1) = \ell$ and $\sigma \sim_r \cD$, with probability $p_\cD(\sigma)$. Therefore,
\begin{equation} \label{eq:prob-notin}
    \Pr[\cE_\cD \cap \cE(j,S)] = \pi(j,S \cup U) \left(\sum_{\sigma \sim_r \cD: \sigma(1)=\ell} p_\cD(\sigma)\right).
\end{equation}
Finally, if $j \in S \cap (U \setminus J)$, then $\Pr[\cE_\cD \cap \cE(j,S)] = 0$ since there exists a more preferred product in $S \cap J$.

From \eqref{eq:cond-prob}, \eqref{opt:tcao} under MC is
\begin{equation} \label{opt:tcao-mc}
    \max_{S \sim_t \cD} \sum_{j \in S} \frac{\Pr[\cE_\cD \cap \cE(j,S)]}{\pi_\cD} r_j.
\end{equation}
Observe that in \eqref{prob:pi-cD-mc}, \eqref{eq:prob-in}, and \eqref{eq:prob-notin}, the terms $\pi_\cD$, $\sum_{\sigma \sim_r \cD} \Pr[j \overset{\overline{U}}{\leadsto} j_{\sigma(1)}] p_\cD(\sigma)$, and $\sum_{\sigma \sim_r \cD: \sigma(1)=\ell} p_\cD(\sigma)$ are irrelevant to $S$, only depend on $\cD$, and can be computed in time polynomial in $k!$ and $n$. Let the \emph{scaled} revenue of product $j$ be
\[
r^\cD_j := 
\begin{cases}
\mbox{\large $\frac{\sum_{\sigma \sim_r \cD} \Pr[j \overset{\overline{U}}{\leadsto} j_{\sigma(1)}] p_\cD(\sigma)}{\pi_\cD}$} r_j & \text{if } j \in \overline{U},\\
\mbox{\large $\frac{\sum_{\sigma \sim_r \cD: \sigma(1)=\ell} p_\cD(\sigma)}{\pi_\cD}$}r_j & \text{if } j = j_\ell \text{ for some } \ell \in [k], \\
0 & \text{otherwise}.
\end{cases}
\]
Then \eqref{opt:tcao-mc} can be written as
\begin{equation} \label{opt:tcao-mc-2}
    \max_{S \sim_t \cD} \sum_{j \in S} \pi(j,S \cup U) r^\cD_j.
\end{equation}

Observe that \eqref{opt:tcao-mc-2} is in a form similar to \eqref{opt:tao}. The main tweak while adapting LP \eqref{lp:mc-tao} is that we can regard the transparent conditional assortment problem as if all the products in $U$ are included in $S$ without loss of optimality, because the chosen product $j$ is either from $S \setminus U$ or $J$, regardless of which products in $U \setminus J$ are included in $S$. Therefore, we have the following LP formulation for \eqref{opt:tcao-mc-2}:
\begin{equation*}
\begin{aligned}
& \min_{g} & & \sum_{i \in [n]} \lambda_i g_i \\
& \text{subject to}
& & g_i \ge r^\cD_i & \forall i \in [n],\\
& & & g_i \ge \sum_{j \in [n]} \rho_{ij} g_j & \forall i \in \overline{U}.
\end{aligned}
\end{equation*}
Similar to \eqref{lp:mc-tao}, $g_i$ denotes the expected revenue obtained when the customer visits state $i$. If $i \in U$, then we may include $i$ in $S$ without loss of optimality, so $g_i = r^\cD_i$ and $i$ is present in the first set of constraints but not the second. Otherwise, $g_i = \max\{r^\cD_i, \sum_{i \in [n]} \rho_{ij} g_j\}$, i.e., the expected revenue $g_i$ is either $r^\cD_i$ from state $i$ or $\sum_{j \in [n]} \rho_{ij} g_j$ by transitioning to other states $j \in [n]$. Let $g^*$ be the optimal solution, then offering $\{i \mid g^*_i = r^\cD_i\}$ maximizes the expected revenue.
\end{proof}

\end{document}